\theoremstyle{plain}
\newtheorem{theorem}{Theorem}[section]
\newtheorem{lemma}[theorem]{Lemma}
\theoremstyle{definition}
\newtheorem{definition}[theorem]{Definition}
\newtheorem{assumption}[theorem]{Assumption}
\theoremstyle{remark}
\newcommand{\norm}[1]{\left\| #1 \right\|}
\newcommand{\gtclusters}{\textit{ground-truth clusters}}
\newcommand{\gtcluster}{\textit{ground-truth cluster}}
\newcommand{\EE}{\mathbb{E}}
\newcommand{\PP}{\mathbb{P}}
\newcommand{\RR}{\mathbb{R}}
\newcommand{\bM}{\boldsymbol{M}}
\newcommand{\bX}{\boldsymbol{X}}
\newcommand{\bb}{\boldsymbol{b}}
\newcommand{\bx}{\boldsymbol{x}}
\newcommand{\by}{\boldsymbol{y}}
\newcommand{\btheta}{\boldsymbol{\theta}}
\newcommand{\bI}{\boldsymbol{I}}
\newcommand{\cE}{\mathcal{E}}
\newcommand{\bepsilon}{\boldsymbol{\epsilon}}
\newcommand{\argmax}{\mathrm{argmax}}
\title{Online Clustering of Bandits with Misspecified User Models}
\author{%
  Zhiyong Wang \\
  The Chinese University of Hong Kong\\ \texttt{zywang21@cse.cuhk.edu.hk} \\
  \And 
  Jize Xie\\
  Shanghai Jiao Tong University\\
  \texttt{xjzzjl@sjtu.edu.cn}
   \And
   Xutong Liu \\
  The Chinese University of Hong Kong\\
  \texttt{liuxt@cse.cuhk.edu.hk} 
   \And
   Shuai Li\thanks{Corresponding author.}\\
  Shanghai Jiao Tong University\\
  \texttt{shuaili8@sjtu.edu.cn}\\
   \And
   John C.S. Lui \\
   The Chinese University of Hong Kong \\
    \texttt{cslui@cse.cuhk.edu.hk}\\
}
\begin{document}

\maketitle

\begin{abstract}
The contextual linear bandit is an important online learning problem where given arm features, a learning agent selects an arm at each round to maximize the cumulative rewards in the long run. A line of works, called the clustering of bandits (CB), utilize the collaborative effect over user preferences and have shown significant improvements over classic linear bandit algorithms. However, existing CB algorithms require well-specified linear user models and can fail when this critical assumption does not hold. Whether robust CB algorithms can be designed for more practical scenarios with misspecified user models remains an open problem. In this paper, we are the first to present the important problem of clustering of bandits with misspecified user models (CBMUM), where the expected rewards in user models can be perturbed away from perfect linear models. We devise two robust CB algorithms, RCLUMB and RSCLUMB (representing the learned clustering structure with dynamic graph and sets, respectively), that can accommodate the inaccurate user preference estimations and erroneous clustering caused by model misspecifications. We prove regret upper bounds of $O(\epsilon_*T\sqrt{md\log T}  + d\sqrt{mT}\log T)$ for our algorithms under milder assumptions than previous CB works (notably, we move past a restrictive technical assumption on the distribution of the arms), which match the lower bound asymptotically in $T$ up to logarithmic factors, and also match the state-of-the-art results in several degenerate cases. The techniques in proving the regret caused by misclustering users are quite general and may be of independent interest. Experiments on both synthetic and real-world data show our outperformance over previous algorithms. 
\end{abstract}

\section{Introduction}

Stochastic multi-armed bandit (MAB) \cite{auer2002finite,bubeck2012regret,lattimore2020bandit} is an online sequential decision-making problem,  where the learning agent
selects an action and receives a corresponding reward at each round, so as to maximize the cumulative reward in the long run. MAB algorithms have been widely applied in recommendation systems and computer networks to handle the exploration and exploitation trade-off \cite{kohli2013fast,liu2023variance,wang2023efficient,cai2018online}.

To deal with large-scale applications, the contextual linear bandits \cite{li2010contextual,chu2011contextual,abbasi2011improved,liu2023contextual,kong2023best} have been studied, where the expected reward of each arm is assumed to be perfectly linear in their features. Leveraging the contextual side information
about the user and arms, linear bandits can provide more personalized recommendations \cite{hariri2014context}. Classical linear bandit approaches, however, ignore the often useful tool of collaborative filtering. To utilize the relationships among users, the problem of clustering of bandits (CB) has been proposed \cite{gentile2014online}. Specifically, CB algorithms adaptively partition users into clusters and utilize the collaborative effect of users to enhance learning performance.

Although existing CB algorithms have shown great success in improving recommendation qualities, there exist two major limitations. First, all previous works on CB \cite{gentile2014online,li2018online,10.5555/3367243.3367445,wang2023online} assume that for each user, the expected rewards follow a \textit{perfectly linear} model with respect to the user preference vector and arms' feature vectors. In many real-world scenarios, due to feature noises or uncertainty \cite{hainmueller2014kernel}, the reward may not necessarily conform to a perfectly linear function, or even deviates a lot from linearity \cite{ghosh2017misspecified}. Second, previous CB works assume that for users within the same cluster, their preferences are exactly the same. Due to the heterogeneity in users' personalities and interests, similar users may not have identical preferences, invalidating this strong assumption. 

To address these issues, we propose a novel problem of clustering of bandits with misspecified user models (CBMUM). In CBMUM, the expected reward model of each user does not follow a perfectly linear function but with possible additive deviations. We assume users in the same underlying cluster share a common preference vector, meaning they have the same linear part in reward models, but the deviation parts are allowed to be different, better reflecting the varieties of user personalities. 

The relaxation of perfect linearity and the reward homogeneity within the same cluster bring many challenges to the CBMUM problem. In CBMUM, we not only need to handle the uncertainty from the \textit{unknown} user preference vectors, but also have to tackle the additional uncertainty from model misspecifications. Due to such uncertainties, it becomes highly challenging to design a robust algorithm that can cluster the users appropriately and utilize the clustered information judiciously. On the one hand, the algorithm needs to be more tolerant in the face of misspecifications so that more similar users can be clustered together to utilize the collaborative effect. On the other hand, it has to be more selective to rule out the possibility of \textit{misclustering} users with large preference gaps.

\subsection{Our Contributions}

This paper makes the following four contributions. 

\noindent{\textbf{New Model Formulation.}}
We are the first to formulate the clustering of bandits with misspecified user models (CBMUM) problem, which is more practical by removing the perfect linearity assumption in previous CB works.

\noindent{\textbf{Novel Algorithm Designs.}} 
We design two novel algorithms, RCLUMB and RSCLUMB, which robustly learn the clustering structure and utilize this collaborative information for faster user preference elicitation. Specifically, RCLUMB keeps updating a dynamic graph over all users, where users connected directly by edges are supposed to be in the same cluster. RCLUMB adaptively removes edges and recommends items based on historical interactions. RSCLUMB represents the clustering structure with sets, which are dynamicly merged and split during the learning process. Due to the page limit, we only illustrate the RCLUMB algorithm in the main paper. We leave the exposition, illustration, and regret analysis of the RSCLUMB algorithm in Appendix \ref{RSCLUMB section}.

To overcome the challenges brought by model misspecifications, we do the following key steps in the RCLUMB algorithm.
(i) To ensure that with high probability, similar users will not be partitioned apart, we design a more tolerant edge deletion rule by taking model misspecifications into consideration. (ii) Due to inaccurate user preference estimations caused by model misspecifications, trivially following previous CB works \cite{gentile2014online,li2018online,liu2022federated} to directly use connected components in the maintained graph as clusters would \textit{miscluster} users with big preference gaps, causing a large regret. To be discriminative in cluster assignments, we filter users directly linked with the current user in the graph to form the cluster used in this round. With these careful designs of (i) and (ii), we can guarantee that with high probability, information of all similar users can be leveraged, and only users with close enough preferences might be \textit{misclustered}, which will only mildly impair the learning accuracy. Additionally: (iii) we design an enlarged confidence radius to incorporate both the exploration bonus and the additional uncertainty from misspecifications when recommending arms. 
The design of RSCLUMB follows similar ideas, which we leave in the Appendix \ref{RSCLUMB section} due to page limit.

\noindent{\textbf{Theoretical Analysis with Milder Assumptions}.}
We prove regret upper bounds for our algorithms of $O(\epsilon_*T\sqrt{md\log T}  + d\sqrt{mT}\log T)$ in CBMUM under much milder and practical assumptions (in arm generation distribution) than previous CB works, which match the state-of-the-art results in degenerate cases. Our proof is quite different from the typical proof flow of previous CB works (details in Appendix \ref{appendix: theory}). One key challenge is to bound the regret caused by \textit{misclustering} users with close but not the same preference vectors and use the inaccurate cluster-based information to recommend arms. 
To handle the challenge, we prove a key lemma (Lemma \ref{bound for mis}) to bound this part of regret. We defer its details in Section \ref{section: theoretical} and Appendix \ref{key lemma appendix}. The techniques and results for bounding this part are quite general and may be of independent interest.
We also give a regret lower bound of $\Omega(\epsilon_*T\sqrt{d})$ for CBMUM, showing that our upper bounds are asymptotically tight with respect to $T$ up to logarithmic factors. We leave proving a tighter lower bound for CBMUM as an open problem.

\noindent{\textbf{Good Experimental Performance.}}
Extensive experiments on both synthetic and real-world data show the advantages of our proposed algorithms over the existing algorithms.
\section{Related Work}
\vspace{-0.26cm}
\label{section:related work}
Our work is closely related to two lines of research: online clustering of bandits (CB) and misspecified linear bandits (MLB). More discussions on related works can be found in Appendix \ref{appendix: related work}.

The paper \cite{gentile2014online} first formulates the  CB problem and proposes a graph-based algorithm.
The work \cite{li2016collaborative} further considers leveraging the collaborative effects on items to guide the clustering of users. The work \cite{li2018online} considers the CB problem in the cascading bandits setting with random prefix feedback. The paper \cite{10.5555/3367243.3367445} also considers users with different arrival frequencies. A recent work \cite{liu2022federated} proposes the setting of clustering of federated bandits, considering both privacy protection and communication requirements. However, all these works assume that the reward model for each user follows a perfectly linear model, which is unrealistic in many real-world applications. To the best of our knowledge, this paper is the first work to consider user model misspecifications in the CB problem.

The work \cite{ghosh2017misspecified} first proposes the misspecified linear bandits (MLB) problem, shows the vulnerability of linear bandit algorithms under deviations, and designs an algorithm RLB that is only robust to non-sparse deviations. The work \cite{lattimore2020learning} proposes two algorithms to handle general deviations, which are modifications of the phased elimination algorithm \cite{lattimore2020bandit} and LinUCB \cite{abbasi2011improved}. Some recent works \cite{pacchiano2020model,foster2020adapting} use model selection methods to deal with unknown exact maximum model misspecification level. Note that the work \cite{foster2020adapting} has an additional assumption on the access to an online regression oracle, and the paper \cite{pacchiano2020model} still needs to know an upper bound of the unknown exact maximum model deviation level. None of them consider the CB setting with multiple users, thus differing from ours. 

We are the first to initialize the study of the important CBMUM problem, and propose a general framework for dealing with model misspecifications in CB problems. Our study is based on fundamental models on CB \cite{gentile2014online, 10.5555/3367243.3367445} and MLB \cite{lattimore2020learning}, the algorithm design ideas and theoretical analysis are pretty general. We leave incorporating the model selection methods \cite{pacchiano2020model,foster2020adapting} into our framework to address the unknown exact maximum model misspecification level as an interesting future work.


\vspace{-0.36cm}
\section{Problem Setup}
\label{sec3}
\vspace{-0.26cm}
This section formulates the problem of ``clustering of bandits with misspecified user models" (CBMUM). We use boldface \textbf{lowercase} and boldface
\textbf{CAPITALIZED} letters for vectors and matrices. We use $\left|\mathcal{A}\right|$ to denote the number of elements in $\mathcal{A}$, $[m]$ to denote $\{1,\ldots,m\}$, and $\norm{\boldsymbol{x}}_{\boldsymbol{M}}=\sqrt{\boldsymbol{x}^{\top}\boldsymbol{M}\boldsymbol{x}}$ to denote the
matrix norm of vector $\bx$ regarding the positive semi-definite (PSD) matrix $\boldsymbol{M}$. 

In CBMUM, there are $u$ users denoted by $\mathcal{U}=\{1,2,\ldots,u\}$. Each user $i\in \mathcal{U}$ is associated with an \textit{unknown} preference vector $\btheta_i\in\mathbb{R}^d$, with $\norm{\btheta_i}_2\leq 1$. We assume there is an \textit{unknown} underlying clustering structure over users representing the similarity of their behaviors. Specifically, $\mathcal{U}$ can be partitioned into a small number $m$ (i.e., $m\ll u$) clusters, $V_1, V_2,\ldots V_m$, where $\cup_{j \in [m]}V_j=\mathcal{U},$ and $V_j \cap V_{j'}=\emptyset,$ for $j\neq j'$. We call these clusters \gtclusters{} and use $\mathcal{V}=\{V_1, V_2,\ldots, V_m\}$ to denote the set of these clusters. Users in the same \gtcluster{} share the same preference vector, while users from different \gtclusters{} have different preference vectors. Let $\btheta^j$ denote the common preference vector for $V_j$ and $j(i) \in [m]$ denote the index of the \gtcluster{} that user $i$ belongs to. For any $\ell\in\mathcal{U}$, if $\ell\in V_{j(i)}$, then $\btheta_\ell=\btheta_i=\btheta^{j(i)}$.

 At each round $t\in[T]$, a user $i_t\in\mathcal{U}$ comes to be served. The learning agent receives a finite arm set $\mathcal{A}_t\subseteq\mathcal{A}$ to choose from (with $\left|\mathcal{A}_t\right|\leq C, \forall{t}$), where each arm $a\in \mathcal{A}$ is associated with a feature vector $\bx_a\in\RR^d$, and $\norm{\bx_a}_2  \le 1$. The agent assigns an appropriate cluster $\overline{V}_t$ for user $i_t$ and recommends an item $a_t\in \mathcal{A}_t$ based on the aggregated historical information gathered from cluster $\overline{V}_t$. After receiving the recommended item $a_t$, user $i_t$ gives a random reward $r_t\in [0,1]$ to the agent. To better model real-world
scenarios,
 we assume that the reward $r_t$ follows a misspecified linear function of the item feature vector $\bx_{a_t}$ and the $\textit{unknown}$ user preference vector $\btheta_{i_t}$. Formally, 
 \begin{small}
     \begin{equation}
    r_t=\bx_{a_t}^\top\btheta_{i_t}+\bepsilon^{i_t,t}_{a_t}+\eta_t\,,
    \label{reward def}
\end{equation}
 \end{small}
where $\bepsilon^{i_t,t}=[\bepsilon^{i_t,t}_1,\bepsilon^{i_t,t}_2,\ldots,\bepsilon^{i_t,t}_{\left|\mathcal{A}_t\right|}]^\top\in\RR^{\left|\mathcal{A}_t\right|}$ denotes the \textit{unknown} deviation in the expected rewards of arms in $\mathcal{A}_t$ from linearity for user $i_t$ at $t$, and $\eta_t$ is the 1-sub-Gaussian noise.
We allow the deviation vectors for users in the same \gtcluster{} to be different.

We assume the clusters, users, items, and model misspecifications satisfy the following assumptions.

\begin{assumption}[Gap between different clusters]
\label{assumption1}
The gap between any two preference vectors for different \gtclusters{} is at least an \textit{unknown} positive constant $\gamma$
\begin{small}
\begin{equation*}
    \norm{\btheta^{j}-\btheta^{j^{\prime}}}_2\geq \gamma>0\,, \forall{j,j^{\prime}\in [m]\,, j\neq j^{\prime}}\,.
\end{equation*}  
\end{small}
\end{assumption}

\begin{assumption}[Uniform arrival of users]
\label{assumption2}
At each round $t$, a user $i_t$ comes uniformly at random from $\mathcal{U}$ with probability $1/u$, independent of the past rounds.
\end{assumption}

\begin{assumption}[Item regularity]
\label{assumption3}
At each time step $t$, the feature vector $\bx_a$ of each arm $a\in \mathcal{A}_t$ is drawn independently from a fixed but unknown distribution $\rho$ over $\{\bx\in\RR^d:\norm{\bx}_2\leq1\}$, where $\EE_{\bx\sim \rho}[\bx \bx^{\top}]$ is full rank with minimal eigenvalue $\lambda_x > 0$. Additionally, at any time $t$, for any fixed unit vector $\btheta \in \RR^d$, $(\btheta^{\top}\bx)^2$ has sub-Gaussian tail with variance upper bounded by $\sigma^2$.
\end{assumption}

\begin{assumption}[Bounded misspecification level]
\label{assumption4}
We assume that there is a pre-specified maximum misspecification level parameter $\epsilon_*$ such that $\norm{\bepsilon^{i,t}}_{\infty}\leq \epsilon_*$, $\forall{i\in\mathcal{U}, t\in [T]}$.
\end{assumption}
\vspace{-0.2cm}
\noindent\textbf{Remark 1.} All these assumptions basically follow previous works on CB \cite{gentile2014online,gentile2017context,
li2018online,
ban2021local,
liu2022federated} and MLB \cite{lattimore2020learning}. Note that Assumption \ref{assumption3} is less stringent and more practical than previous CB works which also put restrictions on the variance upper bound $\sigma^2$. For Assumption \ref{assumption2}, our results can easily generalize to the case where the user arrival follows any distributions with minimum arrival probability greater than $p_{min}$. For Assumption \ref{assumption4}, note that $\epsilon_*$ can be an upper bound on the maximum misspecification level, not the exact maximum itself. In real-world applications, the deviations are usually small \cite{ghosh2017misspecified}, and we can set a relatively big $\epsilon_{*}$ as an upper bound. For more discussions please refer to Appendix \ref{appendix: assumptions}

Let $a_t^*\in{\arg\max}_{a\in{\mathcal{A}_t}}\bx_a^{\top}\btheta_{i_t}+\bepsilon^{i_t,t}_a$ denote an optimal arm which gives the highest expected reward at $t$. The goal of the agent is to minimize the expected cumulative regret
\begin{small}
\begin{equation}\textstyle{
R(T)=\EE[\sum_{t=1}^T(\bx_{a_t^*}^{\top}\btheta_{i_t}+\bepsilon^{i_t,t}_{a_t^*}-\bx_{a_t}^{\top}\btheta_{i_t}-\bepsilon^{i_t,t}_{a_t})]\,.}
\label{regret def}
\end{equation} 
\end{small}


\section{Algorithm}
\label{section: algorithm}
\begin{algorithm}[tbh!] 
\caption{Robust Clustering of Misspecified Bandits Algorithm (RCLUMB)}
\resizebox{0.93\columnwidth}{!}{
\begin{minipage}{\columnwidth}
\label{alg:RCLUMB}
\begin{algorithmic}[1]
\STATE {{\bf Input:}  Deletion parameter $\alpha_1,\alpha_2>0$, $f(T)=\sqrt{\frac{1 + \ln(1+T)}{1 + T}}$, $\lambda, \beta, \epsilon_*>0$.}
\STATE {{\bf Initialization:} 
$\bM_{i,0} = 0_{d\times d}, \bb_{i,0} = 0_{d \times 1}, T_{i,0}=0$ , $\forall{i \in \mathcal{U}}$; a complete Graph $G_0 = (\mathcal{U},E_0)$ over $\mathcal{U}$.
\label{alg:RCLUMB:initialize} }
\FORALL {$t=1,2,\ldots, T$}
\STATE {Receive the index of the current user $i_t\in \mathcal{U}$, and the current feasible arm set $\mathcal{A}_t$; \label{alg:RCLUMB:receive user index} }
\STATE {\label{alg:RCLUMB:find V} Filter user $i_t$ and users $i\in \mathcal{U}$ that are \textit{directly} connected with user $i_t$ via edge $(i,i_t)\in E_{t-1}$, to form the cluster $\overline{V}_t$;} 
\STATE {\label{alg:RCLUMB:compute common vector} Compute the estimated statistics for cluster $\overline{V}_t$ 
\begin{align*}
\textstyle{\overline{\bM}_{\overline{V}_t,t-1} = \lambda \bI + \sum_{i \in \overline{V}_t} \bM_{i,t-1}\,,\overline{\bb}_{\overline{V}_t,t-1} = \sum_{i \in \overline{V}_t} \bb_{i,t-1}\,,
\hat{\btheta}_{\overline{V}_t,t-1} = \overline{\bM}_{\overline{V}_t,t-1}^{-1}\overline{\bb}_{\overline{V}_t,t-1};}
\end{align*}
}
\STATE {\label{alg:RCLUMB:recommend and receive feedback}
Recommend an arm $a_t$ with the largest UCB index (Eq.(\ref{UCB})),
and receive the reward $r_t\in[0,1]$;}
\STATE {\label{alg:RCLUMB:update1} Update the statistics for user $i_t$
$\bM_{i_t,t} = \bM_{i_t,t-1} + \bx_{a_t}\bx_{a_t}^{\top}\,, 
\bb_{i_t,t} = \bb_{i_t,t-1} + r_t\bx_{a_t}\,,
T_{i_t,t} = T_{i_t,t-1} + 1\,,
\hat{\btheta}_{i_t,t} = (\lambda \bI + \bM_{i_t,t})^{-1} \bb_{i_t,t}$;
}
\STATE {Keep the statistics of other users unchanged\label{alg:RCLUMB:update2}\\
$\bM_{\ell,t} = \bM_{\ell,t-1}, \bb_{\ell,t} = \bb_{\ell,t-1}, T_{\ell,t} = T_{\ell,t-1},     
\hat{\btheta}_{\ell,t} = \hat{\btheta}_{\ell,t-1}$, for all $\ell\in\mathcal{U}, \ell \ne i_t$;
}
\STATE {\label{alg:RCLUMB:delete} Delete the edge $(i_t, \ell) \in E_{t-1}$, if
\begin{equation*}
    \norm{\hat{\btheta}_{i_t,t} - \hat{\btheta}_{\ell,t}}_2 \ge \alpha_1\bigg(f(T_{i_t,t})+f(T_{\ell,t})\bigg)+\alpha_2\epsilon_*\,,
\end{equation*}
and get an updated graph $G_t = (\mathcal{U}, E_t)$;}
\ENDFOR 
\end{algorithmic}
\end{minipage}
}
\end{algorithm}
This section introduces our algorithm called ``Robust CLUstering of Misspecified Bandits" (RCLUMB) (Algo.\ref{alg:RCLUMB}). RCLUMB is a graph-based algorithm. The ideas and techniques of RCLUMB can be easily generalized to set-based algorithms. To illustrate this generalizability, we also design a set-based algorithm RSCLUMB. We leave the exposition and analysis of RSCLUMB in Appendix \ref{RSCLUMB section}.

For ease of interpretation, we define the coefficient
\begin{small}
\begin{equation}
\zeta\triangleq2\epsilon_*\sqrt{\frac{2}{\tilde{\lambda}_x}}\,,\label{zeta}   
\end{equation}
\end{small}
where $\tilde{\lambda}_x\triangleq\int_{0}^{\lambda_x} (1-e^{-\frac{(\lambda_x-x)^2}{2\sigma^2}})^{C} dx$. $\zeta$ is theoretically the minimum gap between two users' preference vectors that an algorithm can distinguish with high probability, as supported by Eq.(\ref{condition gamma/4}) in the proof of Lemma \ref{sufficient time} in Appendix \ref{section T0}. Note that the algorithm does not require knowledge of $\zeta$. We also make the following definition for illustration.

\begin{definition}[$\zeta$-close users and $\zeta$-good clusters]\label{def:close}
Two users $i, i^{\prime} \in \mathcal{U}$ are $\zeta$-close if $\norm{\btheta_i-\btheta_{i^{\prime}}}_2\le \zeta$. Cluster $\overline{V}$ is a $\zeta$-good cluster at time $t$, if $\forall \, i \in \overline{V}$, user $i$ and the coming user $i_t$ are $\zeta$-close.
\end{definition}
We also say that two \gtclusters{} are ``$\zeta$-close" if their preference vectors' gap is less than $\zeta$.

Now we introduce the process and intuitions of RCLUMB (Algo.\ref{alg:RCLUMB}).
The algorithm maintains an undirected user graph $G_t=(\mathcal{U}, E_t)$, where users are connected with edges if they are inferred to be in the same cluster. 
We denote the connected component in $G_{t-1}$ containing user $i_t$ at round $t$ as $\tilde{V}_t$.

\noindent\textbf{Cluster Detection.} $G_0$ is initialized to be a complete graph, and will be updated adaptively based on the interactive information. At round $t$, user $i_t\in\mathcal{U}$ comes to be served with a feasible arm set $\mathcal{A}_t$ (Line \ref{alg:RCLUMB:receive user index}). 
Due to model misspecifications, it is impossible to cluster users with exactly the same preference vector $\btheta$, but similar users whose preference vectors are within the distance of $\zeta$. According to the proof of Lemma \ref{sufficient time}, after a sufficient time, with high probability, any pair of users directly connected by an edge in $E_{t-1}$ are $\zeta$-close. However, if we trivially follow previous CB works \cite{gentile2014online,li2018online,liu2022federated} to directly use the connected component $\tilde{V}_t$ as the inferred cluster for user $i_t$ at round $t$, it will cause a large regret. The reason is that in the worst case, the preference vector $\btheta$ of the user in $\tilde{V}_t$ who is $h$-hop away from user $i_t$ could deviate by $h\zeta$ from $\btheta_{i_t}$, where $h$ can be as large as $|\tilde{V}_t|$. Based on this reasoning, our key point is to select the cluster $\overline{V}_t$ as the users at most 1-hop away from $i_t$ in the graph. In other words, after some interactions, $\overline{V}_t$ forms a $\zeta$-good cluster with high probability; thus, RCLUMB can avoid using misleading information from dissimilar users for recommendations.

\noindent\textbf{Cluster-based Recommendation.} 
After finding the appropriate cluster $\overline{V}_t$ for $i_t$, the agent estimates the common user preference vector based on the historical information associated with cluster $\overline{V}_t$ by 
\begin{footnotesize}
    \begin{equation}
\textstyle{\hat{\btheta}_{\overline{V}_t,t-1}=\mathop{\arg\min}\limits_{\btheta\in\RR^d}\sum_{s\in[t-1]\atop i_s\in \overline{V}_t}(r_s-\bx_{a_s}^{\top}\btheta)^2+\lambda\norm{\btheta}_2^2\,,}
\end{equation}
\end{footnotesize}
where $\lambda>0$ is a regularization coefficient. Its closed-form solution is
\begin{footnotesize}
  $\hat{\btheta}_{\overline{V}_t,t-1}=\overline{\bM}_{\overline{V}_t,t-1}^{-1}\overline{\bb}_{\overline{V}_t,t-1}$,
where $\overline{\bM}_{\overline{V}_t,t-1}=\lambda\bI+\sum_{s\in[t-1]\atop i_s\in \overline{V}_t}\bx_{a_s}\bx_{a_s}^{\top}\,,\,
    \overline{\bb}_{\overline{V}_t,t-1}=\sum_{s\in[t-1]\atop i_s\in \overline{V}_t}r_{a_s}\bx_{a_s}$\,.  
\end{footnotesize}    

Based on this estimation, in Line \ref{alg:RCLUMB:recommend and receive feedback}, the agent recommends an arm using the UCB strategy
\begin{footnotesize}
  \begin{equation}
\label{UCB}
    \begin{aligned}
    a_t&= \argmax_{a\in \mathcal{A}_t}\min \{1, \underbrace{\bx_a^{\top}\hat{\btheta}_{\overline{V}_t,t-1}}_{\hat{R}_{a,t}} + \underbrace{\beta \norm{\bx_a}_{\overline{\bM}_{\overline{V}_t,t-1}^{-1}}
    +\epsilon_*\textstyle{\sum_{s\in[t-1]\atop i_s\in \overline{V}_t}}\left|\bx_a^{\top}\overline{\bM}_{\overline{V}_t,t-1}^{-1}\bx_{a_s}\right|}_{C_{a,t}}\}\,,
    \end{aligned}
\end{equation}   
\end{footnotesize}

where \begin{small}
$\beta=\sqrt{\lambda}+\sqrt{2\log(\frac{1}{\delta})+d\log(1+\frac{T}{\lambda d})}$,  $\hat{R}_{a,t}$     
\end{small}denotes the estimated reward of arm $a$ at $t$, $C_{a,t}$ denotes the confidence radius of arm $a$ at round $t$.

Due to deviations from linearity, the estimation $\hat{R}_{a,t}$ computed by a linear function is no longer accurate. To handle the estimation uncertainty of model misspecifications, we design an enlarged confidence radius $C_{a,t}$. The first term of $C_{a,t}$ in Eq.(\ref{UCB}) captures the uncertainty of online learning for the linear part, and the second term related to $\epsilon_*$ reflects the additional uncertainty from deviations from linearity. The design of $C_{a,t}$ theoretically relies on Lemma \ref{concentration bound} which will be given in Section \ref{section: theoretical}.

\noindent\textbf{Update User Statistics.} Based the feedback $r_t$, in Line \ref{alg:RCLUMB:update1} and \ref{alg:RCLUMB:update2}, the agent updates the statistics for user $i_t$. Specifically, the agent estimates the preference vector $\btheta_{i_t}$ by
\begin{equation}    \textstyle{\hat{\btheta}_{i_t,t}=\mathop{\arg\min}\limits_{\btheta\in\RR^d}\sum_{s\in[t]\atop i_s=i_t}(r_s-\bx_{a_s}^{\top}\btheta)^2+\lambda\norm{\btheta}_2^2\,,}
\end{equation}
with solution
\begin{small}
  $        \hat{\btheta}_{i_t,t}={(\lambda\bI+\bM_{i_t,t})}^{-1}\bb_{i_t,t}\,,$
where $\bM_{i_t,t}=\sum_{s\in[t]\atop i_s=i_t}\bx_{a_s}\bx_{a_s}^{\top}\,, \bb_{i_t,t}=\sum_{s\in[t]\atop i_s=i_t}r_{a_s}\bx_{a_s}\,.$  
\end{small}

\noindent\textbf{Update the Graph $G_t$.} Finally, in Line \ref{alg:RCLUMB:delete}, the agent verifies whether the similarities between user $i_t$ and other users are still true based on the updated estimation $\hat{\btheta}_{i_t,t}$. For every user $\ell\in\mathcal{U}$ connected with user $i_t$ via edge $(i_t,\ell)\in E_{t-1}$, if the gap between her estimated preference vector $\hat{\btheta}_{\ell,t}$ and $\hat{\btheta}_{i_t,t}$ is larger than a threshold supported by Lemma \ref{sufficient time}, the agent will delete the edge $(i_t,\ell)$ to split them apart. 
The threshold in Line \ref{alg:RCLUMB:delete} is carefully designed, taking both estimation uncertainty in a linear model and deviations from linearity into consideration. As shown in the proof of Lemma \ref{sufficient time} (in Appendix \ref{section T0}), using this threshold, with high probability, edges between users in the same \gtclusters{} will not be deleted, and edges between users that are not $\zeta$-close will always be deleted. Together with the filtering step in Line \ref{alg:RCLUMB:find V}, with high probability, the algorithm will leverage all the collaborative information of similar users and avoid misusing the information of dissimilar users. The updated graph $G_t$ will be used in the next round.

\section{Theoretical Analysis}\label{section: theoretical}

In this section, we theoretically analyze the performance of the RCLUMB algorithm by giving an upper bound of the expected regret defined in Eq.(\ref{regret def}). Due to the space limitation, we only show the main result (Theorem \ref{thm:main}), key lemmas, and a sketched proof for Theorem \ref{thm:main}. Detailed proofs, other technical lemmas, and the regret analysis of the RSLUMB algorithm can be found in the Appendix.

To state our main result, we first give two definitions as follows. The first definition is about the minimum separable gap constant $\gamma_1$ of a CBMUM problem instance.
\begin{definition}[Minimum separable gap $\gamma_1$]
\label{def:gap}
The minimum separable gap constant $\gamma_1$ of a CBMUM problem instance is the minimum gap over the gaps among users that are greater than $\zeta$ (Eq. (\ref{zeta}))
\begin{equation}
    \gamma_1=\min\{\norm{\btheta_i-\btheta_\ell}_2:\norm{\btheta_i-\btheta_\ell}_2>\zeta,\forall{i,\ell\in\mathcal{U}}\}\,, \text{with} \min \emptyset=\infty.\nonumber
\end{equation}

\end{definition}

\noindent\textbf{Remark 2.}
In CBMUM, the role of $\gamma_1-\zeta$ is similar to that of $\gamma$ (given in Assumption \ref{assumption1}) in the previous CB problem with perfectly linear models, quantifying the hardness of performing clustering on the problem instance. Intuitively, users are easier to cluster if $\gamma_1$ is larger, and the deduction of $\zeta$ shows the additional difficulty due to model diviations. If there are no misspecifications, i.e., $\zeta=2\epsilon_*\sqrt{\frac{2}{\lambda_x}}=0$, then $\gamma_1=\gamma$, recovering the minimum separable gap between clusters in the classic CB problem \cite{gentile2014online,li2018online} without model misspecifications.


The second definition is about the number of ``hard-to-cluster users" $\tilde{u}$.
\begin{definition}[Number of ``hard-to-cluster users" $\tilde{u}$]
\label{def:user number}
The number of ``hard-to-cluster users" $\tilde{u}$ is the number of users in the \gtclusters{} which are $\zeta$-close to some other \gtcluster{}s
\begin{equation}
    \tilde{u}=\sum_{j\in[m]}\left|V_j\right|\times\mathbb{I}\{\exists{j^{\prime}\in[m],j^{\prime}\neq j}: \norm{\btheta^{j^{\prime}}-\btheta^j}_2\leq\zeta\}\,,\notag
\end{equation}
where $\mathbb{I}\{\cdot\}$ denotes the indicator function of the argument, $\left|V_j\right|$ denotes the number of users in $V_j$.
\end{definition}

\noindent\textbf{Remark 3.}
$\tilde{u}$ captures the number of users who belong to different \gtclusters{} but their gaps are less than $\zeta$. These users may be merged into one cluster by mistake and cause certain regret.

The following theorem gives an upper bound on the expected regret achieved by RCLUMB.

\begin{theorem}[Main result on regret bound]
\label{thm:main}
Suppose that the assumptions in Section \ref{sec3} are satisfied. Then the expected regret of the RCLUMB algorithm for $T$ rounds satisfies
\begin{small}
\begin{align}
    R(T) &\le O\bigg(u\left( \frac{d}{\tilde{\lambda}_x (\gamma_1-\zeta)^2}+\frac{1}{\tilde{\lambda}_x^2}\right)\log T+\frac{\tilde{u}}{u}\frac{\epsilon_*\sqrt{d}T}{\tilde{\lambda}_x^{1.5}}
    +\epsilon_*T \sqrt{md\log T}  + d\sqrt{mT}\log T\bigg)\label{intial}\\
    &\le O(\epsilon_*T\sqrt{md\log T}  + d\sqrt{mT}\log T)\label{bigO}\,,
\end{align}
\end{small}
where $\gamma_1$ is defined in Definition \ref{def:gap}, and $\tilde{u}$ is defined in Definition \ref{def:user number}).
\end{theorem}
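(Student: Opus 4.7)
The plan is to decompose $R(T)$ into a burn-in phase and a post-clustering phase, and within the latter to separate ``true-cluster'' rounds from ``bias-from-nearby-clusters'' rounds. Let $T_0 = \Theta\!\left(u\!\left(\frac{d}{\tilde{\lambda}_x(\gamma_1-\zeta)^2} + \frac{1}{\tilde{\lambda}_x^2}\right)\log T\right)$ be the clustering time guaranteed by Lemma \ref{sufficient time}: after $T_0$ rounds, with probability $\ge 1 - 1/T$, the graph $G_t$ satisfies (i) every two users in the same ground-truth cluster are directly connected, and (ii) every two users that are \emph{not} $\zeta$-close have no edge. The ingredients of the union bound leading to $T_0$ are the per-user ridge concentration of $\hat{\btheta}_{i,t}$ at radius $O(T_{i,t}^{-1/2})$, the uniform-arrival Assumption \ref{assumption2} combined with a Chernoff argument yielding $T_{i,t} = \Omega(t/u)$, and a lower bound on the minimum eigenvalue of the per-user Gram matrix at rate $\tilde{\lambda}_x$ (this is where Assumption \ref{assumption3} and the quantity in Eq.~(\ref{zeta}) enter). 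The regret for $t \le T_0$ is bounded trivially by $T_0$, producing the first logarithmic term in Eq.~(\ref{intial}).

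Call the complement of this event $\cE$. On $\cE$ and for $t > T_0$, the filtering in Line \ref{alg:RCLUMB:find V} ensures $\overline{V}_t$ is a $\zeta$-good cluster (Definition \ref{def:close}), but it may still pool data from two distinct ground-truth clusters whose preference vectors are within $\zeta$. I split the post-burn-in rounds by whether $i_t$ lies in an ``easy'' cluster (no other ground-truth cluster is $\zeta$-close) or is one of the $\tilde{u}$ ``hard'' users. For the easy rounds $\overline{V}_t = V_{j(i_t)}$ exactly, and Lemma \ref{concentration bound} gives
\begin{equation*}
    \bigl|\bx_a^{\top}\hat{\btheta}_{\overline{V}_t,t-1} - \bx_a^{\top}\btheta_{i_t}\bigr| \;\le\; \beta \norm{\bx_a}_{\overline{\bM}_{\overline{V}_t,t-1}^{-1}} + \epsilon_*\!\!\sum_{s\in[t-1],\, i_s\in \overline{V}_t}\!\!\bigl|\bx_a^{\top}\overline{\bM}_{\overline{V}_t,t-1}^{-1}\bx_{a_s}\bigr|,
\end{equation*}
so by the UCB rule the instantaneous regret is at most $2C_{a_t,t}$. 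Summing over easy $t$ yields $O(d\sqrt{mT}\log T)$ from the $\beta$-part (elliptical-potential lemma applied cluster-wise, with a $\sqrt{m}$ factor from Cauchy--Schwarz across the $m$ ground-truth clusters and uniform user arrival) and $O(\epsilon_* T\sqrt{md\log T})$ from the misspecification part (Cauchy--Schwarz on the pair $(\norm{\bx_{a_t}}_{\overline{\bM}^{-1}},\norm{\bx_{a_s}}_{\overline{\bM}^{-1}})$ plus the same potential bound).

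The main obstacle is the ``hard'' rounds, because here $\overline{V}_t$ pools users from different ground-truth clusters, introducing a clustering-bias in $\hat{\btheta}_{\overline{V}_t,t-1}$ that does \emph{not} shrink with additional data. This is exactly the role of the advertised key lemma Lemma \ref{bound for mis}: I would prove it by writing the pooled reward model as a linear model in $\btheta_{i_t}$ with an enlarged per-arm perturbation of order $\zeta + \epsilon_* = O(\epsilon_*/\sqrt{\tilde{\lambda}_x})$, absorbing the clustering discrepancy into an effective misspecification. With this substitution, I recover a MLB-style per-round regret of $O(\epsilon_*\sqrt{d}/\tilde{\lambda}_x^{1.5})$ that holds uniformly in $t$. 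Multiplying by the expected number of hard rounds, which is $(\tilde{u}/u)\,T$ by Assumption \ref{assumption2} and concentration of the user-arrival process, gives the $\tfrac{\tilde{u}}{u}\tfrac{\epsilon_*\sqrt{d}T}{\tilde{\lambda}_x^{1.5}}$ term.

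Adding the four contributions together, and absorbing the $O(T \cdot \PP(\cE^c)) = O(1)$ term from the low-probability event, gives Eq.~(\ref{intial}). Finally, to collapse it to Eq.~(\ref{bigO}), I use $\tilde{u}/u \le 1$ and observe that both the initial $u\log T$ term and the $\tfrac{\tilde{u}}{u}\tfrac{\epsilon_*\sqrt{d}T}{\tilde{\lambda}_x^{1.5}}$ term are dominated by $\epsilon_* T\sqrt{md\log T} + d\sqrt{mT}\log T$ for the target regime. The trickiest step is undoubtedly Lemma \ref{bound for mis}: naive perturbation arguments lose an extra $\sqrt{T}$ because the bias is constant, so the proof must use the arm-selection-invariance of the UCB rule (recommending the arm of largest biased optimistic index) to turn a constant bias in the estimator into only a constant per-round regret.
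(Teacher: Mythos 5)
Your overall route is the same as the paper's: a burn-in $T_0$ from Lemma \ref{sufficient time}, a ``good partition'' event after which Lemma \ref{concentration bound} plus the UCB rule bounds the instantaneous regret by $2C_{a_t,t}$ plus an extra bias term on misclustered rounds, elliptical-potential and Cauchy--Schwarz summations over the $m$ ground-truth clusters for the two leading terms, and the $\frac{\tilde{u}}{u}T$ expected count of misclustered rounds from Assumption \ref{assumption2}. Where you genuinely depart is in the key Lemma \ref{bound for mis}. You absorb the preference discrepancy into an enlarged \emph{scalar} misspecification, i.e.\ you bound $\bigl|\sum_{s}(\bx_a^{\top}\overline{\bM}_{\overline{V}_t,t-1}^{-1}\bx_{a_s})\,\bx_{a_s}^{\top}(\btheta_{i_s}-\btheta_{i_t})\bigr|\le\zeta\sum_{s}\bigl|\bx_a^{\top}\overline{\bM}_{\overline{V}_t,t-1}^{-1}\bx_{a_s}\bigr|\le\zeta\sqrt{T_{\overline{V}_t,t-1}}\,\norm{\bx_a}_{\overline{\bM}_{\overline{V}_t,t-1}^{-1}}=O(\epsilon_*/\tilde{\lambda}_x)$, using the same high-probability bound $\lambda_{\min}(\overline{\bM}_{\overline{V}_t,t-1})\gtrsim\tilde{\lambda}_x T_{\overline{V}_t,t-1}$ that the paper uses. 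The paper instead keeps the perturbation vector-valued and proves the matrix inequality of Lemma \ref{technical lemma 5}, paying a $\sqrt{d}$ and landing at $O(\epsilon_*\sqrt{2d}/\tilde{\lambda}_x^{3/2})$. Since $\tilde{\lambda}_x\le 1$, your term is never worse; your route is valid, dimension-free for this term, and sidesteps the ``easy-to-be-made mistake'' the paper warns about, because you never need the $2$-norm of the vector $\sum_s\bx_{a_s}\bx_{a_s}^{\top}(\btheta_{i_s}-\btheta_{i_t})$.

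Two points need repair. First, your accounting of the hard rounds is off as written: the per-round regret there is not uniformly $O(\epsilon_*\sqrt{d}/\tilde{\lambda}_x^{1.5})$ --- it is $2C_{a_t,t}$ \emph{plus} that constant, and $C_{a_t,t}$ contains the exploration term $\beta\norm{\bx_{a_t}}_{\overline{\bM}_{\overline{V}_t,t-1}^{-1}}$, which does not scale with $\epsilon_*$. By summing $2C_{a_t,t}$ only over easy rounds and charging hard rounds a constant times $\epsilon_*$, you either drop the hard rounds' exploration cost or misattribute it. The fix is what the paper does: sum $2C_{a_t,t}$ over \emph{all} post-$T_0$ rounds --- legal because under a good partition $V_{j(i_t)}\subseteq\overline{V}_t$ implies $\overline{\bM}_{\overline{V}_t,t-1}\succeq\overline{\bM}_{V_{j(i_t)},t-1}$, so the potential argument over the $m$ static ground-truth clusters covers hard rounds too --- and add the constant bias only on rounds with $\overline{V}_t\notin\mathcal{V}$. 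Second, the closing remark about needing ``arm-selection-invariance of the UCB rule'' to avoid an extra $\sqrt{T}$ is a red herring: the misclustering bias simply contributes a linear-in-$T$ term $\frac{\tilde{u}}{u}\cdot O(\epsilon_*\sqrt{d}/\tilde{\lambda}_x^{1.5})\cdot T$, which is exactly the second term of Eq.~(\ref{intial}); no cancellation or invariance argument is used or needed anywhere in the proof.
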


\noindent\textbf{Discussion and Comparison.}
The bound in Eq.(\ref{intial}) has four terms. The first term is the time needed to gather enough information to assign appropriate clusters for users. The second term is the regret caused by \textit{misclustering} $\zeta$-close but not precisely similar users together, which is unavoidable with model misspecifications. The third term is from the preference estimation errors caused by model deviations. The last term is 
the usual term in CB with perfectly linear models \cite{gentile2014online,li2018online,10.5555/3367243.3367445}. 


Let us discuss how the parameters affect this regret bound.\\ 
    $\bullet$ If $\gamma_1-\zeta$ is large, the gaps between clusters that are not ``$\zeta$-close" are much greater than the minimum gap $\zeta$ for the algorithm to distinguish, the first term in Eq.(\ref{intial}) will be small as it is easy to identify their dissimilarities. The role of $\gamma_1-\zeta$ in CBMUM is similar to that of $\gamma$ in the previous CB.\\
    $\bullet$ If $\tilde{u}$ is small, indicating that few \gtclusters{} are  ``$\zeta$-close", RCLUMB will hardly \textit{miscluster} different \gtclusters{} together thus the second term in Eq.(\ref{intial}) will be small.\\
    $\bullet$ If the deviation level $\epsilon_*$ is small, the user models are close to linearity and the misspecifications will not affect the estimations much, then both the second and third term in Eq.(\ref{intial}) will be small.\\
The following theorem gives a regret lower bound of the CBMUM problem.
\begin{theorem}[Regret lower bound for CBMUM]
\label{thm: lower bound}
There exists a problem instance for the CBMUM problem such that for any algorithm
$R(T)\geq \Omega(\epsilon_*T\sqrt{d})\,.$
\end{theorem}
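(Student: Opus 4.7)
The plan is to reduce the CBMUM lower bound to the existing $\Omega(\epsilon_* T\sqrt{d})$ lower bound for single-user misspecified linear bandits (MLB) established by \cite{lattimore2020learning}. The guiding observation is that CBMUM with one user and one cluster is \emph{at most as easy} as general MLB, so an MLB lower bound transfers directly to this special case and hence to CBMUM as a whole.

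\textbf{Reduction step.} I would construct the witness instance by taking $u = m = 1$: a single user whose underlying cluster is $\{1\}$. This choice makes Assumption~\ref{assumption1} vacuous (no pair of distinct clusters exists), Assumption~\ref{assumption2} trivial (the unique user always arrives), and reduces the cluster-assignment machinery of any CBMUM algorithm to the degenerate choice $\overline{V}_t=\{1\}$. Consequently, any CBMUM algorithm run on this instance is just an MLB algorithm in disguise, and the CBMUM regret in Eq.~(\ref{regret def}) coincides exactly with the MLB regret.

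\textbf{Hard instance and the information-theoretic core.} I would use a standard MLB hard-instance construction: fix a finite arm pool of size $\Theta(d)$ with features (e.g.\ rescaled $\pm 1/\sqrt{d}$ Hadamard columns) satisfying $\norm{\bx}_2\leq 1$, together with a family of $2^{\Theta(d)}$ candidate preference vectors $\{\btheta^{(s)}\}$ with $\norm{\btheta^{(s)}}_2\leq 1$. To match Assumption~\ref{assumption3}, I take $\rho$ to be the uniform distribution on this arm pool; a direct computation yields $\EE_{\bx\sim\rho}[\bx \bx^\top]$ with a strictly positive minimum eigenvalue, while the boundedness of $\bx$ makes the sub-Gaussian tail condition trivial. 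The adversary picks the deviation vector $\bepsilon^{1,t}$ to blur the identity of the argmax by adding roughly $\pm \epsilon_*$ to rewards of the near-optimal arms, subject to $\norm{\bepsilon^{1,t}}_{\infty}\leq \epsilon_*$ as required by Assumption~\ref{assumption4}. Given this family, I would invoke the standard Le~Cam/Assouad argument exactly as in \cite{lattimore2020learning}: the $\epsilon_*$-perturbation can switch the identity of the best arm while the per-round KL divergence between any two neighboring candidate instances is $O(\epsilon_*^2)$. Summing over the $\Theta(d)$ sign coordinates and $T$ rounds and applying Pinsker's inequality yields $\min_{\text{alg}}\max_{\text{inst}} R(T) = \Omega(\epsilon_* T\sqrt{d})$.

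\textbf{Main obstacle.} The principal subtlety is that the classical MLB lower-bound construction assumes the full arm pool is available at every round, whereas CBMUM draws $\mathcal{A}_t$ iid from $\rho$ with $|\mathcal{A}_t|\leq C$. I would handle this by choosing $C$ large enough (still a constant in $T$) that every ``critical'' arm of the hard instance appears in $\mathcal{A}_t$ with constant probability each round; equivalently, one can absorb the sub-sampling into the arm distribution $\rho$ and redo the KL computation with an extra constant factor. The failure event where a critical arm is absent contributes only lower-order terms and does not affect the $\Omega(\epsilon_* T\sqrt{d})$ rate. Everything else (the gap assumption, sub-Gaussianity of noise, boundedness of rewards) is immediate from the construction.
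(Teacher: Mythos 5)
Your proposal is correct and follows essentially the same route as the paper: both reduce the CBMUM lower bound to the single-user $\Omega(\epsilon_* T\sqrt{d})$ lower bound for misspecified linear bandits from \cite{lattimore2020learning} (the paper sums that bound over $m$ clusters with arrival counts $T_i$, whereas you simply take $u=m=1$, which suffices since the theorem only asserts existence of a hard instance). If anything, you are more careful than the paper's one-paragraph argument, since you explicitly check compatibility of the hard instance with Assumptions~\ref{assumption3} and \ref{assumption4} and address the finite, stochastically drawn arm set $\mathcal{A}_t$, points the paper glosses over.
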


The proof can be found in Appendix \ref{appendix: lower bound}. The upper bounds in Theorem \ref{thm:main}
asymptotically match this lower bound with respect to $T$ up to logarithmic factors (and a constant factor of $\sqrt{m}$ where 
 $m$ is typically small in real-applications), showing the tightness of our theoretical results. Additionally, we conjecture the gap for the $m$ factor is due to the strong assumption that cluster structures are known to prove this lower bound, and whether there exists a tighter lower bound is left for future work.

We then
compare our results with two degenerate cases. First, when $m=1$ (indicating $\tilde{u}=0$), our setting degenerates to
the MLB problem where all users share the same preference vector. In this case, our regret bound is $O(\epsilon_*T\sqrt{d\log T}  + d\sqrt{T}\log T)$, exactly matching the current best bound of MLB \cite{lattimore2020learning}. Second, when $\epsilon_*=0$, our setting reduces to the CB problem with perfectly linear user models and our bounds become $O(d\sqrt{mT}\log T)$, also perfectly match the existing best bound of the CB problem \cite{li2018online,10.5555/3367243.3367445}. The above discussions and comparisons show the tightness of our regret bounds. Additionally, we also provide detailed discussions on why trivially combining existing works on CB and MLB would not get any non-vacuous regret upper bound in Appendix \ref{appendix: why trivial not apply}.




We define the following ``good partition" for ease of interpretation.


\begin{definition} [Good partition]\label{def:good partition}
RCLUMB does a ``good partition" at $t$, if the cluster $\overline{V}_t$ assigned to $i_t$ is a $\zeta$-good cluster, and it contains all the users in the same \gtcluster{} as $i_t$, i.e.,
\begin{equation}
    \norm{\btheta_{i_t}-\btheta_\ell}_2\leq \zeta, \forall{\ell\in \overline{V}_t}\,,\text{and}\,V_{j(i_t)}\subseteq \overline{V}_t\,. 
\end{equation}
\end{definition}

Note that when the algorithm does a ``good partition" at $t$, $\overline{V}_t$ will contain all the users in the same \gtcluster{} as $i_t$ and may only contain some other $\zeta$-close users with respect to $i_t$, which means the gathered information 
 associated with $\overline{V}_t$ can be used to infer user $i_t$'s preference with high accuracy. Also, it is obvious that under a ``good partition", if $\overline{V}_t\in \mathcal{V}$, then $\overline{V}_t=V_{j(i_t)}$ by definition.

Next, we give a sketched proof for Theorem \ref{thm:main}. 

\begin{proof}\noindent\textbf{ [Sketch for Theorem \ref{thm:main}]} The proof mainly contains two parts. First, we prove there is a sufficient time $T_0$ for RCLUMB to get a ``good partition" with high probability. Second, we prove the regret upper bound for RCLUMB after maintaining a ``good partition". The most challenging part is to bound the regret caused by \textit{misclustering} $\zeta$-close users after getting a ``good partition".

\noindent\textbf{1. Sufficient time to maintain a ``good partition".} With the item regularity (Assumption \ref{assumption3}), we can prove after some $T_0$ (defined in Lemma \ref{sufficient time} in Appendix \ref{section T0}), RCLUMB will always have a ``good partition". Specifically, after $t\geq O\big(u\left( \frac{d}{\tilde{\lambda}_x (\gamma_1-\zeta)^2}+\frac{1}{\tilde{\lambda}_x^2}\right)\log T\big)$,
for any user $i\in\mathcal{U}$, the gap between
the estimated $\hat{\btheta}_{i,t}$ and the ground-truth $\btheta^{{j(i)}}$ is less than $\frac{\gamma_1}{4}$ with high probability. With this, we can get: for any two users $i$ and $\ell$, if their gap is greater than $\zeta$, it will trigger the deletion of the edge $(i,\ell)$ (Line \ref{alg:RCLUMB:delete} of Algo.\ref{alg:RCLUMB}) with high probability; on the other hand, when the deletion condition of the edge $(i,\ell)$ is satisfied, then \begin{small}
 $\norm{\btheta^{j(i)}-\btheta^{j(\ell)}}_2>0$   
\end{small}, which means user $i$ and $\ell$ belong to different \gtclusters{} by Assumption \ref{assumption1} with high probability. Therefore, we can get that with high probability, all those users in the same \gtcluster{} as $i_t$ will be directly connected with $i_t$, and users directly connected with $i_t$ must be $\zeta$-close to $i_t$. By filtering users directly linked with $i_t$ as the cluster $\overline{V}_t$ (Algo.\ref{alg:RCLUMB} Line \ref{alg:RCLUMB:find V}) and the definition of ``good partition", we can ensure that RCLUMB will keep a ``good partition" afterward with high probability.

\noindent\textbf{2. Bounding the regret after getting a ``good partition".} After $T_0$, with the ``good partition", we can prove the following lemma that gives a bound of the difference between $\hat{\btheta}_{\overline{V}_t,t-1}$ and ground-truth $\btheta_{i_t}$
in direction of action vector $\bx_a$, and supports the design of the confidence radius $C_{a,t}$ in Eq.(\ref{UCB}).


\begin{lemma}
\label{concentration bound}
With probability at least $1-5\delta$ for some $\delta\in(0,\frac{1}{5})$, $\forall{t\geq T_0}$
\begin{small}
\begin{equation*}
        \left|\bx_a^{\top}(\btheta_{i_t}-\hat{\btheta}_{\overline{V}_t,t-1})\right|\leq \frac{\epsilon_*\sqrt{2d}}{\tilde{\lambda}_x^{\frac{3}{2}}}\mathbb{I}\{\overline{V}_t\notin \mathcal{V}\}+\epsilon_*\textstyle{\sum_{s\in[t-1]\atop i_s\in \overline{V}_t}}\left|\bx_a^{\top}\overline{\bM}_{\overline{V}_t,t-1}^{-1}\bx_{a_s}\right|+\beta\norm{\bx_a}_{\overline{\bM}_{\overline{V}_t,t-1}^{-1}}\,.
\end{equation*}
\end{small}
\end{lemma}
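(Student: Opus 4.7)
The plan is to algebraically decompose the error $\bx_a^{\top}(\btheta_{i_t}-\hat{\btheta}_{\overline{V}_t,t-1})$ by plugging in the closed form $\hat{\btheta}_{\overline{V}_t,t-1} = \overline{\bM}_{\overline{V}_t,t-1}^{-1}\overline{\bb}_{\overline{V}_t,t-1}$ and then substituting the misspecified reward model $r_s = \bx_{a_s}^{\top}\btheta_{i_s}+\bepsilon^{i_s,s}_{a_s}+\eta_s$ for each past round $s$ with $i_s\in\overline{V}_t$. Grouping these rounds by user and using $\sum_{s\,:\,i_s\in\overline{V}_t}\bx_{a_s}\bx_{a_s}^{\top} = \overline{\bM}_{\overline{V}_t,t-1}-\lambda\bI$ exposes four identifiable pieces: a regularization bias $-\lambda\bx_a^{\top}\overline{\bM}_{\overline{V}_t,t-1}^{-1}\btheta_{i_t}$, a heterogeneous-preference bias $\bx_a^{\top}\overline{\bM}_{\overline{V}_t,t-1}^{-1}\sum_{i\in\overline{V}_t}\bM_{i,t-1}(\btheta_i-\btheta_{i_t})$, a misspecification term $\sum_{s}\bepsilon^{i_s,s}_{a_s}\,\bx_a^{\top}\overline{\bM}_{\overline{V}_t,t-1}^{-1}\bx_{a_s}$, and a sub-Gaussian noise term $\bx_a^{\top}\overline{\bM}_{\overline{V}_t,t-1}^{-1}\sum_{s}\eta_s\bx_{a_s}$. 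Each summand in the lemma will correspond to one of these.

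Three of these pieces are handled by standard tools. The regularization and noise pieces combine to yield $\beta\norm{\bx_a}_{\overline{\bM}_{\overline{V}_t,t-1}^{-1}}$ via the self-normalized martingale tail inequality for ridge regression under $\norm{\btheta_{i_t}}_2\leq 1$ and the $1$-sub-Gaussianity of $\eta_s$; the concrete choice of $\beta$ in Algorithm \ref{alg:RCLUMB} is exactly what this inequality produces. The misspecification piece is controlled by the triangle inequality together with $\norm{\bepsilon^{i_s,s}}_{\infty}\leq\epsilon_*$ from Assumption \ref{assumption4}, which immediately gives the $\epsilon_*\sum_{s}|\bx_a^{\top}\overline{\bM}_{\overline{V}_t,t-1}^{-1}\bx_{a_s}|$ term appearing in the statement. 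Each of these contributes one low-probability failure event to the eventual union bound.

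The main obstacle is the heterogeneous-preference term. When $\overline{V}_t\in\mathcal{V}$ it vanishes identically because all users in $\overline{V}_t$ share the common preference vector $\btheta^{j(i_t)}$, which is precisely how the indicator $\mathbb{I}\{\overline{V}_t\notin\mathcal{V}\}$ enters the stated bound. For $t\geq T_0$ we are guaranteed a good partition by Lemma \ref{sufficient time}, so when $\overline{V}_t\notin\mathcal{V}$ each summand satisfies $\norm{\btheta_i-\btheta_{i_t}}_2\leq\zeta = 2\epsilon_*\sqrt{2/\tilde{\lambda}_x}$. I would combine a Cauchy--Schwarz / operator-norm argument that pulls out $\norm{\bx_a}_2\leq 1$ and the spectral radius of $\overline{\bM}_{\overline{V}_t,t-1}^{-1}$ with the minimum-eigenvalue lower bound $\lambda_{\min}(\overline{\bM}_{\overline{V}_t,t-1}) \gtrsim \tilde{\lambda}_x\sum_{i\in\overline{V}_t}T_{i,t-1}$, which holds for $t\geq T_0$ by Assumption \ref{assumption3} and concentration of the sample covariance of arm features. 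Propagating the per-user estimate $\norm{\bM_{i,t-1}(\btheta_i-\btheta_{i_t})}_2 \leq T_{i,t-1}\zeta$ through this machinery, together with a $\sqrt{d}$ factor picked up when converting a spectral bound into a per-direction bound on an $\ell_2$-normalized quantity, delivers the target $\tfrac{\epsilon_*\sqrt{2d}}{\tilde{\lambda}_x^{3/2}}$. This PSD-algebra bookkeeping is the technically delicate step and is exactly where the ``techniques of independent interest'' claimed in the introduction reside. A final union bound over the five failure events listed above --- the good-partition event from Lemma \ref{sufficient time}, the self-normalized noise concentration, the minimum-eigenvalue lower tail, and the two auxiliary events needed to make these uniform in $t\geq T_0$, each allocated probability at most $\delta$ --- yields the claimed $1-5\delta$ guarantee.
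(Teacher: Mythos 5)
Your decomposition, and your treatment of three of the four resulting terms, coincide with the paper's proof: plug in the ridge closed form and the reward model, absorb the regularization bias and the sub-Gaussian noise into $\beta\norm{\bx_a}_{\overline{\bM}_{\overline{V}_t,t-1}^{-1}}$ via the self-normalized inequality, bound the misspecification sum by H\"older and Assumption \ref{assumption4}, and observe that the heterogeneous-preference term vanishes identically when $\overline{V}_t\in\mathcal{V}$, which is where the indicator comes from. The only step where you genuinely depart from the paper is the heterogeneous-preference term when $\overline{V}_t\notin\mathcal{V}$, which the paper isolates as its key Lemma \ref{bound for mis}: the paper keeps the sum in the per-round form $\sum_{s}\bx_{a_s}\bx_{a_s}^{\top}(\btheta_{i_s}-\btheta_{i_t})$ and proves a dedicated inequality (Lemma \ref{technical lemma 5}), $\norm{\sum_i\bx_i\bx_i^{\top}\btheta_i}_2\le C\sqrt{d}\,\norm{\sum_i\bx_i\bx_i^{\top}}_2$, which is the sole source of the $\sqrt{d}$ in the statement; you instead group by user, bound $\norm{\bM_{i,t-1}(\btheta_i-\btheta_{i_t})}_2\le T_{i,t-1}\,\zeta$, and apply the triangle inequality over users. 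Your route is valid and, pushed through with $\norm{\bx_a}_2\le 1$, $\norm{\overline{\bM}_{\overline{V}_t,t-1}^{-1}}_2=1/\lambda_{\min}(\overline{\bM}_{\overline{V}_t,t-1})$ and the minimum-eigenvalue bound $\lambda_{\min}(\overline{\bM}_{\overline{V}_t,t-1})\ge 2\tilde{\lambda}_xT_{\overline{V}_t,t-1}+\lambda$ that both proofs need, it yields $\zeta T_{\overline{V}_t,t-1}/(2\tilde{\lambda}_xT_{\overline{V}_t,t-1}+\lambda)\le\epsilon_*\sqrt{2}/\tilde{\lambda}_x^{3/2}$ --- the claimed bound without the $\sqrt{d}$, hence strictly stronger since $d\ge1$. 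What this means is that your stated provenance of the $\sqrt{d}$ (``converting a spectral bound into a per-direction bound'') does not correspond to any step your argument actually requires; you should either delete that clause or replace the per-user bookkeeping by the paper's Lemma \ref{technical lemma 5}, and in either case the lemma follows. Your union-bound accounting also differs cosmetically from the paper's (which spends $3\delta$ on the good-partition event of Lemma \ref{sufficient time} and $\delta$ each on the self-normalized and minimum-eigenvalue events), but the total of $5\delta$ agrees.
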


To prove this lemma, we consider the following two situations.

\noindent\textbf{(i) Assigning a perfect cluster for $i_t$.}  In this case, $\overline{V}_t\in \mathcal{V}$, meaning the cluster assigned for user $i_t$ is the same as her \gtcluster{}, i.e., $\overline{V}_t=V_{j(i_t)}$. Therefore, we have that $\forall{\ell\in\overline{V}_t}, \btheta_{\ell}=\btheta_{i_t}$. With careful analysis, we can bound \begin{footnotesize}
$\left|\bx_a^{\top}(\btheta_{i_t}-\hat{\btheta}_{\overline{V}_t,t-1})\right|$    
\end{footnotesize} by $C_{a,t}$ (defined in Eq.(\ref{UCB})).

\noindent\textbf{(ii) Bounding the term of \textit{misclustering} $i_t$'s $\zeta$-close users.} In this case, $\overline{V}_t\notin \mathcal{V}$, meaning the algorithm \textit{misclusters} user $i_t$, i.e., $\overline{V}_t\neq V_{j(i_t)}$. Thus, we do not have $\forall{\ell\in\overline{V}_t}, \btheta_{\ell}=\btheta_{i_t}$ anymore, but we have all the users in $\overline{V}_t$ are $\zeta$-close to $i_t$ (by ``good partition"), i.e., $\norm{\btheta_{i_s}-\btheta_{i_t}}_2\leq\zeta\,,\forall{\ell\in \overline{V}_t}$. Then an additional term can be caused by using the information of $i_t$'s $\zeta$-close users in $\overline{V}_t$ lying in different \gtclusters{} from $i_t$ to estimate $\btheta_{i_t}$. It is highly challenging to bound this part. 

We will get an extra term \begin{footnotesize}
    $\left|\bx_a^{\top}\overline{\bM}_{\overline{V}_t,t-1}^{-1}\sum_{s\in[t-1]\atop i_s\in \overline{V}_t}\bx_{a_s}\bx_{a_s}^{\top}(\btheta_{i_s}-\btheta_{i_t})\right|$
\end{footnotesize}
when bounding the regret in this case,
where $\norm{\btheta_{\ell}-\btheta_{i_t}}_2\leq\zeta\,,\forall{\ell\in \overline{V}_t}$. It is an easy-to-be-made mistake to directly drag $\norm{\btheta_{i_s}-\btheta_{i_t}}_2$ out to bound it by \begin{footnotesize}
$\norm{\bx_a^{\top}\overline{\bM}_{\overline{V}_t,t-1}^{-1}\sum_{s\in[t-1]\atop i_s\in \overline{V}_t}\bx_{a_s}\bx_{a_s}^{\top}}_2\times\zeta$    
\end{footnotesize}. With subtle analysis, we propose the following lemma to bound the above term.

\begin{lemma}[Bound of error caused by \textit{misclustering}]
\label{bound for mis}
$\forall{t\geq T_0}$, if the current partition by RCLUMB is a ``good partition", and $\overline{V}_t\notin \mathcal{V}$, then for all $\bx_a\in \RR^d, \norm{\bx_a}_2\leq 1$, with probability at least $1-\delta$:
\begin{equation}
    \textstyle{\left|\bx_a^{\top}\overline{\bM}_{\overline{V}_t,t-1}^{-1}\sum_{s\in[t-1]\atop i_s\in \overline{V}_t}\bx_{a_s}\bx_{a_s}^{\top}(\btheta_{i_s}-\btheta_{i_t})\right|\leq \frac{\epsilon_*\sqrt{2d}}{\tilde{\lambda}_x^{\frac{3}{2}}}\nonumber\,.}
\end{equation}
\end{lemma}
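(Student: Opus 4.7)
The plan is to exploit two complementary structural facts: (i) because the current partition is ``good'' and every index $s$ in the sum satisfies $i_s \in \overline{V}_t$, each displacement $\Delta_s := \btheta_{i_s}-\btheta_{i_t}$ is uniformly bounded in norm by $\zeta$ (Definition \ref{def:close}), and (ii) once $t \geq T_0$, Assumption \ref{assumption3} together with the matrix-concentration estimate already proved inside Lemma \ref{sufficient time} forces $\overline{\bM}_{\overline{V}_t,t-1}$ to satisfy $\lambda_{\min}(\overline{\bM}_{\overline{V}_t,t-1}) \geq \tilde{\lambda}_x N_t$ with high probability, where $N_t := |\{s \in [t-1] : i_s \in \overline{V}_t\}|$. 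The proof then reduces to pairing a linear-in-$N_t$ upper bound on a residual vector against a $1/N_t$ upper bound on the inverse Gram matrix.

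Concretely, setting $B := \sum_{s: i_s \in \overline{V}_t} \bx_{a_s} \bx_{a_s}^{\top} \Delta_s \in \RR^d$, I would begin with the sub-multiplicativity of the spectral norm,
\begin{equation*}
\bigl|\bx_a^{\top} \overline{\bM}_{\overline{V}_t,t-1}^{-1} B\bigr| \;\leq\; \|\bx_a\|_2 \cdot \bigl\|\overline{\bM}_{\overline{V}_t,t-1}^{-1}\bigr\|_{\mathrm{op}} \cdot \|B\|_2.
\end{equation*}
The rightmost factor is controlled by the triangle inequality, using $\|\bx_{a_s}\|_2 \leq 1$ and $\|\Delta_s\|_2 \leq \zeta$, giving $\|B\|_2 \leq \sum_s \|\bx_{a_s}\bx_{a_s}^{\top}\|_{\mathrm{op}} \|\Delta_s\|_2 \leq \zeta N_t$. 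The middle factor is controlled by the eigenvalue bound above, giving $\|\overline{\bM}_{\overline{V}_t,t-1}^{-1}\|_{\mathrm{op}} \leq 1/(\tilde{\lambda}_x N_t)$. Multiplying the three pieces, the $N_t$'s cancel exactly and yield $|A| \leq \zeta/\tilde{\lambda}_x = 2\sqrt{2}\,\epsilon_*/\tilde{\lambda}_x^{3/2}$, which is upper-bounded by $\epsilon_*\sqrt{2d}/\tilde{\lambda}_x^{3/2}$ for $d \geq 4$ (and by loosening constants in the low-$d$ regime).

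The main conceptual obstacle is exactly the one flagged immediately before the lemma statement: avoiding the tempting but wasteful manipulation
\begin{equation*}
|A| \;\leq\; \zeta \cdot \Bigl\|\bx_a^{\top} \overline{\bM}_{\overline{V}_t,t-1}^{-1} \sum_{s: i_s \in \overline{V}_t} \bx_{a_s}\bx_{a_s}^{\top}\Bigr\|_2,
\end{equation*}
which collapses the inner sum to $\bx_a^{\top}(\bI - \lambda \overline{\bM}_{\overline{V}_t,t-1}^{-1})$ and so leaves an $\Theta(1)$ residual, losing the crucial $1/N_t$ decay of $\overline{\bM}_{\overline{V}_t,t-1}^{-1}$. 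The correct move is to keep $\Delta_s$ inside the sum so that $\|B\|_2$ grows only linearly in $N_t$ while $\|\overline{\bM}_{\overline{V}_t,t-1}^{-1}\|_{\mathrm{op}}$ simultaneously shrinks as $1/N_t$, preserving the cancellation. The secondary (technical, not conceptual) obstacle is justifying the minimum-eigenvalue bound under the adaptive, UCB-driven sampling; this is handled by the matrix Chernoff / Bernstein argument already used in the proof of Lemma \ref{sufficient time} in Appendix \ref{section T0}, which combines the uniform user arrival of Assumption \ref{assumption2} with the definition of $\tilde{\lambda}_x$ that absorbs the worst-case arm selection over sets of size at most $C$, so the bound can be imported directly into the regime $t \geq T_0$ at the cost only of an additional event of probability at most $\delta$.
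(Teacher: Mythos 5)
Your proposal is correct, and it reaches the bound by a genuinely different route at the one step that matters. The paper's proof also starts with $|\bx_a^{\top}\overline{\bM}_{\overline{V}_t,t-1}^{-1}B| \le \|\bx_a\|_2\,\|\overline{\bM}_{\overline{V}_t,t-1}^{-1}\|_2\,\|B\|_2$ and also cancels the sample count against $\lambda_{\min}(\overline{\bM}_{\overline{V}_t,t-1}) \ge 2\tilde{\lambda}_x T_{\overline{V}_t,t-1}+\lambda$; where it differs is in bounding $\|B\|_2$. The paper proves and invokes a dedicated operator-norm lemma (Lemma \ref{technical lemma 5}), $\|\sum_i \bx_i\bx_i^{\top}\btheta_i\|_2 \le \zeta\sqrt{d}\,\|\sum_i \bx_i\bx_i^{\top}\|_2$, and then bounds $\|\sum_s \bx_{a_s}\bx_{a_s}^{\top}\|_2 = \lambda_{\max}(\sum_s \bx_{a_s}\bx_{a_s}^{\top}) \le N_t$, giving a numerator of $\zeta\sqrt{d}\,N_t$; you use the plain triangle inequality $\|B\|_2 \le \sum_s \|\bx_{a_s}\|_2^2\,\|\Delta_s\|_2 \le \zeta N_t$, i.e., you bound the trace rather than $\sqrt{d}$ times the top eigenvalue. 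Since the paper immediately bounds $\lambda_{\max}$ by $N_t$ (the trace) anyway, your route is more elementary and in fact tighter by a factor of $\sqrt{d}$ in this application, which is a nice observation given that the paper advertises Lemma \ref{technical lemma 5} as a tight result of independent interest; both you and the paper correctly avoid the invalid manipulation of factoring $\zeta$ out of the heterogeneous sum. The only blemish is quantitative: with your stated eigenvalue bound $\lambda_{\min} \ge \tilde{\lambda}_x N_t$ you land on $2\sqrt{2}\,\epsilon_*/\tilde{\lambda}_x^{3/2}$, which only matches the claimed constant $\epsilon_*\sqrt{2d}/\tilde{\lambda}_x^{3/2}$ when $d\ge 4$; if you instead import the bound actually established in the proof of Lemma \ref{sufficient time} (namely $\lambda_{\min}(\overline{\bM}_{\overline{V}_t,t-1}) \ge 2\tilde{\lambda}_x T_{\overline{V}_t,t-1}+\lambda$, obtained by applying Eq.~(\ref{min eigen}) to each user in $\overline{V}_t$ and summing via Weyl's inequality), you get $\zeta/(2\tilde{\lambda}_x) = \sqrt{2}\,\epsilon_*/\tilde{\lambda}_x^{3/2} \le \epsilon_*\sqrt{2d}/\tilde{\lambda}_x^{3/2}$ for every $d\ge 1$, and the caveat disappears.
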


This lemma is quite general. Please see Appendix \ref{key lemma appendix} for details about its proof.

The expected occurrences of $\{\overline{V}_t\notin\mathcal{V}\}$ is bounded by $\frac{\tilde{u}}{u}T$ with Assumption \ref{assumption2}, Definition \ref{def:user number} and \ref{def:good partition}. The result follows by bounding the expected sum of the bounds for the instantaneous regret using Lemma \ref{concentration bound} with delicate analysis due to the time-varying clustering structure kept by RCLUMB.\end{proof}

\section{Experiments}
\label{section:experiments}

This section compares RCLUMB and RSCLUMB with CLUB~\cite{gentile2014online}, SCLUB~\cite{10.5555/3367243.3367445}, LinUCB with a single estimated vector for all users, LinUCB-Ind with separate estimated vectors for each user, and two modifications of LinUCB in ~\cite{lattimore2020learning} which we name as RLinUCB and RLinUCB-Ind. We use averaged reward as the evaluation metric, where the average is taken over ten independent trials. 
\begin{figure}[htbp]

\centering
\resizebox{0.916\columnwidth}{!}{
\begin{minipage}{\columnwidth}
\centering
    \subfigure[Synthetic]
    {\includegraphics[scale=0.115]{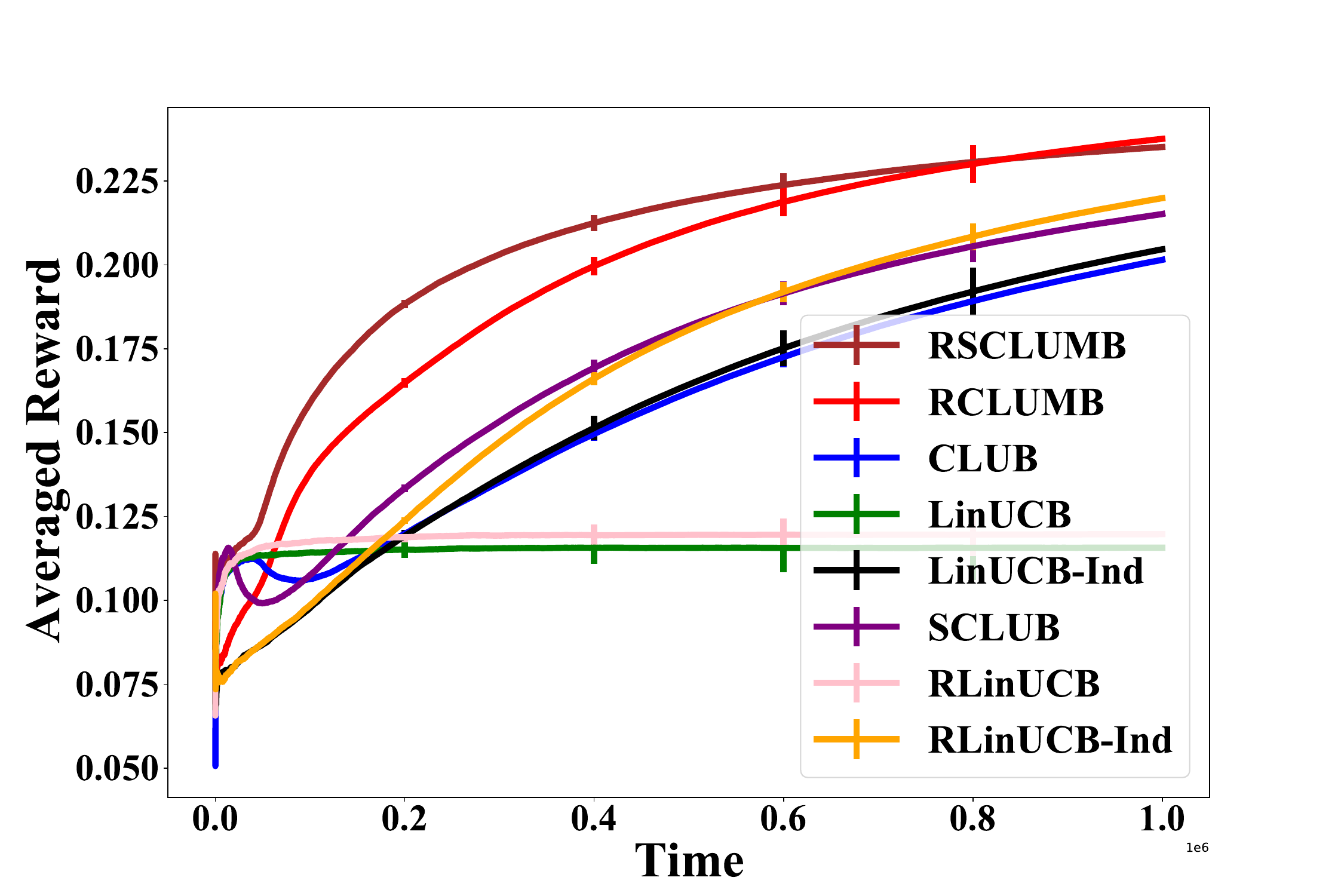}
    }
    \subfigure[Yelp Case 1]{
\includegraphics[scale=0.115]{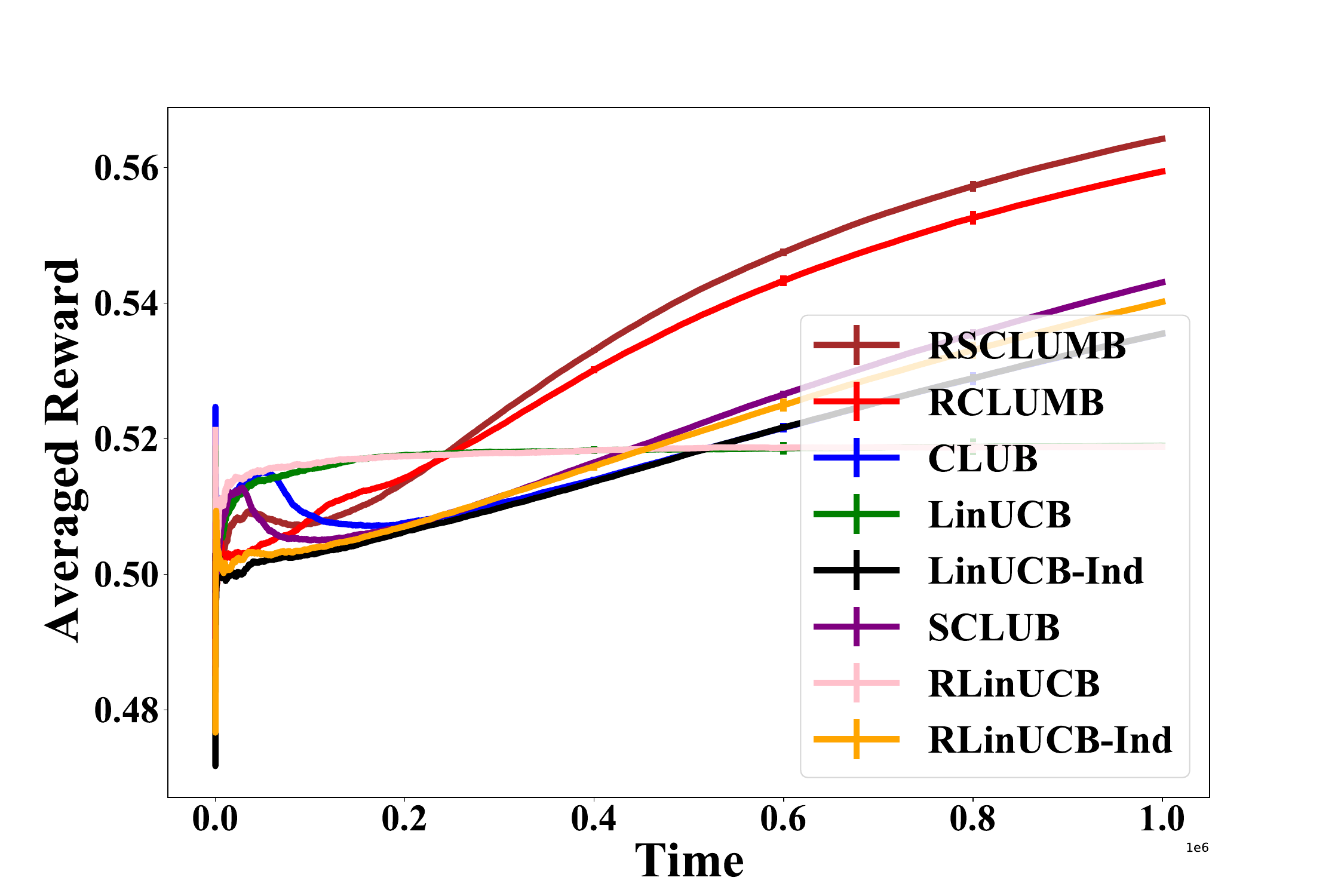}
    }
    \subfigure[Yelp Case 2]{
\includegraphics[scale=0.115]{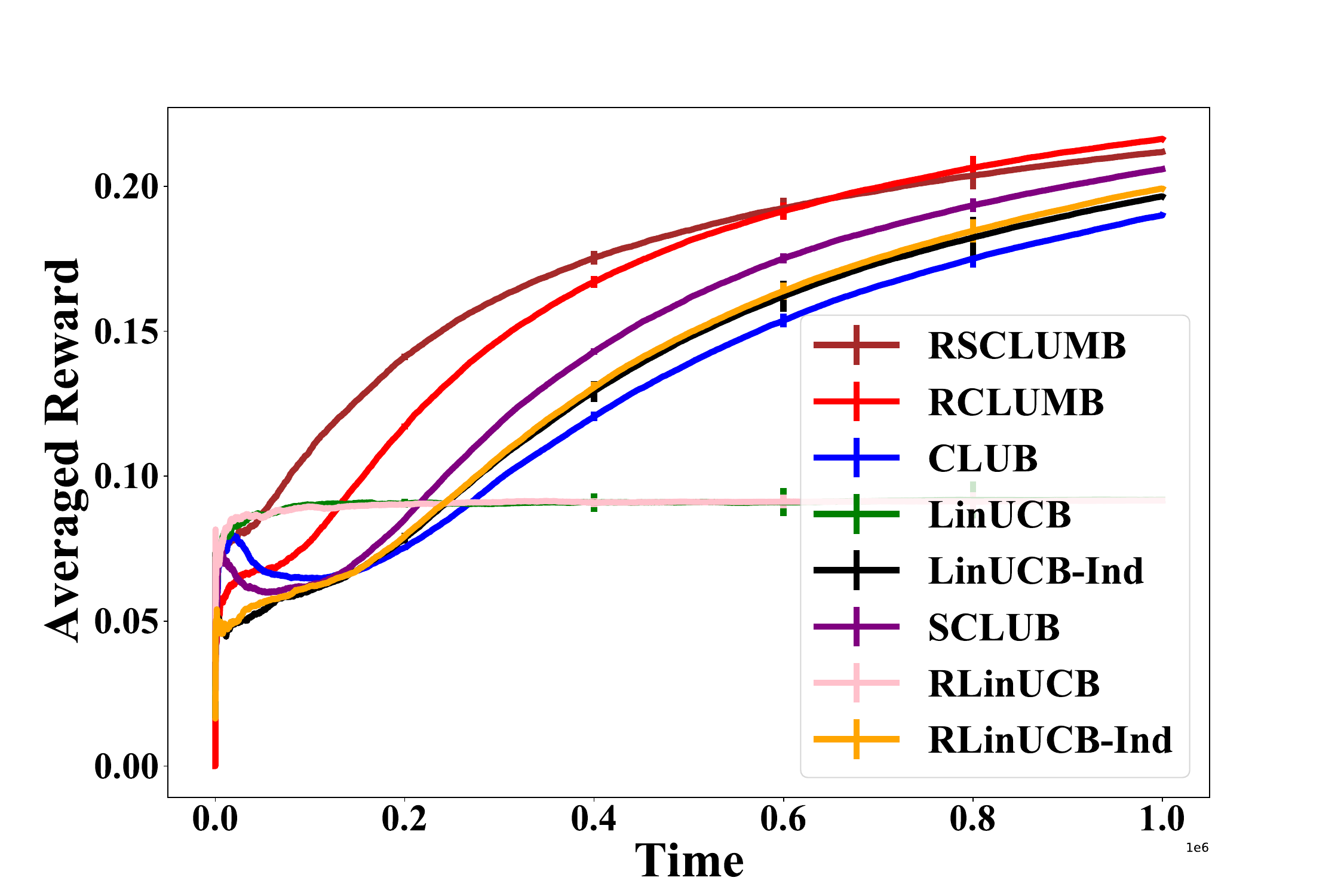}
    }
    \subfigure[Movielens Case 1]{
\includegraphics[scale=0.115]{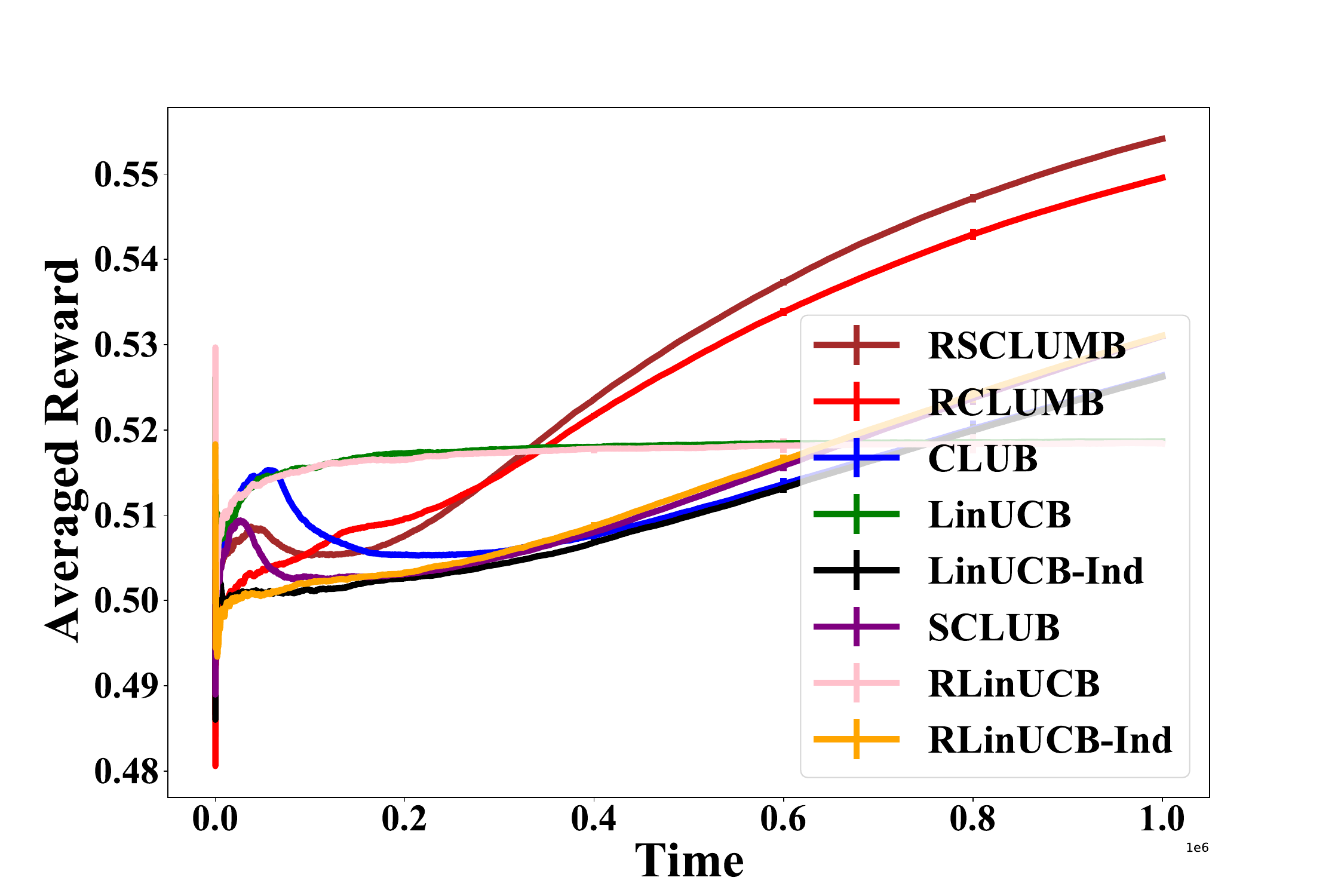}
    }
    \subfigure[Movielens Case 2]{
\includegraphics[scale=0.115]{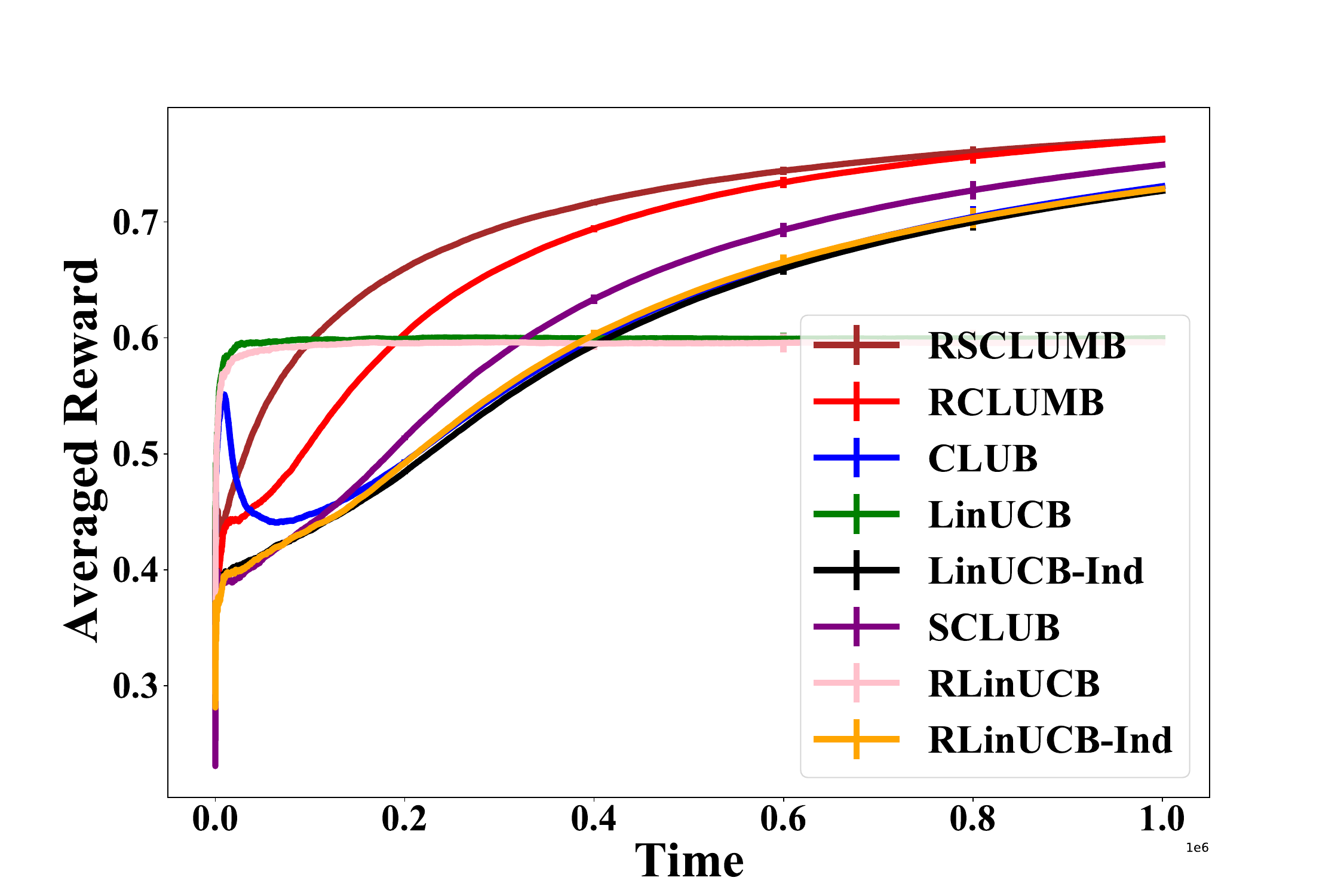}
    }
    \caption{The figures compare RCLUMB and RSCLUMB with the baselines. (a) shows the result on synthetic data, (b) and (c) show the results on Yelp dataset, (d) and (e) show the results on Movielens dataset. All experiments are under the setting of $u = 1,000$ users, $m =10$ clusters, and $d=50$. All results are averaged under $10$ random trials. The error bars are standard deviations divided by $\sqrt{10}$. }
\label{fig:my_label}\end{minipage}}
\end{figure}

\subsection{Synthetic Experiments}

We consider a setting with $u = 1,000$ users, $m = 10$ clusters and $T=10^6$ rounds. The preference and feature vectors are in $d=50$ dimension with each entry drawn from a standard Gaussian distribution, and are normalized to vectors with $\norm{.}_2=1$~\cite{10.5555/3367243.3367445}. We fix an arm set with $
\left|\mathcal{A}\right|= 1000$ items, at each round $t$, 20 items are randomly selected to form a set $\mathcal{A}_{t}$ for the user to choose from. We construct a matrix $\boldsymbol{\epsilon} \in \mathbb{R}^{1,000\times1,000}$ in which each element $\boldsymbol{\epsilon}(i,j)$ is drawn uniformly from the range $(-0.2, 0.2)$ to represent the deviation. At $t$, for user $i_{t}$ and the item $a_{t}$, $\boldsymbol{\epsilon}(i_{t},a_{t})$ will be added to the feedback as the deviation, which corresponds to the $\bepsilon^{i_t, t}_{a_{t}}$ defined in Eq.(\ref{reward def}).

The result is provided in Figure \ref{fig:my_label}(a), showing that our algorithms have clear advantages: RCLUMB improves over CLUB by 21.9\%, LinUCB by 194.8\%,  LinUCB-Ind by 20.1\%, SCLUB by 12.0\%, RLinUCB by 185.2\% and RLinUCB-Ind by 10.6\%. The performance difference between RCLUMB and RSCLUMB is very small as expected. 
RLinUCB performs better than LinUCB; RLinUCB-Ind performs better than LinUCB-Ind and CLUB, showing that the modification of the recommendation policy is effective. The set-based RSCLUMB and SCLUB can separate clusters quicker and have advantages in the early period, but eventually RCLUMB catches up with RSCLUMB, and SCLUB is surpassed by RLinUCB-Ind because it does not consider misspecifications. RCLUMB and RSCLUMB perform better than RLinUCB-Ind, which shows the advantage of the clustering. So it can be concluded that both the modification for misspecification and the clustering structure are critical to improving the algorithm's performance.
We also have done some ablation experiments on different scales of $\epsilon^{*}$ in Appendix \ref{appendix: more experiments}
, and we can notice that under different $\epsilon^{*}$
, our algorithms always outperform the baselines, and some baselines will perform worse as $\epsilon^{*}$
 increases.

\subsection{Experiments on Real-world Datasets}

We conduct experiments on the Yelp data and the $20m$ MovieLens data \cite{harper2015movielens}. For both data, we have two cases due to the different methods for generating feedback. For case 1, we extract 1,000 items with most ratings and 1,000 users who rate most; then we construct a binary
matrix $\boldsymbol{H}^{1,000\times1,000}$ based on the user rating \cite{wu2021clustering, zong2016cascading}: if the user rating is greater than 3, the feedback is 1; otherwise, the feedback is 0. Then we use this binary matrix to generate the preference and feature vectors by singular-value decomposition (SVD) \cite{10.5555/3367243.3367445,li2018online, wu2021clustering}. Similar to the synthetic experiment, we construct a matrix $\boldsymbol{\epsilon} \in \mathbb{R}^{1,000\times1,000}$ in which each element is drawn uniformly from the range $(-0.2, 0.2)$. For case 2, we extract 1,100 users who rate most and 1000 items with most ratings. We construct a binary feedback matrix $\boldsymbol{H}^{1,100\times1,000}$ based on the same rule as case 1. Then we select the first 100 rows $\boldsymbol{H}^{100\times1,000}_{1}$ to generate the feature vectors by SVD. The remaining 1,000 rows $\boldsymbol{F}^{1,000\times1,000}$ is used as the feedback matrix, meaning user $i$ receives $\boldsymbol{F}(i,j)$ as feedback while choosing item $j$. In both cases, at time $t$, we randomly select $20$ items for the algorithms to choose from. In case 1, the feedback is computed by the preference and feature vector with misspecification, in case 2, the feedback is from the feedback matrix. 

The results on Yelp are shown in Fig \ref{fig:my_label}(b) and Fig \ref{fig:my_label}(c). In case 1, RCLUMB improves CLUB by 45.1\%, SCLUB by 53.4\%, LinUCB-One by 170.1\% , LinUCB-Ind by 46.2\%, RLinUCB by 171.0\% and RLinUCB-Ind by 21.5\%. In case 2, RCLUMB improves over CLUB by 13.9\%, SCLUB by 5.1\%, LinUCB-One by 135.6\% , LinUCB-Ind by 10.1\%, RLinUCB by 138.6\% and RLinUCB by 8.5\%. 
It is notable that our modeling assumption \ref{assumption4} is violated in case 2 since the misspecification range is unknown. We set $\epsilon_*=0.2$ following our synthetic dataset and it can still perform better than other algorithms. When the misspecification level is known as in case 1, our algorithms' improvement is significantly enlarged, e.g., RCLUMB improves over SCLUB from 5.1\% to 53.4\%. 

The results on Movielens are shown in Fig \ref{fig:my_label}(d) and \ref{fig:my_label}(e). In case 1, RCLUMB improves CLUB by 58.8\%, SCLUB by 92.1\%, LinUCB-One by 107.7\%, LinUCB-Ind by 61.5 \%, RLinUCB by 109.5\%, and RLinUCB-Ind by 21.3\%. In case 2, RCLUMB improves over CLUB by 5.5\%, SCLUB by 2.9\%, LinUCB-One by 28.5\%, LinUCB-Ind by 6.1\%, RLinUCB by 29.3\% and RLinUCB-Ind by 5.8\%. The results are consistent with the Yelp data, confirming our superior performance.

\section{Conclusion}
\label{section:conclusion}


We present a new problem of clustering of bandits with misspecified user models (CBMUM), where the agent
has to adaptively assign appropriate clusters for users under model misspecifications. 
We propose two robust CB algorithms, RCLUMB and RSCLUMB. Under milder assumptions than previous CB works, we prove the regret bounds of our algorithms, which match the lower bound asymptotically in $T$ up
to logarithmic factors, and match the state-of-the-art results in several degenerate cases. It is challenging to bound the regret caused by \textit{misclustering} users with close but not the same preference vectors and use inaccurate cluster-based information to select arms. Our analysis to bound this part of the regret is quite general and may be of independent interest. Experiments on synthetic and real-world data demonstrate the advantage of our algorithms. 
We would like to state some interesting future works: (1) Prove a tighter regret lower bound for CBMUM, (2) Incorporate recent model selection methods into our fundamental framework to design robust algorithms for CBMUM with unknown exact maximum model misspecification level, and (3) Consider the setting with misspecifications in the underlying user clustering structure rather than user models.
\section{Acknowledgement}
The corresponding author Shuai Li is supported by National Key Research and Development Program of China (2022ZD0114804) and National Natural Science Foundation of China (62376154, 62006151, 62076161). The work of John C.S. Lui was supported in part by the RGC's GRF 14215722.

\newpage
\bibliographystyle{plainnat}
\bibliography{main}
\newpage
\appendix
\section*{Appendix}
\label{appendix}
\section{More Discussions on Related Work}\label{appendix: related work}
In this section, we will give more comparisions and discussions on some previous works that are related to our work to some extent.

There are some other works on bandits leveraging user (or task) relations, which have some relations with the clustering of bandits (CB) works to some extent, but are in different lines of research from CB, and are quite different from our work. First, besides CB, the work \cite{wu2016contextual} also leverages user relations. Specifically, it utilizes a \textit{known} user adjacency graph to share context and payoffs among neighbors, whereas in CB, the user relations are \textit{unknown} and need to be learnt, thus the setting differs a lot from CB. Second, there are lines of works on multi-task learning \cite{cella2021multi,deshmukh2017multi,soare2014multi,cella2023multi,wang2022thompson,wang2021multitask}, meta-learning \cite{wan2023towards,hong2022hierarchical,cella2020meta} and federated learning \cite{shi2021federated,huang2021federated}, where multiple different tasks are solved jointly and share information. Note that all of these works do not assume an underlying \textit{unknown} user clustering structure which needs to be inferred by the agent to speed up learning. For works on multi-task learning \cite{cella2021multi,deshmukh2017multi,soare2014multi,cella2023multi,wang2022thompson,wang2021multitask}, they assume the tasks are related but no user clustering structures, and to the best of our knowledge, none of them consider model misspefications, thus differing a lot from ours. For some recent works on meta-learning \cite{wan2023towards,hong2022hierarchical,wan2021metadata}, they propose general Bayesian hierarchical models to share knowledge across tasks, and design Thompson-Sampling-based algorithms to optimize the Bayes regret, which are quite different from the line of CB works, and differ a lot from ours.
And additionally, as supported by the discussions in the works \cite{cella2020meta,wang2021multitask}, multi-task learning and meta-learning are different lines of research from CB. For the works on federated learning \cite{shi2021federated,huang2021federated}, they consider the privacy and communication costs among multiple servers, whose setting is also very different from the previous CB works and our work. 

\noindent\textbf{Remark.} Again, we emphasize that the goal of this work is to initialize the study of the important CBMUM problem, and propose general design ideas for dealing with model misspecifications in CB problems. Therefore, our study is based on fundamental models on CB \cite{gentile2014online, 10.5555/3367243.3367445} and MLB \cite{lattimore2020learning}, and the algorithm design ideas and theoretical analysis are pretty general. We leave incorporating the more recent model selection methods \cite{pacchiano2020model,foster2020adapting} into our framework to address the unknown exact maximum model misspecification level as an interesting future work. It would also be interesting to consider incorporating our methods and ideas of tackling model misspecifications into the studies of multi-task learning, meta learning and federated learning.

\section{More Discussions on Assumptions}\label{appendix: assumptions}
All the assumptions (Assumptions \ref{assumption1},\ref{assumption2},\ref{assumption3},\ref{assumption4})in this work are natural and basically follow (or less strigent than) previous works on CB and MLB \cite{gentile2014online,li2018online,10.5555/3367243.3367445,liu2022federated,lattimore2020learning}. 
\subsection{Less Strigent Assumption on on the Generating Distribution of Arm Vectors}
We also make some contributions to relax a widely-used but stringent assumption on the generating distribution of arm vectors. Specifically, our Assumption \ref{assumption3} on item regularity relaxes the previous one used in previous CB works \cite{gentile2014online,li2018online,10.5555/3367243.3367445,liu2022federated} by removing the condition that the variance should be upper bounded by $\frac{\lambda^2}{8\log (4\left|\mathcal{A}_t\right|)}$. For technical details on this, please refer to the theoretical analysis and discussions in Appendix \ref{appendix: technical}.
\subsection{Discussions on Assumption \ref{assumption4} about Bounded Misspecification Level}

This assumption follows \cite{lattimore2020learning}. Note that this $\epsilon_*$ can be an upper bound on the maximum misspecification
level, not the exact maximum itself. In real-world applications, the deviations are usually small \cite{ghosh2017misspecified}, and we can set a relatively big $\epsilon_{*}$ (e.g., 0.2) to be the upper bound. Our experimental results support this claim. As shown in our experimental results on real-data case 2, even when $\epsilon_{*}$ is unknown, our algorithms still perform well by setting $\epsilon_{*} = 0.2$. 
Some recent studies \cite{pacchiano2020model,foster2020adapting} use model selection methods to theoretically
deal with unknown exact maximum misspecification level in the single-user case, which is not the emphasis of this work. Additionally, the work \cite{foster2020adapting} assumes that the learning agent has access to a regression oracle. And for the work \cite{pacchiano2020model}, though their regret bound is dependent on the exact maximum misspecification level that needs not to be known by the agent, an upper bound of the exact maximum misspecification level is still needed.
We leave incorporating their methods to deal with unknown exact maximum misspecification level as an interesting future work.
\subsection{Discussions on Assumption \ref{assumption2} about the Theoretical Results under General User Arrival Distributions}
The uniform arrival in Assumption \ref{assumption2} follows previous CB works \cite{gentile2014online,li2018online,liu2022federated}, it only affects the $T_0$ term, which is the time after which the algorithm maintains a “good partition” and is of $O(u\log T)$. For an arbitrary arrival distribution, $T_0$ becomes $O(1/p_{min} \log T)$, where $p_{min}$ is the minimal arrival probability of a user. And since it is a lower-order term (of $O(\log T)$), it will not affect the main order of our regret upper bound which is of $O(\epsilon_*T\sqrt{md\log T}  + d\sqrt{mT}\log T)$. The work \cite{10.5555/3367243.3367445} studies arbitrary arrivals and aims to remove the $1/p_{min}$ factor in this term, but their setting is different. They make an additional assumption that users in the same cluster not only have the same preference vector, but also the same arrival probability, which is different from our setting and other classic CB works \cite{gentile2014online,li2018online,liu2022federated} where we only assume users in the same cluster share the same preference vector.

\section{Highlight of the Theoretical Analysis\label{appendix: theory}}
Our proof flow and methodologies are novel in clustering of bandits (CB), which are expected to inspire future works on model misspecifications and CB. The main challenge of the regret analysis in CBMUM is that due to the estimation inaccuracy caused by misspecifications, it is impossible to cluster all users exactly correctly, and it is highly non-trivial to 
  bound the regret caused by \textbf{``misclustering" $\zeta$-close users}. 
  

To the best of our knowledge, the common proof flow of previous CB works (e.g., \cite{gentile2014online,li2018online,liu2022federated}) can be summarized in two steps: The first is to prove a sufficient time $T^{\prime}_0$ after which the algorithms can cluster all users \textbf{exactly correctly} with high probability. Note that the inferred clustering structure remains static after $T^{\prime}_0$, making the analysis easy. Second, after the \textbf{correct static clustering}, the regret can be trivially bounded by bounding $m$ (number of underlying clusters) independent linear bandit algorithms, resulting in a $O(d\sqrt{mT}\log T)$ regret. 

The above common proof flow is straightforward in CB with perfectly linear models, but it would fail to get a non-vacuous regret bound for CBMUM. 
In CBMUM, it is impossible to learn an exactly correct static clustering structure with model misspecifications. In particular, we prove that we can only expect the algorithm to cluster $\zeta$-close users together rather than cluster all users exactly correctly. Therefore, the previous flow can not be applied to the more challenging CBMUM problem.

We do the following to address the challenges in obtaining a tight regret bound for CBMUM. With the carefully-designed novel key components of RCLUMB, we can prove a sufficient time $T_0$ after which RCLUMB can get a ``good partition" (Definition \ref{def:good partition}) with high probability, which means the cluster $\overline{V_t}$ assigned to $i_t$ contains all users in the same ground-truth cluster as $i_t$, and possibly some other $i_t$'s $\zeta$-close users. Intuitively, after $T_0$, the algorithm can leverage all the information from the users' ground-truth clusters but may misuse some information from other  $\zeta$-close users with preference gaps up to $\zeta$, causing a regret of \textbf{``misclustering" $\zeta$-close users}. It is highly non-trivial to bound this part of regret, and the proof methods would be beneficial for future studies in CB in challenging cases when it is impossible to cluster all users exactly correctly. For details, please refer to the discussions ``(ii) Bounding the term of misclustering it's $\zeta$-close users" in Section \ref{section: theoretical}, the key Lemma \ref{bound for mis} (Bound of error caused by misclustering), its proof and tightness discussion in Appendix \ref{key lemma appendix}. Also, a more subtle analysis is needed to handle
the time-varying inferred clustering structure since the ``good partition" may change over time, whereas in the previous CB works, the clustering structure remains static after $T_0^{\prime}$. For theoretical details on this, please refer to 
Appendix \ref{appendix: proof of main thm}.

\section{Discussions on why Trivially Combining Existing CB and MLB Works Could Not Achieve a Non-vacuous Regret Upper Bound}\label{appendix: why trivial not apply}

We consider discussing regret upper bounds for CB without considering misspecifications for three cases: (1) neither the clustering process nor the decision process considers misspecifications (previous CB algorithms); (2) the decision process does not consider misspecifications; (3) the clustering process does not consider misspecifications.

For cases (1) and (2), the decision process could contribute to the leading regret. We consider the case where there are $m$ underlying clusters, with each cluster's arrival being $T/m$, and the agent knows the underlying clustering structure. For this case, there exist some instances where the regret upper bound $R(T)$ is strictly larger than $\epsilon_{*}T\sqrt{m\log T}$ asymptotically in $T$. Formally, in the discussion of ``Failure of unmodified algorithm" in Appendix E in \cite{lattimore2020learning},
 they give an example to show that in the single-user case, the regret $R_1(T)$ of the classic linear bandit algorithms without considering misspecifications will have: $\displaystyle\lim_{T \rightarrow + \infty}\frac{R_1(T)}{\epsilon_*T\sqrt{m\log T}}=+ \infty$. In our problem with multiple users and $m$ underlying clusters, even if we know the underlying clustering structure and keep $m$ independent linear bandit algorithms with $T_i$ for the cluster $i \in [m]$ to leverage the common information of clusters, the best we can get is $R_2(T)=\sum_{i \in [m]}R_1(T_i)$. By the above results, if the decision process does not consider misspecifications, we have $\displaystyle\lim_{T \rightarrow + \infty}\frac{R_2(T)}{\epsilon_*T\sqrt{m\log T}}=\displaystyle\lim_{T \rightarrow + \infty}\frac{mR_1(T/m)}{\epsilon_*T\sqrt{m\log T}}=+ \infty$. Recall that the regret upper bound $R(T)$ of our proposed algorithms is of $O(\epsilon_*T\sqrt{md\log T}  + d\sqrt{mT}\log T)$ (thus, we have $\displaystyle\lim_{T \rightarrow + \infty}\frac{R(T)}{\epsilon_*T\sqrt{m\log T}}<+ \infty$), which gives a proof that that the regret upper bound of our proposed algorithms is asymptotically much better than CB algorithms in cases (1)(2).

For case (3), if the clustering process does not use the more tolerant deletion rule in Line \ref{alg:RCLUMB:delete}
 of Algo.\ref{alg:RCLUMB}, the gap between users linked by edges would possibly exceed $\zeta$ ($\zeta= 2\epsilon_*\sqrt{\frac{2}{\tilde{\lambda}_x}}$) even after $T_0$, which will result in a regret upper bound no better than $O(\epsilon_*u\sqrt{d}T)$. As the number of users $u$ is usually huge in practice, this result is vacuous. The reasons for getting the above claim are as follows. Even if the clustering process further uses our deletion rule considering misspecifications, and the users linked by edges are within $\zeta$ distance, failing to extract $1$-hop users (Line \ref{alg:RCLUMB:find V} in Algo.\ref{alg:RCLUMB}) would cause the leading $O(\epsilon_*u\sqrt{d}T)$ regret term, as in the worst case, the preference vector $\theta$ of the user in $\tilde{V}_t$ who is $h$-hop away from user $i_t$ could deviate by $h\zeta$ from $\theta_{i_t}$, where $h$ can be as large as $u$, and it would make the second term in Eq.(\ref{bigO}) a $O(\epsilon_*u\sqrt{d}T)$ term. If we completely do not consider the misspecifications in the clustering process, the above user gap between users linked by edges would possibly exceed $\zeta$, which will cause a regret upper bound worse than $O(\epsilon_*u\sqrt{d}T)$.
\section{Proof of Theorem \ref{thm:main}}
\label{appendix: proof of main thm}
We first prove the result in the case when $\gamma_1$ defined in Definition \ref{def:gap} is not infinity, i.e., $4\epsilon_*\sqrt{\frac{2}{\tilde{\lambda}_x}}<\gamma_1<\infty$. The proof of the special case when $\gamma_1=\infty$ will directly follow the proof of this case.

For the instantaneous regret $R_t$ at round $t$, with probability at least $1-5\delta$ for some $\delta\in(0,\frac{1}{5})$, at $\forall{t\geq T_0}$:
    \begin{equation}
    \begin{aligned}
    R_t&=(\bx_{a_t^*}^{\top}\btheta_{i_t}+\bepsilon^{i_t,t}_{a_t^*})-(\bx_{a_t}^{\top}\btheta_{i_t}+\bepsilon^{i_t,t}_{a_t})\\
    &=\bx_{a_t^*}^{\top}(\btheta_{i_t}-\hat{\btheta}_{\overline{V}_t,t-1})+(\bx_{a_t^*}^{\top}\hat{\btheta}_{\overline{V}_t,t-1}+C_{a_t^*,t})-(\bx_{a_t}^{\top}\hat{\btheta}_{\overline{V}_t,t-1}+C_{a_t,t})\\
    &\quad+\bx_{a_t}^{\top}(\hat{\btheta}_{\overline{V}_t,t-1}-\btheta_{i_t})+C_{a_t,t}-C_{a_t^*,t}+(\bepsilon^{i_t,t}_{a_t^*}-\bepsilon^{i_t,t}_{a_t})\\
    &\leq 2C_{a_t,t}+\frac{2\epsilon_*\sqrt{2d}}{\tilde{\lambda}_x^{\frac{3}{2}}}\mathbb{I}\{\overline{V}_t\notin \mathcal{V}\}+2\epsilon_*\,,
    \end{aligned}
    \label{bound r_t}
\end{equation}
where the last inequality holds by the UCB arm selection strategy in Eq.(\ref{UCB}), the concentration bound given in Lemma \ref{concentration bound}, and the fact that $\norm{\bepsilon^{i,t}}_{\infty}\leq\epsilon_*, \forall{i\in\mathcal{U}},\forall{t}$.

We define the following events. Let
  \begin{align*}
\cE_0 &= \{R_t\leq 2C_{a_t,t}+\frac{2\epsilon_*\sqrt{2d}}{\tilde{\lambda}_x^{\frac{3}{2}}}\mathbb{I}\{\overline{V}_t\notin \mathcal{V}\}+2\epsilon_*, \text{for all }   \{t: t\geq T_0, \text{and the algorithm maintains a ``good partition" at $t$}\}\}\,,\\
\cE_1 &= \{ \text{the algorithm maintains a ``good partition" for all } t \geq T_0\}\,,\\
\cE &=\cE_0 \cap \cE_1
\,.
\end{align*} 

$\PP (\cE_0)\geq 1-2\delta$. According to Lemma \ref{sufficient time}, $\PP (\cE_1)\geq 1-3\delta$. Thus, $\PP (\cE)\geq 1-5\delta$ for some $\delta\in(0,\frac{1}{5})$. Take $\delta=\frac{1}{T}$, we can get that
    \begin{equation}
\begin{aligned}
    \EE[R(T)]&=\PP(\cE)\mathbb{I}\{\cE\}R(T)+\PP(\overline{\cE})\mathbb{I}\{\overline{\cE}\}R(T)\\
    &\leq \mathbb{I}\{\cE\}R(T)+ 5\times\frac{1}{T}\times T\\
    &=\mathbb{I}\{\cE\}R(T)+5\,,
\end{aligned}
\end{equation}

where $\overline{\cE}$ denotes the complementary event of $\cE$, $\mathbb{I}\{\cE\}R(T)$ denotes $R(T)$ under event $\cE$, $\mathbb{I}\{\overline{\cE}\}R(T)$ denotes $R(T)$ under event $\overline{\cE}$, and we use $R(T)\leq T$ to bound $R(T)$ under event $\overline{\cE}$.

Then it remains to bound $\mathbb{I}\{\cE\}R(T)$:
    \begin{align}
        \mathbb{I}\{\cE\}R(T)&\leq R(T_0)+\EE [\mathbb{I}\{\cE\}\sum_{t=T_0+1}^{T}R_t]\nonumber\\
    &\leq T_0+2\EE [\mathbb{I}\{\cE\}\sum_{t=T_0+1}^{T}C_{a_t,t}]+\frac{2\epsilon_*\sqrt{2d}}{\tilde{\lambda}_x^{\frac{3}{2}}}\sum_{t=T_0+1}^{T}\EE[\mathbb{I}\{\cE, \overline{V}_t\notin \mathcal{V}\}]+2\epsilon_*T\label{general bound}\\
    &=T_0+2\EE [\mathbb{I}\{\cE\}\sum_{t=T_0+1}^{T}C_{a_t,t}]+\frac{2\epsilon_*\sqrt{2d}}{\tilde{\lambda}_x^{\frac{3}{2}}}\sum_{t=T_0+1}^{T}\PP(\mathbb{I}\{\cE, \overline{V}_t\notin \mathcal{V}\})+2\epsilon_*T\nonumber\\
    &\leq T_0+2\EE [\mathbb{I}\{\cE\}\sum_{t=T_0+1}^{T}C_{a_t,t}]+\frac{2\epsilon_*\sqrt{2d}}{\tilde{\lambda}_x^{\frac{3}{2}}}\times\frac{\tilde{u}}{u}T+2\epsilon_*T\label{regret parts}\,,
\end{align}
where Eq.(\ref{general bound}) follows from Eq.(\ref{bound r_t}). Eq.(\ref{regret parts}) holds since under Assumption \ref{assumption2} about user arrival uniformness and by Definition \ref{def:good partition} of ``good partition", $\PP(\mathbb{I}\{\cE, \overline{V}_t\notin \mathcal{V}\})\leq\frac{\tilde{u}}{u}, \forall{t\geq T_0}$, where $\tilde{u}$ is defined in Definition \ref{def:user number}.

Then we need to bound $\EE [\mathbb{I}\{\cE\}\sum_{t=T_0+1}^{T}C_{a_t,t}]$:
\begin{align}
    \mathbb{I}\{\cE\}\sum_{t=T_0+1}^{T}C_{a_t,t}&=\Big(\sqrt{\lambda}+\sqrt{2\log(\frac{1}{\delta})+d\log(1+\frac{T}{\lambda d})}\Big)\mathbb{I}\{\cE\}\sum_{t=T_0+1}^{T}\norm{\bx_{a_t}}_{\overline{\bM}_{\overline{V}_t,t-1}^{-1}}\nonumber\\
    \quad&+\mathbb{I}\{\cE\}\epsilon_*\sum_{t=T_0+1}^{T}\sum_{s\in[t-1]\atop i_s\in \overline{V}_t}\left|\bx_{a_t}^{\top}\overline{\bM}_{\overline{V}_t,t-1}^{-1}\bx_{a_s}\right|\,.
    \label{sum C}
\end{align}

Next, we bound the $\mathbb{I}\{\cE\}\sum_{t=T_0+1}^{T}\norm{\bx_{a_t}}_{\overline{\bM}_{\overline{V}_t,t-1}^{-1}}$ term in Eq.(\ref{sum C}):
\begin{align}
    \mathbb{I}\{\cE\}\sum_{t=T_0+1}^{T}\norm{\bx_{a_t}}_{\overline{\bM}_{\overline{V}_t,t-1}^{-1}}&=\mathbb{I}\{\cE\}\sum_{t=T_0+1}^{T}\sum_{k=1}^{m_t}\mathbb{I}\{i_t\in \tilde{V}^{\prime}_{t,k}\}\norm{\bx_{a_t}}_{\overline{\bM}_{\overline{V}^{\prime}_{t,k},t-1}^{-1}}\nonumber\\
    &\leq\mathbb{I}\{\cE\}\sum_{t=T_0+1}^{T}\sum_{j=1}^{m}\mathbb{I}\{i_t\in V_j\}\norm{\bx_{a_t}}_{\overline{\bM}_{V_j,t-1}^{-1}}\label{cluster inequality}\\
    &\leq \mathbb{I}\{\cE\}\sum_{j=1}^{m}\sqrt{\sum_{t=T_0+1}^{T}\mathbb{I}\{i_t\in V_j\}\sum_{t=T_0+1}^{T}\mathbb{I}\{i_t\in V_j\}\norm{\bx_{a_t}}_{\overline{\bM}_{V_j,t-1}^{-1}}^2}\label{cauchy1}\\
    &\leq \mathbb{I}\{\cE\}\sum_{j=1}^{m}\sqrt{2T_{V_j,T}d\log (1+\frac{T}{\lambda d})}\label{lemma7 use1}\\
    &\leq \mathbb{I}\{\cE\}\sqrt{2\sum_{j=1}^{m}1\sum_{j=1}^{m}T_{V_j,T}d\log (1+\frac{T}{\lambda d})}    = \mathbb{I}\{\cE\}\sqrt{2mdT\log(1+\frac{T}{\lambda d})}\label{cauchy2}\,,
\end{align}

where we use $m_t$ to denote the number of connected components partitioned by the algorithm at $t$, $\tilde{V}^{\prime}_{t,k}, k\in [m_t]$ to denote the connected components partitioned by the algorithm at $t$, $\overline{V}^{\prime}_{t,k}\subseteq \tilde{V}^{\prime}_{t,k}$ to denote the subset extracted to be the cluster $\overline{V}_t$ for $i_t$ from $\tilde{V}^{\prime}_{t,k}$ conditioned on $i_t\in \tilde{V}^{\prime}_{t,k}$, and $T_{V_j,T}$ to denote the number of times that the served users lie in the \gtcluster{} $V_j$ up to time $T$, i.e., $T_{V_j,T}=\sum_{t\in[T]}\mathbb{I}\{i_t\in V_j\}$.

The reasons for having Eq.(\ref{cluster inequality}) are as follows. Under event $\cE$, the algorithm will always have a ``good partition" after $T_0$. By Definition \ref{def:good partition} and the proof process of Lemma \ref{sufficient time} about the edge deletion conditions, we can get $m_t\leq m$ and if $i_t\in \tilde{V}^{\prime}_{t,k}, i_t\in V_j$, then $V_j \subseteq \overline{V}^{\prime}_{t,k}$ since $\overline{V}^{\prime}_{t,k}$ contains $V_j$ and possibly other \gtclusters{} $V_n,n\in[m]$, whose preference vectors are $\zeta$-close to $\btheta^{j}$. Therefore, by the definition of the regularized Gramian matrix, we can get $M_{\overline{V}^{\prime}_{t,k},t-1}\succeq M_{V_j,t-1},\forall{t\geq T_0+1}$. Thus by the above reasoning, $\sum_{k=1}^{m_t}\mathbb{I}\{i_t\in \tilde{V}^{\prime}_{t,k}\}\norm{\bx_{a_t}}_{\overline{\bM}_{\overline{V}^{\prime}_{t,k},t-1}^{-1}}\leq \sum_{j=1}^{m}\mathbb{I}\{i_t\in V_j\}\norm{\bx_{a_t}}_{\overline{\bM}_{V_j,t-1}^{-1}}, \forall{t\geq T_0+1}$.
Eq.(\ref{cauchy1}) holds by the Cauchy–Schwarz inequality; Eq.(\ref{lemma7 use1}) follows by the following technical Lemma \ref{technical lemma6}. Eq.(\ref{cauchy2}) is from the Cauchy–Schwarz inequality and the fact that $\sum_{j=1}^{m}T_{V_j,T}=T$.

We then bound the last term in Eq.(\ref{sum C}):

    \begin{align}
    \mathbb{I}\{\cE\}\epsilon_*\sum_{t=T_0+1}^{T}\sum_{s\in[t-1]\atop i_s\in \overline{V}_t}\left|\bx_{a_t}^{\top}\overline{\bM}_{\overline{V}_t,t-1}^{-1}\bx_{a_s}\right|
    &=\mathbb{I}\{\cE\}\epsilon_*\sum_{t=T_0+1}^{T}\sum_{k=1}^{m_t}\mathbb{I}\{i_t\in \tilde{V}^{\prime}_{t,k}\}\sum_{s\in[t-1]\atop i_s\in \overline{V}^{\prime}_{t,k}}\left|\bx_{a_t}^{\top}\overline{\bM}_{\overline{V}^{\prime}_{t,k},t-1}^{-1}\bx_{a_s}\right|\nonumber\\
    &\leq \mathbb{I}\{\cE\}\epsilon_*\sum_{t=T_0+1}^{T}\sum_{k=1}^{m_t}\mathbb{I}\{i_t\in \tilde{V}^{\prime}_{t,k}\}\sqrt{\sum_{s\in[t-1]\atop i_s\in \overline{V}^{\prime}_{t,k}}1\sum_{s\in[t-1]\atop i_s\in \overline{V}^{\prime}_{t,k}}\left|\bx_{a_t}^{\top}\overline{\bM}_{\overline{V}^{\prime}_{t,k},t-1}^{-1}\bx_{a_s}\right|^2}\label{cauchy 3}\\
    &\leq \mathbb{I}\{\cE\}\epsilon_*\sum_{t=T_0+1}^{T}\sum_{k=1}^{m_t}\mathbb{I}\{i_t\in \tilde{V}^{\prime}_{t,k}\}\sqrt{T_{\overline{V}^{\prime}_{t,k},t-1}\norm{\bx_{a_t}}_{\overline{\bM}_{\overline{V}^{\prime}_{t,k},t-1}^{-1}}^2}\label{psd}\\
    &\leq \mathbb{I}\{\cE\}\epsilon_*\sum_{t=T_0+1}^{T}\sqrt{\sum_{k=1}^{m_t}\mathbb{I}\{i_t\in \tilde{V}^{\prime}_{t,k}\}\sum_{k=1}^{m_t}\mathbb{I}\{i_t\in \tilde{V}^{\prime}_{t,k}\}T_{\overline{V}^{\prime}_{t,k},t-1}\norm{\bx_{a_t}}_{\overline{\bM}_{\overline{V}^{\prime}_{t,k},t-1}^{-1}}^2}\label{cauchy 4}\\
    &\leq \mathbb{I}\{\cE\}\epsilon_*\sqrt{T}\sum_{t=T_0+1}^{T} \sqrt{\sum_{k=1}^{m_t}\mathbb{I}\{i_t\in \tilde{V}^{\prime}_{t,k}\}\norm{\bx_{a_t}}_{\overline{\bM}_{\overline{V}^{\prime}_{t,k},t-1}^{-1}}^2}\label{bound t by T}\\
    &\leq \mathbb{I}\{\cE\}\epsilon_*\sqrt{T}\sqrt{\sum_{t=T_0+1}^{T}1\sum_{t=T_0+1}^{T}\sum_{k=1}^{m_t}\mathbb{I}\{i_t\in \tilde{V}^{\prime}_{t,k}\}\norm{\bx_{a_t}}_{\overline{\bM}_{\overline{V}^{\prime}_{t,k},t-1}^{-1}}^2}\label{cauchy 5}\\
    &\leq \mathbb{I}\{\cE\}\epsilon_*\sqrt{T}\sqrt{T\sum_{t=T_0+1}^{T}\sum_{j=1}^{m}\mathbb{I}\{i_t\in V_j\}\norm{\bx_{a_t}}_{\overline{\bM}_{V_j,t-1}^{-1}}^2}\label{cluster inequality 1}\\
    &=\mathbb{I}\{\cE\}\epsilon_*T\sqrt{\sum_{j=1}^{m}\sum_{t=T_0+1}^{T}\mathbb{I}\{i_t\in V_j\}\norm{\bx_{a_t}}_{\overline{\bM}_{V_j,t-1}^{-1}}^2}\notag\\
    &\leq \mathbb{I}\{\cE\}\epsilon_*T\sqrt{2md\log (1+\frac{T}{\lambda d})}\label{lemma7 use2}\,,
\end{align}

where Eq.(\ref{cauchy 3}), Eq.(\ref{cauchy 4}) and Eq.(\ref{cauchy 5}) hold because of the Cauchy–Schwarz inequality, Eq.(\ref{psd}) holds since $\overline{\bM}_{\overline{V}^{\prime}_{t,k},t-1}\succeq\sum_{s\in[t-1]\atop i_s\in \overline{V}^{\prime}_{t,k}}\bx_{a_s}\bx_{a_s}^{\top}$, Eq.(\ref{bound t by T}) is because $T_{\overline{V}^{\prime}_{t,k},t-1}\leq T$, Eq. (\ref{cluster inequality 1}) follows from the same reasoning as Eq.(\ref{cluster inequality}), and Eq.(\ref{lemma7 use2}) comes from the following technical Lemma \ref{technical lemma6}.

Finally, plugging Eq.(\ref{cauchy2}) and Eq.(\ref{lemma7 use2}) into Eq.(\ref{sum C}), take expectation and plug it into Eq.(\ref{regret parts}), we can get:
\begin{align}
R(T) \le & 5+T_0+\frac{\tilde{u}}{u}\times\frac{2\epsilon_*\sqrt{2d}T}{\tilde{\lambda}_x^{\frac{3}{2}}}+2\epsilon_*T\bigg(1+\sqrt{2md\log(1+\frac{T}{\lambda d})}\bigg)\notag\\
&+2\bigg(\sqrt{\lambda}+\sqrt{2\log(T)+d\log(1+\frac{T}{\lambda d})}\bigg)\times\sqrt{2mdT\log(1+\frac{T}{\lambda d})}\,,
\end{align}
where
\begin{equation*}
        T_0= 16u\log(\frac{u}{\delta})+4u\max\max\{
        \frac{8d}{\tilde{\lambda}_x(\frac{\gamma_1}{4}-\epsilon_*\sqrt{\frac{1}{2\tilde{\lambda}_x}})^2}\log(\frac{u}{\delta}),\frac{16}{\tilde{\lambda}_x^2}\log(\frac{8d}{\tilde{\lambda}_x^2\delta})\}
\end{equation*}
is given in the following Lemma \ref{sufficient time} in Appendix \ref{section T0}.
\section{Proof and Discussions of Theorem \ref{thm: lower bound}}\label{appendix: lower bound}
In the work \cite{lattimore2020learning}, they give a lower bound for misspecified linear bandits with a single user. The lower bound of $R(T)$ is given by:
$R_3(T)\geq \epsilon_*T\sqrt{d}$. Therefore, suppose our problem with multiple users and $m$ underlying clusters where the arrival times are $T_i$ for each cluster, then for any algorithms, even if they know the underlying clustering structure and keep $m$ independent linear bandit algorithms to leverage the common information of clusters, the best they can get is $R(T)=\sum_{i \in [m]}R_3(T_i)\geq \epsilon_*\sum_{i \in [m]}T_i\sqrt{d}=\epsilon_*T\sqrt{d}$, which gives a lower bound of $O(\epsilon_*T\sqrt{d})$ for the CBMUM problem. Recall that the regret upper bound of our algorithms is of $O(\epsilon_*T\sqrt{md\log T}  + d\sqrt{mT}\log T)$, asymptotically matching this lower bound with respect to $T$ up to logarithmic factors and with respect to $m$ up to $O(\sqrt{m})$ factors, showing the tightness of our theoretical results (where 
 $m$ are typically very small for real applications).
 
 We conjecture that the gap for the $m$ factor is due to the strong assumption that cluster structures are known to prove our lower bound, and whether there exists a tighter lower bound will be left for future work.

\section{Proof of the key Lemma \ref{bound for mis}}\label{key lemma appendix}
In Lemma \ref{bound for mis}, we want to bound the term $\left|\bx_a^{\top}\overline{\bM}_{\overline{V}_t,t-1}^{-1}\sum_{s\in[t-1]\atop i_s\in \overline{V}_t}\bx_{a_s}\bx_{a_s}^{\top}(\btheta_{i_s}-\btheta_{i_t})\right|$. By the definition of ``good partition", we have  $\norm{\btheta_{i_s}-\btheta_{i_t}}_2\leq\zeta\,,\forall{i_s\in \overline{V}_t}$. It is an easy-to-be-made mistake to directly drag $\norm{\btheta_{i_s}-\btheta_{i_t}}_2$ out to upper bound it by $\norm{\bx_a^{\top}\overline{\bM}_{\overline{V}_t,t-1}^{-1}\sum_{s\in[t-1]\atop i_s\in \overline{V}_t}\bx_{a_s}\bx_{a_s}^{\top}}_2\times\zeta$ and then proceed. We need more careful analysis.

We first prove the following general lemma.
\begin{lemma}
\label{technical lemma 5}
For vectors $\bx_1,\bx_2,\ldots,\bx_k\in\RR^d$,$\norm{\bx_i}_2\leq1,\forall{i\in[k]}$, and vectors $\btheta_1,\btheta_2,\ldots,\btheta_k\in\RR^d,\norm{\btheta_i}_2\leq C,\forall{i\in[k]}$, where $C>0$ is a constant, we have:
\begin{equation}
    \norm{\sum_{i=1}^{k}\bx_i\bx_i^{\top}\btheta_i}_2\leq C\sqrt{d}\norm{\sum_{i=1}^{k}\bx_i\bx_i^{\top}}_2\,.\nonumber
\end{equation}
\end{lemma}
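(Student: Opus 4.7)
The plan is to establish the bound via the variational characterization of the operator norm. I will pick an arbitrary unit vector $\bu \in \RR^d$ and bound $\bu^\top \sum_{i=1}^k \bx_i \bx_i^\top \btheta_i = \sum_{i=1}^k (\bu^\top \bx_i)(\bx_i^\top \btheta_i)$; taking the supremum over such $\bu$ will give the norm on the left-hand side.

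The first step is to apply the Cauchy--Schwarz inequality to split this sum as
\begin{equation*}
\left|\sum_{i=1}^k (\bu^\top \bx_i)(\bx_i^\top \btheta_i)\right| \leq \sqrt{\sum_{i=1}^k (\bu^\top \bx_i)^2} \cdot \sqrt{\sum_{i=1}^k (\bx_i^\top \btheta_i)^2}.
\end{equation*}
The first factor is exactly $\sqrt{\bu^\top M \bu}$ where $M = \sum_i \bx_i \bx_i^\top$, and this is at most $\sqrt{\|M\|_2}$ since $\bu$ is a unit vector. For the second factor I will use $|\bx_i^\top \btheta_i| \leq \|\bx_i\|_2 \|\btheta_i\|_2 \leq C \|\bx_i\|_2$ and then recognize that $\sum_i \|\bx_i\|_2^2 = \mathrm{trace}(M)$.

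The final ingredient is the standard inequality $\mathrm{trace}(M) \leq d \|M\|_2$, which holds because the trace is the sum of $M$'s (nonnegative) eigenvalues and each is at most $\|M\|_2$. Combining these pieces yields
\begin{equation*}
\left|\bu^\top \sum_{i=1}^k \bx_i \bx_i^\top \btheta_i\right| \leq \sqrt{\|M\|_2} \cdot \sqrt{C^2 d \|M\|_2} = C\sqrt{d}\,\|M\|_2,
\end{equation*}
and taking the sup over unit $\bu$ completes the proof.

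The only real subtlety is choosing the right place to apply Cauchy--Schwarz: the naive bound $\|\sum_i \bx_i \bx_i^\top \btheta_i\|_2 \leq \sum_i \|\bx_i\|_2 |\bx_i^\top \btheta_i| \leq C\,\mathrm{trace}(M)$ loses a $\sqrt{d}$ factor because one ends up bounding $\mathrm{trace}(M) \leq d\|M\|_2$ directly rather than $\sqrt{\mathrm{trace}(M) \cdot \|M\|_2}$. Pairing $(\bu^\top \bx_i)$ with $(\bx_i^\top \btheta_i)$ before applying Cauchy--Schwarz is what produces the tight $\sqrt{d}$ dependence. No probabilistic reasoning is needed here --- this is a purely deterministic linear-algebra fact that will later be combined with concentration arguments in the main proof of Lemma \ref{bound for mis}.
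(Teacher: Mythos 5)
Your proof is correct and is essentially the paper's argument in dual form: your Cauchy--Schwarz pairing of $(\bu^\top\bx_i)$ with $(\bx_i^\top\btheta_i)$ is exactly the operator-norm step $\norm{\bX\by}_2\le\norm{\bX}_2\norm{\by}_2$ for $\bX=[\bx_1,\ldots,\bx_k]$ and $\by=(\bx_i^\top\btheta_i)_i$ used in the paper, and your $\trace(M)\le d\norm{M}_2$ is the same inequality as the paper's $\norm{\bX}_F^2\le d\norm{\bX}_2^2$. No gaps; the argument matches the paper's proof of Lemma \ref{technical lemma 5}.
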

\newpage
\begin{proof}
Let $\bX\in\RR^{d\times k}$ be a matrix such that it has $\bx_i$ s as its columns, i.e., $\bX=[\bx_1,\ldots,\bx_k]=\begin{bmatrix}  
  \bx_{11}& x_{21}& \cdots  & \bx_{k1} \\  
  \bx_{12}& x_{22}& \cdots  & \bx_{k2} \\  
  \vdots & \vdots & \ddots & \vdots \\  
  \bx_{1d}& x_{2d}& \cdots  & \bx_{kd}  
\end{bmatrix}.$

Let $\by\in\RR^{k\times1}$ be a vector that has $\bx_i^{\top}\btheta_i$ s as its elements, i.e., $\by=[\bx_1^{\top}\btheta_1,\ldots,\bx_k^{\top}\btheta_k]^{\top}$. Then we have:
\begin{align}
    \norm{\sum_{i=1}^{k}\bx_i\bx_i^{\top}\btheta_i}_2^2=\norm{\bX\by}_2^2
    &\leq \norm{\bX}_2^2\norm{\by}_2^2\label{operator norm 4}\\
    &=\norm{\bX}_2^2\sum_{i=1}^{k}(\bx_i^{\top}\btheta_i)^2\nonumber\\
    &\leq\norm{\bX}_2^2\sum_{i=1}^{k}\norm{\bx_i}_2^2\norm{\btheta_i}_2^2\label{cauchy 11}\\
    &\leq C^2\norm{\bX}_2^2\sum_{i=1}^{k}\norm{\bx_i}_2^2\nonumber\\
    &=C^2\norm{\bX}_2^2\norm{\bX}_F^2\nonumber\\
    &\leq C^2d\norm{\bX}_2^4\label{matrix norm inequality}\\
    &=C^2d\norm{\bX\bX^{\top}}_2^2\label{matrix norm equality}\\
    &=C^2d\norm{\sum_{i=1}^{k}\bx_i\bx_i^{\top}}_2^2\,,
\end{align}
where Eq. (\ref{operator norm 4}) follows by the matrix operator norm inequality, Eq. (\ref{cauchy 11}) follows by the Cauchy–Schwarz inequality, Eq. (\ref{matrix norm inequality}) follows by $\norm{\bX}_F\leq\sqrt{d}\norm{\bX}_2$, Eq. (\ref{matrix norm equality}) follows from $\norm{\bX}_2^2=\norm{\bX\bX^{\top}}_2$.
\end{proof}
The above result is tight. We can show that the lower bound of $\norm{\sum_{i=1}^{k}\bx_i\bx_i^{\top}\btheta_i}_2$ under the conditions in the lemma is exactly $C\sqrt{d}\norm{\sum_{i=1}^{k}\bx_i\bx_i^{\top}}_2$. Specifically, let $k=2$, $C=1$, $d=2$, $\bx_1=[0,1]^\top$, $\bx_2=[1,0]^\top$, $\btheta_1=[1,0]^\top$, $\btheta_2=[0,1]^\top$, then we have $\norm{\sum_{i=1}^{2}\bx_i\bx_i^{\top}\btheta_i}_2=\norm{[1,1]^\top}_2=\sqrt{2}$, and $C\sqrt{d}\norm{\sum_{i=1}^{2}\bx_i\bx_i^{\top}}_2=1\times\sqrt{2}\times\norm{\begin{bmatrix}  
  1 & 0 \\  
  0 & 1  
\end{bmatrix}}_2=\sqrt{2}$. Therefore, we have that the upper bound given in Lemma \ref{technical lemma 5} matches the lower bound.

We are now ready to prove the key Lemma \ref{bound for mis} with the above Lemma \ref{technical lemma 5}.


At any $t\geq T_0$, if the current partition is a ``good partition", and $\overline{V}_t\notin\mathcal{V}$, then for all $\bx_a\in \RR^d, \norm{\bx_a}_2\leq 1$, with probability at least $1-\delta$:
\begin{align}
    \left|\bx_a^{\top}\overline{\bM}_{\overline{V}_t,t-1}^{-1}\sum_{s\in[t-1]\atop i_s\in \overline{V}_t}\bx_{a_s}\bx_{a_s}^{\top}(\btheta_{i_s}-\btheta_{i_t})\right|
    &\leq \norm{\bx_a}_2\norm{\overline{\bM}_{\overline{V}_t,t-1}^{-1}\sum_{s\in[t-1]\atop i_s\in \overline{V}_t}\bx_{a_s}\bx_{a_s}^{\top}(\btheta_{i_s}-\btheta_{i_t})}_2\label{cauchy 10}\\
    &\leq \norm{\overline{\bM}_{\overline{V}_t,t-1}^{-1}}_2\norm{\sum_{s\in[t-1]\atop i_s\in \overline{V}_t}\bx_{a_s}\bx_{a_s}^{\top}(\btheta_{i_s}-\btheta_{i_t})}_2\label{operator norm inequality 2}\\
    &\leq 2\epsilon_*\sqrt{\frac{2d}{\tilde{\lambda}_x}}\times\norm{\overline{\bM}_{\overline{V}_t,t-1}^{-1}}_2\norm{\sum_{s\in[t-1]\atop i_s\in \overline{V}_t}\bx_{a_s}\bx_{a_s}^{\top}}_2\label{technical lemma}\\
    &\leq2\epsilon_*\sqrt{\frac{2d}{\tilde{\lambda}_x}}\times\frac{\lambda_{max}(\sum_{s\in[t-1]\atop i_s\in \overline{V}_t}\bx_{a_s}\bx_{a_s}^{\top})}{\lambda_{\text{min}}(\overline{\bM}_{\overline{V}_t,t-1})}\nonumber\\
    &\leq 2\epsilon_*\sqrt{\frac{2d}{\tilde{\lambda}_x}}\times\frac{T_{\overline{V}_t,t-1}}{2T_{\overline{V}_t,t-1}\tilde{\lambda}_x+\lambda}\label{min eigen assumption}\\
    &\leq \frac{\epsilon_*\sqrt{2d}}{\tilde{\lambda}_x^{\frac{3}{2}}}\nonumber\,,
\end{align}
where Eq.(\ref{cauchy 10}) follows by the Cauchy–Schwarz inequality, Eq.(\ref{operator norm inequality 2}) follows from the inequality of matrix's operator norm, Eq.(\ref{technical lemma}) follows from the fact that in a ``good partition", $\norm{\btheta_{i_t}-\btheta_l}_2\leq2\epsilon_*\sqrt{\frac{2}{\tilde{\lambda}_x}},\forall{l\in \overline{V}_t}$ and Lemma \ref{technical lemma 5}, Eq.(\ref{min eigen assumption}) follows by Eq.(\ref{min eigen}) with probability $\geq 1-\delta$.

\section{Lemma \ref{sufficient time} of the sufficient time $T_0$ and its proof}
The following lemma gives a sufficient time $T_0$ for the algorithm to get a ``good partition".\label{section T0}
\begin{lemma}
\label{sufficient time}
With the carefully designed edge deletion rule, after 
\begin{equation*}
    \begin{aligned}
        T_0&\triangleq 16u\log(\frac{u}{\delta})+4u\max\max\{
        \frac{8d}{\tilde{\lambda}_x(\frac{\gamma_1}{4}-\epsilon_*\sqrt{\frac{1}{2\tilde{\lambda}_x}})^2}\log(\frac{u}{\delta}),\frac{16}{\tilde{\lambda}_x^2}\log(\frac{8d}{\tilde{\lambda}_x^2\delta})\}\\
        &=O\bigg(u\left( \frac{d}{\tilde{\lambda}_x (\gamma_1-\zeta)^2}+\frac{1}{\tilde{\lambda}_x^2}\right)\log \frac{1}{\delta}\bigg)
    \end{aligned}
\end{equation*}
rounds, with probability at least $1-3\delta$ for some $\delta\in(0,\frac{1}{3})$, RCLUMB can always get a ``good partition".
\end{lemma}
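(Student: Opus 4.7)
The plan is to establish the sufficient time $T_0$ via a three-step concentration argument, then combine with the edge-deletion rule to conclude that after $T_0$ the algorithm's graph yields a good partition. All three steps will be combined via a union bound, each contributing one factor of $\delta$ to the final $3\delta$.

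First, I would lower bound each user's visit count. Under Assumption \ref{assumption2}, $T_{i,t}$ is a sum of $t$ i.i.d.\ Bernoulli$(1/u)$ indicators, so a Chernoff bound yields that with probability $\geq 1-\delta$, for every user $i$ and every $t \geq 16u\log(u/\delta)$ one has $T_{i,t} \geq t/(4u)$. This absorbs the $16u\log(u/\delta)$ additive term in $T_0$. Second, I would lower bound $\lambda_{\min}(\bM_{i,t})$. By Assumption \ref{assumption3}, the covariance of the arm chosen from $\mathcal{A}_t$ (the $\max$ over a UCB rule among $|\mathcal{A}_t|\leq C$ draws from $\rho$) has minimum eigenvalue at least $\tilde{\lambda}_x$; applying a matrix Chernoff / Bernstein inequality gives $\lambda_{\min}(\bM_{i,t}) \geq T_{i,t}\tilde{\lambda}_x/2$ with probability $\geq 1-\delta$ once $T_{i,t} \geq (16/\tilde{\lambda}_x^2)\log(8d/(\tilde{\lambda}_x^2\delta))$, which accounts for the $1/\tilde{\lambda}_x^2$ term of $T_0$.

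Third, I would bound $\|\hat{\btheta}_{i,t}-\btheta^{j(i)}\|_2$. Decomposing the ridge estimator gives three contributions: a $\lambda$-regularization term, a sub-Gaussian noise term bounded by the standard self-normalized martingale bound $\beta / \sqrt{\lambda_{\min}(\overline{\bM}_{i,t})} = O(\alpha_1 f(T_{i,t}))$ after plugging in Step 2, and a deterministic misspecification term $(\lambda\bI+\bM_{i,t})^{-1}\sum_{s}\bx_{a_s}\bepsilon^{i,s}_{a_s}$. Using Lemma \ref{technical lemma 5} to extract a $\sqrt{d}$ factor (rather than the naive $T_{i,t}$ factor) and then dividing by the minimum eigenvalue from Step 2, this bias is bounded by $\tfrac{1}{2}\alpha_2\epsilon_*$ for appropriate $\alpha_2$. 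Combining,
\begin{equation*}
    \|\hat{\btheta}_{i,t}-\btheta^{j(i)}\|_2 \leq \alpha_1 f(T_{i,t}) + \tfrac{1}{2}\alpha_2\epsilon_*\,,
\end{equation*}
and the factor $d/(\tilde{\lambda}_x(\gamma_1-\zeta)^2)$ in $T_0$ is calibrated so that $\alpha_1 f(T_{i,t}) \leq (\gamma_1-\zeta)/8$ once $t\geq T_0$.

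Once this per-user concentration is in hand, the edge rule in Line \ref{alg:RCLUMB:delete} is easy to verify. For $i,\ell$ in the same ground-truth cluster, the triangle inequality gives $\|\hat{\btheta}_{i,t}-\hat{\btheta}_{\ell,t}\|_2 \leq \alpha_1(f(T_{i,t})+f(T_{\ell,t})) + \alpha_2\epsilon_*$, so the edge $(i,\ell)$ is never deleted. For $i,\ell$ whose true gap exceeds $\zeta = 2\epsilon_*\sqrt{2/\tilde{\lambda}_x}$ (hence by Definition \ref{def:gap} is at least $\gamma_1$), the reverse triangle inequality together with the estimation bound implies the deletion condition is triggered, so the edge is removed. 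Thus after $T_0$, the $1$-hop neighborhood $\overline{V}_t$ of the current user $i_t$ contains all of $V_{j(i_t)}$ and only users within distance $\zeta$ of $i_t$, which is exactly a good partition in the sense of Definition \ref{def:good partition}. The hardest part is the misspecification-bias bound in Step 3: the naive estimate $\|(\lambda\bI+\bM_{i,t})^{-1}\sum_s \bx_{a_s}\bepsilon^{i,s}_{a_s}\|_2 \leq \epsilon_* T_{i,t}/\lambda_{\min}$ is vacuous, and one must invoke the tighter Lemma \ref{technical lemma 5}-type inequality to obtain the $\epsilon_*\sqrt{d/\tilde{\lambda}_x}$ asymptotic bias; this irreducible bias is precisely why $\gamma_1-\zeta$ (rather than $\gamma_1$) controls the hardness of clustering in CBMUM.
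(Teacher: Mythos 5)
Your overall skeleton matches the paper's proof of Lemma \ref{sufficient time}: a Chernoff-type bound on visit counts that absorbs the $16u\log(u/\delta)$ term, a minimum-eigenvalue bound $\lambda_{\min}(\lambda\bI+\bM_{i,t})\geq 2\tilde{\lambda}_x T_{i,t}+\lambda$ once $T_{i,t}\geq\frac{16}{\tilde{\lambda}_x^2}\log(\frac{8d}{\tilde{\lambda}_x^2\delta})$, a three-term decomposition of the ridge estimator (regularization, noise, misspecification), and the two-sided triangle-inequality argument showing same-cluster edges survive while edges across gaps exceeding $\gamma_1>\zeta$ are deleted. The union bound over these three events yielding $1-3\delta$ is also as in the paper.

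The genuine problem is in your Step 3, the misspecification-bias term. Lemma \ref{technical lemma 5} bounds quantities of the form $\norm{\sum_i\bx_i\bx_i^{\top}\btheta_i}_2$; the bias term here is $\tilde{\bM}_{i,t}^{-1}\sum_s\bx_{a_s}\bepsilon_{a_s}^{i_s,s}$ with \emph{scalar} deviations $\bepsilon_{a_s}^{i_s,s}$, which is not of that form, so that lemma does not apply (the paper uses it only later, in Lemma \ref{bound for mis}, for the genuinely quadratic term $\sum_s\bx_{a_s}\bx_{a_s}^{\top}(\btheta_{i_s}-\btheta_{i_t})$). Moreover, the bound you claim, $O(\epsilon_*\sqrt{d/\tilde{\lambda}_x})$, is dimension-dependent and too weak for the stated lemma: it would force the residual bias, hence $\zeta$, to scale with $\sqrt{d}$, contradicting the dimension-free $\zeta=2\epsilon_*\sqrt{2/\tilde{\lambda}_x}$, the dimension-free deletion slack $\alpha_2\epsilon_*$ in Line \ref{alg:RCLUMB:delete}, and the quantity $(\frac{\gamma_1}{4}-\epsilon_*\sqrt{\frac{1}{2\tilde{\lambda}_x}})^2$ appearing in the stated $T_0$. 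The correct argument is an elementary Cauchy--Schwarz step: writing the norm as a supremum over unit vectors $\bx$ and using $\tilde{\bM}_{i,t}\succeq\sum_s\bx_{a_s}\bx_{a_s}^{\top}$,
\begin{equation*}
\textstyle{\sum_s\left|\bx^{\top}\tilde{\bM}_{i,t}^{-1}\bx_{a_s}\right|\leq\sqrt{T_{i,t}}\,\sqrt{\bx^{\top}\tilde{\bM}_{i,t}^{-1}\bx}\,,}
\end{equation*}
so the bias is at most $\epsilon_*\sqrt{T_{i,t}}/\sqrt{\lambda_{\min}(\tilde{\bM}_{i,t})}$, which converges to $\epsilon_*\sqrt{1/(2\tilde{\lambda}_x)}$ with no $\sqrt{d}$. (Your remark that the ``naive'' estimate $\epsilon_* T_{i,t}/\lambda_{\min}$ is vacuous is also not quite right: with $\lambda_{\min}\gtrsim\tilde{\lambda}_x T_{i,t}$ it gives the constant $\epsilon_*/\tilde{\lambda}_x$, merely weaker than $\epsilon_*/\sqrt{2\tilde{\lambda}_x}$.) With the bias term corrected in this way, the remainder of your argument goes through as in the paper.
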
 

Below is the detailed proof of Lemma \ref{sufficient time}.
\begin{proof}
We first prove the following result:\\
With probability at least $1-\delta$ for some $\delta\in(0,1)$, at any $t\in[T]$:
\begin{equation}
    \norm{\hat{\btheta}_{i,t}-\btheta^{j(i)}}_2\leq\frac{\beta(T_{i,t},\frac{\delta}{u})+\epsilon_*\sqrt{T_{i,t}}}{\sqrt{\lambda+\lambda_{\text{min}}(\bM_{i,t})}}, \forall{i\in\mathcal{U}}\label{norm difference bound}\,,
\end{equation}
where $\beta(T_{i,t},\frac{\delta}{u})\triangleq\sqrt{\lambda}+\sqrt{2\log(\frac{u}{\delta})+d\log(1+\frac{T_{i,t}}{\lambda d})}$.
\begin{align}
    \hat{\btheta}_{i,t}-\btheta^{j(i)}&=(\sum_{s\in[t]\atop i_s=i}\bx_{a_s}\bx_{a_s}^{\top}+\lambda \bI)^{-1}\bigg(\sum_{s\in[t]\atop i_s=i}\bx_{a_s}(\bx_{a_s}^{\top}\btheta^{j(i)}+\bepsilon_{a_s}^{i_s,s}+\eta_s)\bigg)-\btheta^{j(i)}\label{by definition}\\
    &=(\sum_{s\in[t]\atop i_s=i}\bx_{a_s}\bx_{a_s}^{\top}+\lambda \bI)^{-1}[(\sum_{s\in[t]\atop i_s=i}\bx_{a_s}\bx_{a_s}^{\top}+\lambda\bI)\btheta^{j(i)}-\lambda\btheta^{j(i)}+\sum_{s\in[t]\atop i_s=i}\bx_{a_s}\bepsilon_{a_s}^{i_s,s}+\sum_{s\in[t]\atop i_s=i}\bx_{a_s}\eta_s]-\btheta^{j(i)}\nonumber\\
    &=-\lambda\tilde{\bM}_{i,t}^{-1}\btheta^{j(i)}+\tilde{\bM}_{i,t}^{-1}\sum_{s\in[t]\atop i_s=i}\bx_{a_s}\bepsilon_{a_s}^{i_s,s}+\tilde{\bM}_{i,t}^{-1}\sum_{s\in[t]\atop i_s=i}\bx_{a_s}\eta_s\nonumber\,,
\end{align}
where we denote $\tilde{\bM}_{i,t}=\bM_{i,t}+\lambda\bI$, and Eq.(\ref{by definition}) holds by definition.

Therefore, \begin{equation}
    \norm{\hat{\btheta}_{i,t}-\btheta^{j(i)}}_2\leq\lambda\norm{\tilde{\bM}_{i,t}^{-1}\btheta^{j(i)}}_2+\norm{\tilde{\bM}_{i,t}^{-1}\sum_{s\in[t]\atop i_s=i}\bx_{a_s}\bepsilon_{a_s}^{i_s,s}}_2+\norm{\tilde{\bM}_{i,t}^{-1}\sum_{s\in[t]\atop i_s=i}\bx_{a_s}\eta_s}_2\,.\label{difference bound parts}
\end{equation}

We then bound the three terms in Eq.(\ref{difference bound parts}) one by one. For the first term:
\begin{equation}
    \lambda\norm{\tilde{\bM}_{i,t}^{-1}\btheta^{j(i)}}_2\leq \lambda\norm{\tilde{\bM}_{i,t}^{-\frac{1}{2}}}_2^2\norm{\btheta^{j(i)}}_2\leq\frac{\sqrt{\lambda}}{\sqrt{\lambda_{\text{min}}(\tilde{\bM}_{i,t})}}\label{first term}\,,
\end{equation}
where we use the Cauchy–Schwarz inequality, the inequality for the operator norm of matrices, and the fact that $\lambda_{\text{min}}(\tilde{\bM}_{i,t})\geq\lambda$.

For the second term in Eq.(\ref{difference bound parts}):
\begin{align}
    \norm{\tilde{\bM}_{i,t}^{-1}\sum_{s\in[t]\atop i_s=i}\bx_{a_s}\bepsilon_{a_s}^{i_s,s}}_2
    &=\max_{\bx\in S^{d-1}}\sum_{s\in[t]\atop i_s=i}\bx^{\top}\tilde{\bM}_{i,t}^{-1}\bx_{a_s}\bepsilon_{a_s}^{i_s,s}\nonumber\\
    &\leq \max_{\bx\in S^{d-1}}\sum_{s\in[t]\atop i_s=i}\left|\bx^{\top}\tilde{\bM}_{i,t}^{-1}\bx_{a_s}\bepsilon_{a_s}^{i_s,s}\right|\nonumber\\
    &\leq \max_{\bx\in S^{d-1}}\sum_{s\in[t]\atop i_s=i}\left|\bx^{\top}\tilde{\bM}_{i,t}^{-1}\bx_{a_s}\right|\norm{\bepsilon_{a_s}^{i_s,s}}_{\infty}\label{holders1}\\
    &\leq \epsilon_*\max_{\bx\in S^{d-1}}\sum_{s\in[t]\atop i_s=i}\left|\bx^{\top}\tilde{\bM}_{i,t}^{-1}\bx_{a_s}\right|\nonumber\\
    &\leq\epsilon_*\max_{\bx\in S^{d-1}}\sqrt{\sum_{s\in[t]\atop i_s=i}1\sum_{s\in[t]\atop i_s=i}\left|\bx^{\top}\tilde{\bM}_{i,t}^{-1}\bx_{a_s}\right|^2}\label{cauchy 6}\\
    &\leq \epsilon_*\sqrt{T_{i,t}}\sqrt{\max_{\bx\in S^{d-1}} \bx^{\top}\tilde{\bM}_{i,t}^{-1}\bx}\label{psd1}\\
    &=\frac{\epsilon_*\sqrt{T_{i,t}}}{\sqrt{\lambda_{\text{min}}(\tilde{\bM}_{i,t})}}\label{Courant–Fischer}\,,
\end{align}
where we denote $S^{d-1}=\{\bx\in\RR^d:\norm{\bx}_2=1\}$, Eq.(\ref{holders1}) follows from Holder's inequality, Eq.(\ref{cauchy 6}) follows by the Cauchy–Schwarz inequality, Eq.(\ref{psd1}) holds because $\tilde{\bM}_{i,t}\succeq\sum_{s\in[t]\atop i_s=i}\bx_{a_s}\bx_{a_s}^{\top}$, Eq.(\ref{Courant–Fischer}) follows from the Courant-Fischer theorem.

For the last term in Eq.(\ref{difference bound parts})
\begin{align}
    \norm{\tilde{\bM}_{i,t}^{-1}\sum_{s\in[t]\atop i_s=i}\bx_{a_s}\eta_s}_2
    &\leq\norm{\tilde{\bM}_{i,t}^{-\frac{1}{2}}\sum_{s\in[t]\atop i_s=i}\bx_{a_s}\eta_s}_2\norm{\tilde{\bM}_{i,t}^{-\frac{1}{2}}}_2\label{operator norm}\\
    &=\frac{\norm{\sum_{s\in[t]\atop i_s=i}\bx_{a_s}\eta_s}_{\tilde{\bM}_{i,t}^{-1}}}{\sqrt{\lambda_{\text{min}}(\tilde{\bM}_{i,t})}}\label{Courant–Fischer1}\,,
\end{align}
where Eq.(\ref{operator norm}) follows by the Cauchy–Schwarz inequality and the inequality for the operator norm of matrices, and Eq.(\ref{Courant–Fischer1}) follows by the Courant-Fischer theorem.

Following Theorem 1 in \cite{abbasi2011improved}, with probability at least $1-\delta$ for some $\delta\in(0,1)$, for any $i\in\mathcal{U}$, we have:
\begin{align}
    \norm{\sum_{s\in[t]\atop i_s=i}\bx_{a_s}\eta_s}_{\tilde{\bM}_{i,t}^{-1}}
    &\leq\sqrt{2\log(\frac{u}{\delta})+\log(\frac{\text{det}(\tilde{\bM}_{i,t})}{\text{det}(\lambda\bI)})}\nonumber\\
    &\leq \sqrt{2\log(\frac{u}{\delta})+d\log(1+\frac{T_{i,t}}{\lambda d})}\label{det inequality}\,,
\end{align}
where $\text{det}(\bM)$ denotes the determinant of matrix $\bM$, Eq.(\ref{det inequality}) is because $\text{det}(\tilde{\bM}_{i,t})\leq\Bigg(\frac{\text{trace}(\lambda\bI+\sum_{s\in[t]\atop i_s=i}\bx_{a_s}\bx_{a_s}^{\top})}{d}\Bigg)^d\leq\big(\frac{\lambda d+T_{i,t}}{d}\big)^d$, and $\text{det}(\lambda\bI)=\lambda^d$.

Plugging Eq.(\ref{det inequality}) into Eq. (\ref{Courant–Fischer1}), then plugging Eq. (\ref{first term}), Eq.(\ref{Courant–Fischer}) and Eq.(\ref{Courant–Fischer1}) into Eq.(\ref{difference bound parts}), we can get that Eq.(\ref{norm difference bound}) holds with probability $\geq 1-\delta$.

Then, with the item regularity assumption stated in Assumption \ref{assumption3}, the technical Lemma \ref{assumption}, together with Lemma 7 in \cite{li2018online}, with probability at least $1-\delta$, for a particular user $i$, at any $t$ such that $T_{i,t}\geq\frac{16}{\tilde{\lambda}_x^2}\log(\frac{8d}{\tilde{\lambda}_x^2\delta})$, we have:
\begin{equation}
    \lambda_{\text{min}}(\tilde{\bM}_{i,t})\geq2\tilde{\lambda}_x T_{i,t}+\lambda\,.
    \label{min eigen}
\end{equation}

Based on the above reasoning, we have: if $T_{i,t}\geq\frac{16}{\tilde{\lambda}_x^2}\log(\frac{8d}{\tilde{\lambda}_x^2\delta})$, then with probability $\geq 1-2\delta$, we have:
\begin{align}
    \norm{\hat{\btheta}_{i,t}-\btheta^{j(i)}}_2
    &\leq\frac{\beta(T_{i,t},\frac{\delta}{u})+\epsilon_*\sqrt{T_{i,t}}}{\sqrt{\lambda_{\text{min}}(\tilde{\bM}_{i,t})}}\nonumber\\
    &\leq\frac{\beta(T_{i,t},\frac{\delta}{u})+\epsilon_*\sqrt{T_{i,t}}}{\sqrt{2\tilde{\lambda}_x T_{i,t}+\lambda}}\nonumber\\
    &\leq\frac{\sqrt{\lambda}+\sqrt{2\log(\frac{u}{\delta})+d\log(1+\frac{T_{i,t}}{\lambda d})}}{\sqrt{2\tilde{\lambda}_x T_{i,t}+\lambda}}+\epsilon_*\sqrt{\frac{1}{2\tilde{\lambda}_x}}\,,
\end{align}
for any $i\in\mathcal{U}$.

Let 
\begin{equation}
    \frac{\sqrt{\lambda}+\sqrt{2\log(\frac{u}{\delta})+d\log(1+\frac{T_{i,t}}{\lambda d})}}{\sqrt{2\tilde{\lambda}_x T_{i,t}+\lambda}}+\epsilon_*\sqrt{\frac{1}{2\tilde{\lambda}_x}}<\frac{\gamma_1}{4}\,,
    \label{init condition}
\end{equation}
which is equivalent to
\begin{equation}
    \frac{\sqrt{\lambda}+\sqrt{2\log(\frac{u}{\delta})+d\log(1+\frac{T_{i,t}}{\lambda d})}}{\sqrt{2\tilde{\lambda}_x T_{i,t}+\lambda}}<\frac{\gamma_1}{4}-\epsilon_*\sqrt{\frac{1}{2\tilde{\lambda}_x}}\label{condition gamma/4}\,,
\end{equation}
where $\gamma_1$ is given in Definition \ref{def:gap}.

Assume $\lambda\leq2\log(\frac{u}{\delta})+d\log(1+\frac{T_{i,t}}{\lambda d})$, which is typically held, then a sufficient condition for Eq. (\ref{condition gamma/4}) is:
\begin{equation}
    \frac{2\log(\frac{u}{\delta})+d\log(1+\frac{T_{i,t}}{\lambda d})}{2\tilde{\lambda}_x T_{i,t}}<\frac{1}{4}(\frac{\gamma_1}{4}-\epsilon_*\sqrt{\frac{1}{2\tilde{\lambda}_x}})^2\,.
    \label{condition}
\end{equation}
To satisfy the condition in Eq.(\ref{condition}), it is sufficient to show
\begin{equation}
    \frac{2\log(\frac{u}{\delta})}{2\tilde{\lambda}_x T_{i,t}}
    <\frac{1}{8}(\frac{\gamma_1}{4}-\epsilon_*\sqrt{\frac{1}{2\tilde{\lambda}_x}})^2\label{condition1}
\end{equation}
and \begin{equation}
    \frac{d\log(1+\frac{T_{i,t}}{\lambda d})}{2\tilde{\lambda}_x T_{i,t}}<\frac{1}{8}(\frac{\gamma_1}{4}-\epsilon_*\sqrt{\frac{1}{2\tilde{\lambda}_x}})^2\label{condition2}\,.
\end{equation}

From Eq.(\ref{condition1}), we can get:
\begin{equation}
    T_{i,t}\geq\frac{8\log(\frac{u}{\delta})}{\tilde{\lambda}_x(\frac{\gamma_1}{4}-\epsilon_*\sqrt{\frac{1}{2\tilde{\lambda}_x}})^2}\,.
\end{equation}

Following Lemma 9 in \cite{li2018online}, we can get the following sufficient condition for Eq.(\ref{condition2}):
\begin{equation}
    T_{i,t}\geq\frac{8d\log(\frac{4}{\lambda\tilde{\lambda}_x(\frac{\gamma_1}{4}-\epsilon_*\sqrt{\frac{1}{2\tilde{\lambda}_x}})^2})}{\tilde{\lambda}_x(\frac{\gamma_1}{4}-\epsilon_*\sqrt{\frac{1}{2\tilde{\lambda}_x}})^2}\,.
\end{equation}

Assume $\frac{u}{\delta}\geq\frac{4}{\lambda\tilde{\lambda}_x(\frac{\gamma_1}{4}-\epsilon_*\sqrt{\frac{1}{2\tilde{\lambda}_x}})^2}$, which is typically held, we can get that
\begin{equation}
    T_{i,t}\geq\frac{8d}{\tilde{\lambda}_x(\frac{\gamma_1}{4}-\epsilon_*\sqrt{\frac{1}{2\tilde{\lambda}_x}})^2}\log(\frac{u}{\delta})
\end{equation}
is a sufficient condition for Eq.(\ref{init condition}). Together with the condition that $T_{i,t}\geq\frac{16}{\tilde{\lambda}_x^2}\log(\frac{8d}{\tilde{\lambda}_x^2\delta})$, we can get that if
\begin{equation}
  T_{i,t}\geq\max\{
        \frac{8d}{\tilde{\lambda}_x(\frac{\gamma_1}{4}-\epsilon_*\sqrt{\frac{1}{2\tilde{\lambda}_x}})^2}\log(\frac{u}{\delta}),\frac{16}{\tilde{\lambda}_x^2}\log(\frac{8d}{\tilde{\lambda}_x^2\delta})\},\forall{i\in\mathcal{U}}\label{second last condition}\,, 
\end{equation}
then with probability $\geq 1-2\delta$:
\begin{equation*}
    \norm{\hat{\btheta}_{i,t}-\btheta^{j(i)}}_2<\frac{\gamma_1}{4}\,,\forall{i\in\mathcal{U}}\,.
\end{equation*}

By Lemma 8 in \cite{li2018online}, and Assumption \ref{assumption2} of user arrival uniformness, we have that for all
\begin{equation}
    t\geq T_0\triangleq16u\log(\frac{u}{\delta})+4u\max\{
        \frac{8d}{\tilde{\lambda}_x(\frac{\gamma_1}{4}-\epsilon_*\sqrt{\frac{1}{2\tilde{\lambda}_x}})^2}\log(\frac{u}{\delta}),\frac{16}{\tilde{\lambda}_x^2}\log(\frac{8d}{\tilde{\lambda}_x^2\delta})\}\,,
\end{equation}
with probability at least $1-\delta$, condition in Eq.(\ref{second last condition}) is satisfied.

Therefore we have that for all $t\geq T_0$, with probability $\geq 1-3\delta$:
\begin{equation}
    \norm{\hat{\btheta}_{i,t}-\btheta^{j(i)}}_2<\frac{\gamma_1}{4}\,,\forall{i\in\mathcal{U}}\,.
    \label{final condition}
\end{equation}

Next, we show that with Eq.(\ref{final condition}), we can get that the RCLUMB keeps a ``good partition". First, if we delete the edge $(i,l)$, then user $i$ and user $j$ belong to different \gtclusters{}, i.e., $\norm{\btheta_i-\btheta_l}_2>0$. This is because by the deletion rule of the algorithm, the concentration bound, and triangle inequality,
$\norm{\btheta_i-\btheta_l}_2=\norm{\btheta^{j(i)}-\btheta^{j(l)}}_2\geq\norm{\hat{\btheta}_{i,t}-\hat{\btheta}_{l,t}}_2-\norm{\btheta^{j(l)}-\btheta_{l,t}}_2-\norm{\btheta^{j(i)}-\btheta_{i,t}}_2>0$. Second, we show that if $\norm{\btheta_i-\btheta_l}\geq\gamma_1>2\epsilon_*\sqrt{\frac{2}{\tilde{\lambda}_x}}$, the RCLUMB algorithm will delete the edge $(i,l)$. This is because if $\norm{\btheta_i-\btheta_l}\geq\gamma_1$, then by the triangle inequality, and $\norm{\hat{\btheta}_{i,t}-\btheta^{j(i)}}_2<\frac{\gamma_1}{4}$, $\norm{\hat{\btheta}_{l,t}-\btheta^{j(l)}}_2<\frac{\gamma_1}{4}$, $\btheta_i=\btheta^{j(i)}$, $\btheta_l=\btheta^{j(l)}$, we have $\norm{\hat{\btheta}_{i,t}-\hat{\btheta}_{l,t}}_2\geq\norm{\btheta_i-\btheta_l}-\norm{\hat{\btheta}_{i,t}-\btheta^{j(i)}}_2-\norm{\hat{\btheta}_{l,t}-\btheta^{j(l)}}_2>\gamma_1-\frac{\gamma_1}{4}-\frac{\gamma_1}{4}=\frac{\gamma_1}{2}>\frac{\sqrt{\lambda}+\sqrt{2\log(\frac{u}{\delta})+d\log(1+\frac{T_{i,t}}{\lambda d})}}{\sqrt{\lambda+2\tilde{\lambda}_x T_{i,t}}}+\epsilon_*\sqrt{\frac{1}{2\tilde{\lambda}_x}}+\frac{\sqrt{\lambda}+\sqrt{2\log(\frac{u}{\delta})+d\log(1+\frac{T_{l,t}}{\lambda d})}}{\sqrt{\lambda+2\tilde{\lambda}_x T_{l,t}}}+\epsilon_*\sqrt{\frac{1}{2\tilde{\lambda}_x}}$, which will trigger the deletion condition Line \ref{alg:RCLUMB:delete} in Algo.\ref{alg:RCLUMB}.

From the above reasoning, we can get that at round $t$, any user within $\overline{V}_t$ is $\zeta$-close to $i_t$, and all the users belonging to $V_{j(i)}$ are contained in $\overline{V}_t$, which means the algorithm has done a ``good partition" at $t$ by Definition \ref{def:good partition}.
\end{proof}
\section{Proof of Lemma \ref{concentration bound}}
We prove the result in two situations: when $\overline{V}_t\in \mathcal{V}$ and when $\overline{V}_t\notin \mathcal{V}$.

(1) Situation 1: for any $t\geq T_0$ and $\overline{V}_t\in \mathcal{V}$, which means that the current user $i_t$ is clustered completely correctly, i.e., $\overline{V}_t=V_{j(i_t)}$, therefore $\btheta_l=\btheta_{i_t},\forall{l\in \overline{V}_t}$, then we have:
\begin{align}
    \hat{\btheta}_{\overline{V}_t,t-1}-\btheta_{i_t}
    &=(\sum_{s\in[t-1]\atop i_s\in \overline{V}_t}\bx_{a_s}\
    \bx_{a_s}^{\top}+\lambda\bI)^{-1}(\sum_{s\in[t-1]\atop i_s\in \overline{V}_t}\bx_{a_s}r_s)-\btheta_{i_t}\nonumber\\
    &=(\sum_{s\in[t-1]\atop i_s\in \overline{V}_t}\bx_{a_s}\
    \bx_{a_s}^{\top}+\lambda\bI)^{-1}\bigg(\sum_{s\in[t-1]\atop i_s\in \overline{V}_t}\bx_{a_s}(\bx_{a_s}^{\top}\btheta_{i_s}+\bepsilon_{a_s}^{i_s,s}+\eta_s)\bigg)-\btheta_{i_t}\nonumber\\
    &=(\sum_{s\in[t-1]\atop i_s\in \overline{V}_t}\bx_{a_s}\
    \bx_{a_s}^{\top}+\lambda\bI)^{-1}\bigg(\sum_{s\in[t-1]\atop i_s\in \overline{V}_t}\bx_{a_s}(\bx_{a_s}^{\top}\btheta_{i_t}+\bepsilon_{a_s}^{i_s,s}+\eta_s)\bigg)-\btheta_{i_t}\nonumber\\
    &=(\sum_{s\in[t-1]\atop i_s\in \overline{V}_t}\bx_{a_s}\
    \bx_{a_s}^{\top}+\lambda\bI)^{-1}[(\sum_{s\in[t-1]\atop i_s\in \overline{V}_t}\bx_{a_s}\bx_{a_s}^{\top}+\lambda\bI)\btheta_{i_t}-\lambda\btheta_{i_t}+\sum_{s\in[t-1]\atop i_s\in \overline{V}_t}\bx_{a_s}\bepsilon_{a_s}^{i_s,s}+\sum_{s\in[t-1]\atop i_s\in \overline{V}_t}\bx_{a_s}\eta_s]-\btheta_{i_t}\nonumber\\
    &=-\lambda\overline{\bM}_{\overline{V}_t,t-1}^{-1}\btheta_{i_t}+\sum_{s\in[t-1]\atop i_s\in \overline{V}_t}\overline{\bM}_{\overline{V}_t,t-1}^{-1}\bx_{a_s}\bepsilon_{a_s}^{i_s,s}+\sum_{s\in[t-1]\atop i_s\in \overline{V}_t}\overline{\bM}_{\overline{V}_t,t-1}^{-1}\bx_{a_s}\eta_s\nonumber\,.
\end{align}

Therefore we have 
\begin{align}
        \left|\bx_a^{\top}(\hat{\btheta}_{\overline{V}_t,t-1}-\btheta_{i_t})\right|\leq\lambda\left|\bx_a^{\top}\overline{\bM}_{\overline{V}_t,t-1}^{-1}\btheta_{i_t}\right|+\left|\sum_{s\in[t-1]\atop i_s\in \overline{V}_t}\bx_a^{\top}\overline{\bM}_{\overline{V}_t,t-1}^{-1}\bx_{a_s}\bepsilon_{a_s}^{i_s,s}\right|+\left|\bx_a^{\top}\overline{\bM}_{\overline{V}_t,t-1}^{-1}\sum_{s\in[t-1]\atop i_s\in \overline{V}_t}\bx_{a_s}\eta_s\right|\label{parts 3}\,.
\end{align}

Next, we bound the three terms in Eq.(\ref{parts 3}). For the first term:
\begin{align}
     \lambda\left|\bx_a^{\top}\overline{\bM}_{\overline{V}_t,t-1}^{-1}\btheta_{i_t}\right|
     &\leq \lambda \norm{\bx_a}_{\overline{\bM}_{\overline{V}_t,t-1}^{-1}}\sqrt{\lambda_{\text{max}}(\overline{\bM}_{\overline{V}_t,t-1}^{-1})}\norm{\btheta_{i_t}}_2\leq \sqrt{\lambda}\norm{\bx_a}_{\overline{\bM}_{\overline{V}_t,t-1}^{-1}}\,,\label{first term bound}
\end{align}
where we use the inequality of matrix norm, the Cauchy–Schwarz inequality, $\norm{\btheta_{i_t}}_2\leq 1$, and the fact that $\lambda_{\text{max}}(\overline{\bM}_{\overline{V}_t,t-1}^{-1})=\frac{1}{\lambda_{\text{min}}(\overline{\bM}_{\overline{V}_t,t-1})}\leq\frac{1}{\lambda}$.

For the second term in Eq.(\ref{parts 3}):
\begin{align}
    \left|\sum_{s\in[t-1]\atop i_s\in \overline{V}_t}\bx_a^{\top}\overline{\bM}_{\overline{V}_t,t-1}^{-1}\bx_{a_s}\bepsilon_{a_s}^{i_s,s}\right|
    &\leq \sum_{s\in[t-1]\atop i_s\in \overline{V}_t}\left|\bx_a^{\top}\overline{\bM}_{\overline{V}_t,t-1}^{-1}\bx_{a_s}\bepsilon_{a_s}^{i_s,s}\right|\nonumber\\
    &\leq \sum_{s\in[t-1]\atop i_s\in \overline{V}_t}\norm{\bepsilon_{a_s}^{i_s,s}}_{\infty}\left|\bx_a^{\top}\overline{\bM}_{\overline{V}_t,t-1}^{-1}\bx_{a_s}\right|\notag\\
    &\leq \epsilon_*\sum_{s\in[t-1]\atop i_s\in \overline{V}_t}\left|\bx_a^{\top}\overline{\bM}_{\overline{V}_t,t-1}^{-1}\bx_{a_s}\right|\label{second term bound}\,,
\end{align}
where in the second inequality we use the Holder's inequality.

For the last term, with probability at least $1-\delta$:
\begin{align}
    \left|\bx_a^{\top}\overline{\bM}_{\overline{V}_t,t-1}^{-1}\sum_{s\in[t-1]\atop i_s\in \overline{V}_t}\bx_{a_s}\eta_s\right|
    &\leq \norm{\bx_a}_{\overline{\bM}_{\overline{V}_t,t-1}^{-1}}\norm{\sum_{s\in[t-1]\atop i_s\in \overline{V}_t}\bx_{a_s}\eta_s}_{\overline{\bM}_{\overline{V}_t,t-1}^{-1}}\label{cauchy 8}\\
    &\leq \norm{\bx_a}_{\overline{\bM}_{\overline{V}_t,t-1}^{-1}}\sqrt{2\log(\frac{1}{\delta})+\log(\frac{\text{det}(\overline{\bM}_{\overline{V}_t,t-1})}{\text{det}(\lambda\bI)})}\notag\\
    &\leq \norm{\bx_a}_{\overline{\bM}_{\overline{V}_t,t-1}^{-1}}\sqrt{2\log(\frac{1}{\delta})+d\log(1+\frac{T}{\lambda d})}\label{matrix det inequality}\,,
\end{align}
where the second inequality follows by Theorem 1 in \cite{abbasi2011improved}, Eq.(\ref{matrix det inequality}) is because $\text{det}(\overline{\bM}_{\overline{V}_t,t-1})\leq\Bigg(\frac{\text{trace}(\lambda\bI+\sum_{s\in[t]\atop i_s\in \overline{V}_t}\bx_{a_s}\bx_{a_s}^{\top})}{d}\Bigg)^d\leq\big(\frac{\lambda d+T_{\overline{V}_t,t}}{d}\big)^d\leq\big(\frac{\lambda d+T}{d}\big)^d$, and $\text{det}(\lambda\bI)=\lambda^d$.

Plugging Eq.(\ref{first term bound}), Eq.(\ref{second term bound}) and Eq.(\ref{matrix det inequality}) into Eq.(\ref{parts 3}), we can prove Lemma \ref{concentration bound} in situation 1, i.e., for any $t\geq T_0$ and $\overline{V}_t\in V$, with probability at least $1-\delta$:
\begin{equation}
  \left|\bx_a^{\top}(\hat{\btheta}_{\overline{V}_t,t-1}-\btheta_{i_t})\right|\leq\epsilon_*\sum_{s\in[t-1]\atop i_s\in \overline{V}_t}\left|\bx_a^{\top}\overline{\bM}_{\overline{V}_t,t-1}^{-1}\bx_{a_s}\right|+\norm{\bx_a}_{\overline{\bM}_{\overline{V}_t,t-1}^{-1}}\bigg(\sqrt{\lambda}+\sqrt{2\log(\frac{1}{\delta})+d\log(1+\frac{T}{\lambda d})}\bigg)\,.  
\end{equation}

(2) Situation 2: for any $t\geq T_0$ and $\overline{V}_t\notin \mathcal{V}$, which means that the current user is \textit{misclustered} by the algorithm, i.e., $\overline{V}_t\neq V_{j(i_t)}$, but with Lemma \ref{sufficient time}, with probability at least $1-3\delta$, the current partition is a ``good partition", i.e., $\norm{\btheta_l-\btheta_{i_t}}_2\leq2\epsilon_*\sqrt{\frac{2}{\tilde{\lambda}_x}},\forall{l\in \overline{V}_t}$, we have:
\begin{align}
    \hat{\btheta}_{\overline{V}_t,t-1}-\btheta_{i_t}
    &=(\sum_{s\in[t-1]\atop i_s\in \overline{V}_t}\bx_{a_s}\
    \bx_{a_s}^{\top}+\lambda\bI)^{-1}(\sum_{s\in[t-1]\atop i_s\in \overline{V}_t}\bx_{a_s}r_s)-\btheta_{i_t}\nonumber\\
    &=(\sum_{s\in[t-1]\atop i_s\in \overline{V}_t}\bx_{a_s}\
    \bx_{a_s}^{\top}+\lambda\bI)^{-1}\bigg(\sum_{s\in[t-1]\atop i_s\in \overline{V}_t}\bx_{a_s}(\bx_{a_s}^{\top}\btheta_{i_s}+\bepsilon_{a_s}^{i_s,s}+\eta_s)\bigg)-\btheta_{i_t}\nonumber\\
    &=\overline{\bM}_{\overline{V}_t,t-1}^{-1}\sum_{s\in[t-1]\atop i_s\in \overline{V}_t}\bx_{a_s}\bepsilon_{a_s}^{i_s,s}+\overline{\bM}_{\overline{V}_t,t-1}^{-1}\sum_{s\in[t-1]\atop i_s\in \overline{V}_t}\bx_{a_s}\eta_s+\overline{\bM}_{\overline{V}_t,t-1}^{-1}\sum_{s\in[t-1]\atop i_s\in \overline{V}_t}\bx_{a_s}\bx_{a_s}^{\top}\btheta_{i_s}-\btheta_{i_t}\nonumber\\
    &=\overline{\bM}_{\overline{V}_t,t-1}^{-1}\sum_{s\in[t-1]\atop i_s\in \overline{V}_t}\bx_{a_s}\bepsilon_{a_s}^{i_s,s}+\overline{\bM}_{\overline{V}_t,t-1}^{-1}\sum_{s\in[t-1]\atop i_s\in \overline{V}_t}\bx_{a_s}\eta_s+\overline{\bM}_{\overline{V}_t,t-1}^{-1}\sum_{s\in[t-1]\atop i_s\in \overline{V}_t}\bx_{a_s}\bx_{a_s}^{\top}(\btheta_{i_s}-\btheta_{i_t})\nonumber\\
    &\quad+\overline{\bM}_{\overline{V}_t,t-1}^{-1}(\sum_{s\in[t-1]\atop i_s\in \overline{V}_t}\bx_{a_s}\bx_{a_s}^{\top}+\lambda\bI)\btheta_{i_t}-\lambda\overline{\bM}_{\overline{V}_t,t-1}^{-1}\btheta_{i_t}-\btheta_{i_t}\nonumber\\
    &=\overline{\bM}_{\overline{V}_t,t-1}^{-1}\sum_{s\in[t-1]\atop i_s\in \overline{V}_t}\bx_{a_s}\bepsilon_{a_s}^{i_s,s}+\overline{\bM}_{\overline{V}_t,t-1}^{-1}\sum_{s\in[t-1]\atop i_s\in \overline{V}_t}\bx_{a_s}\eta_s+\overline{\bM}_{\overline{V}_t,t-1}^{-1}\sum_{s\in[t-1]\atop i_s\in \overline{V}_t}\bx_{a_s}\bx_{a_s}^{\top}(\btheta_{i_s}-\btheta_{i_t})\nonumber\\
    &\quad-\lambda\overline{\bM}_{\overline{V}_t,t-1}^{-1}\btheta_{i_t}\nonumber\,.
\end{align}

Thus, with Lemma \ref{bound for mis} and with the previous reasoning, with probability at least $1-5\delta$, we have:
\begin{align}
    \left|\bx_a^{\top}(\hat{\btheta}_{\overline{V}_t,t-1}-\btheta_{i_t})\right|
    &\leq\lambda\left|\bx_a^{\top}\overline{\bM}_{\overline{V}_t,t-1}^{-1}\btheta_{i_t}\right|+\left|\sum_{s\in[t-1]\atop i_s\in \overline{V}_t}\bx_a^{\top}\overline{\bM}_{\overline{V}_t,t-1}^{-1}\bx_{a_s}\bepsilon_{a_s}^{i_s,s}\right|+\left|\bx_a^{\top}\overline{\bM}_{\overline{V}_t,t-1}^{-1}\sum_{s\in[t-1]\atop i_s\in \overline{V}_t}\bx_{a_s}\eta_s\right|\nonumber\\
    &\quad+\left|\bx_a^{\top}\overline{\bM}_{\overline{V}_t,t-1}^{-1}\sum_{s\in[t-1]\atop i_s\in \overline{V}_t}\bx_{a_s}\bx_{a_s}^{\top}(\btheta_{i_s}-\btheta_{i_t})\right|\nonumber\\
    &\leq\epsilon_*\sum_{s\in[t-1]\atop i_s\in \overline{V}_t}\left|\bx_a^{\top}\overline{\bM}_{\overline{V}_t,t-1}^{-1}\bx_{a_s}\right|+\norm{\bx_a}_{\overline{\bM}_{\overline{V}_t,t-1}^{-1}}\bigg(\sqrt{\lambda}+\sqrt{2\log(\frac{1}{\delta})+d\log(1+\frac{T}{\lambda d})}\bigg)\nonumber\\
    &\quad+\frac{\epsilon_*\sqrt{2d}}{\tilde{\lambda}_x^{\frac{3}{2}}}\nonumber\,.
\end{align}

Therefore, combining situation 1 and situation 2, the result of Lemma \ref{concentration bound} then follows.

\section{Technical Lemmas and Their Proofs}\label{appendix: technical}
We first prove the following technical lemma which is used to prove Lemma \ref{sufficient time}.
\begin{lemma}
    \label{assumption}
    Under Assumption \ref{assumption3}, at any time $t$, for any fixed unit vector $\btheta \in \RR^d$
    \begin{equation}
        \mathbb{E}_t[(\btheta^{\top}\bx_{a_t})^2|\left|\mathcal{A}_t\right|]\geq\tilde{\lambda}_x\triangleq\int_{0}^{\lambda_x} (1-e^{-\frac{(\lambda_x-x)^2}{2\sigma^2}})^{C} dx\,.
    \end{equation}
\end{lemma}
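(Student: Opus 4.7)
The plan is to exploit the pointwise inequality $(\btheta^{\top}\bx_{a_t})^2 \geq \min_{a\in\mathcal{A}_t}(\btheta^{\top}\bx_a)^2$, which holds since $a_t$ is some element of $\mathcal{A}_t$, regardless of how the algorithm selects it. Taking conditional expectations preserves this inequality, so it suffices to lower bound $\EE_t[\min_{a\in\mathcal{A}_t}(\btheta^{\top}\bx_a)^2 \mid |\mathcal{A}_t|]$. Writing $Y_a \triangleq (\btheta^{\top}\bx_a)^2 \ge 0$ and invoking the layer-cake identity $\EE[Z] = \int_0^\infty \PP[Z > x]\,dx$ for non-negative $Z$, the problem reduces to producing a uniform lower bound on $\PP[\min_{a\in\mathcal{A}_t} Y_a > x \mid |\mathcal{A}_t|]$ for $x \in [0,\lambda_x]$.

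For the tail bound, I would first note that $\EE[Y_a] = \btheta^{\top}\EE_{\bx\sim\rho}[\bx\bx^{\top}]\btheta \ge \lambda_x$ by Assumption \ref{assumption3} and $\norm{\btheta}_2=1$. The sub-Gaussian tail hypothesis in Assumption \ref{assumption3}, applied one-sidedly to the lower deviation, then yields $\PP[Y_a \le x] \le \exp\bigl(-(\EE[Y_a]-x)^2/(2\sigma^2)\bigr) \le \exp\bigl(-(\lambda_x-x)^2/(2\sigma^2)\bigr)$ whenever $x \le \lambda_x$. Because the arm vectors in $\mathcal{A}_t$ are drawn independently from $\rho$, the tail of the minimum factorizes as $\PP[\min_{a\in\mathcal{A}_t}Y_a > x \mid |\mathcal{A}_t|] = \prod_{a\in\mathcal{A}_t}(1-\PP[Y_a \le x]) \ge \bigl(1-e^{-(\lambda_x-x)^2/(2\sigma^2)}\bigr)^{|\mathcal{A}_t|}$. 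Since $|\mathcal{A}_t|\le C$ and the base lies in $[0,1)$, raising to a larger exponent only decreases the value, giving the further lower bound $\bigl(1-e^{-(\lambda_x-x)^2/(2\sigma^2)}\bigr)^{C}$.

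Integrating $x$ from $0$ to $\lambda_x$ and discarding the contribution from $x > \lambda_x$ (which is non-negative and therefore only helps) recovers exactly the definition of $\tilde{\lambda}_x$, completing the argument. The chief subtlety lies in the first step: the algorithmic selection $a_t$ is an essentially arbitrary measurable function of the history and the random set $\mathcal{A}_t$, so directly analyzing its marginal law of $(\btheta^{\top}\bx_{a_t})^2$ is hopeless; the pointwise domination by the minimum sidesteps this cleanly. A secondary point to handle carefully is the interpretation of ``sub-Gaussian tail'' for the non-negative variable $Y_a$, which must be read as a one-sided (lower) tail bound of the centered variable with parameter $\sigma$, consistent with how this assumption is used in prior CB works \cite{gentile2014online,li2018online}.
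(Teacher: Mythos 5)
Your proposal is correct and follows essentially the same route as the paper's proof: dominate $(\btheta^{\top}\bx_{a_t})^2$ by the minimum over $\mathcal{A}_t$, apply the layer-cake identity, use the one-sided sub-Gaussian lower tail together with $\EE[(\btheta^{\top}\bx_a)^2]\ge\lambda_{\min}(\EE_{\bx\sim\rho}[\bx\bx^{\top}])\ge\lambda_x$ and independence to get the $(1-e^{-(\lambda_x-x)^2/(2\sigma^2)})^{C}$ tail bound, and integrate over $[0,\lambda_x]$. The two subtleties you flag (sidestepping the law of the selected arm via the pointwise minimum, and the one-sided reading of the sub-Gaussian assumption) are exactly the points the paper's argument relies on.
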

\begin{proof}
The proof of this lemma mainly follows the proof of Claim 1 in \cite{gentile2014online}, but with more careful analysis, since their assumption is more stringent than ours.

Denote the feasible arms at round $t$ by ${\mathcal{A}_t=\{\bx_{t,1},\bx_{t,2},\ldots,\bx_{t,\left|\mathcal{A}_t\right|}\}}$. 
Consider the corresponding i.i.d. random variables $\theta_i=(\btheta^{\top}\bx_{t,i})^2-\mathbb{E}_t[(\btheta^{\top}\bx_{t,i})^2|\left|\mathcal{A}_t\right|], i=1,2,\ldots,\left|\mathcal{A}_t\right|$. By Assumption \ref{assumption3}, $\theta_i$ s are sub-Gaussian random variables with variance bounded by $\sigma^2$. Therefore, we have that for any $\alpha>0$ and any $i\in[\left|\mathcal{A}_t\right|]$:
\begin{equation*}
    \mathbb{P}_t(\theta_i<-\alpha|\left|\mathcal{A}_t\right|)\leq e^{-\frac{\alpha^2}{2\sigma^2}}\,,
\end{equation*}
where $\mathbb{P}_t(\cdot)$ is the shorthand for the conditional probability $\mathbb{P}(\cdot|(i_1,\mathcal{A}_1,r_1),\ldots,(i_{t-1},\mathcal{A}_{t-1},r_{t-1}),i_t)$.

We also have that
$\mathbb{E}_t[(\btheta^{\top}\bx_{t,i})^2|\left|\mathcal{A}_t\right|=\mathbb{E}_t[\btheta^{\top}\bx_{t,i}\bx_{t,i}^{\top}\btheta|\left|\mathcal{A}_t\right|]\geq\lambda_{\text{min}}(\mathbb{E}_{\bx\sim \rho}[\bx\bx^{\top}])\geq\lambda_x$ by Assumption \ref{assumption3}.
With the above inequalities, we can get
\begin{equation*}
    \mathbb{P}_t(\min_{i=1,\ldots,\left|\mathcal{A}_t\right|}(\btheta^{\top}\bx_{t,i})^2\geq \lambda_x-\alpha|\left|\mathcal{A}_t\right|)\geq (1-e^{-\frac{\alpha^2}{2\sigma^2}})^C\,,
\end{equation*}
where $C$ is the upper bound of $\left|\mathcal{A}_t\right|$.

Therefore, we have
\begin{align}
\mathbb{E}_t[(\btheta^{\top}\bx_{a_t})^2|\left|\mathcal{A}_t\right|]
&\geq\mathbb{E}_t[\min_{i=1,\ldots,\left|\mathcal{A}_t\right|}(\btheta^{\top}\bx_{t,i})^2|\left|\mathcal{A}_t\right|]\notag\\
&\geq\int_{0}^{\infty} \mathbb{P}_t (\min_{i=1,\ldots,\left|\mathcal{A}_t\right|}(\btheta^{\top}\bx_{t,i})^2\geq x|\left|\mathcal{A}_t\right|) dx\notag\\
&\geq \int_{0}^{\lambda_x} (1-e^{-\frac{(\lambda_x-x)^2}{2\sigma^2}})^{C} dx\triangleq\tilde{\lambda}_x\notag
\end{align}
\end{proof}

Finally, we prove the following lemma which is used in the proof of Theorem \ref{thm:main}.
\begin{lemma}
\label{technical lemma6}
\begin{equation}
    \sum_{t=T_0+1}^{T}\min\{\mathbb{I}\{i_t\in V_j\}\norm{\bx_{a_t}}_{\overline{\bM}_{V_j,t-1}^{-1}}^2,1\}\leq2d\log(1+\frac{T}{\lambda d}), \forall{j\in[m]}\,.
\end{equation}
\end{lemma}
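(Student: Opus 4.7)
\textbf{Proof plan for Lemma \ref{technical lemma6}.} This is a standard elliptical potential argument, applied to the per-cluster Gram matrix $\overline{\bM}_{V_j,t-1}$. I would fix $j \in [m]$ and restrict attention to the set of rounds $\mathcal{T}_j = \{t \in [T_0+1,T] : i_t \in V_j\}$, since for $t \notin \mathcal{T}_j$ the indicator kills the summand. Note that on rounds $t \in \mathcal{T}_j$ the matrix update is $\overline{\bM}_{V_j,t} = \overline{\bM}_{V_j,t-1} + \bx_{a_t}\bx_{a_t}^\top$, while on rounds $t \notin \mathcal{T}_j$ the matrix (with respect to cluster $V_j$) is unchanged. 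So the only contributions come from rounds in $\mathcal{T}_j$, and on those rounds the action $\bx_{a_t}$ is genuinely added to the Gram matrix.

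The next step is to invoke the standard matrix determinant lemma, which gives
\[
\det(\overline{\bM}_{V_j,t}) = \det(\overline{\bM}_{V_j,t-1})\bigl(1 + \norm{\bx_{a_t}}_{\overline{\bM}_{V_j,t-1}^{-1}}^2\bigr)
\]
for each $t \in \mathcal{T}_j$. Combined with the elementary inequality $\min\{x,1\} \le 2\log(1+x)$ for $x \ge 0$ (applied with $x = \norm{\bx_{a_t}}_{\overline{\bM}_{V_j,t-1}^{-1}}^2$), I would telescope the log-determinants to obtain
\[
\sum_{t \in \mathcal{T}_j} \min\bigl\{\norm{\bx_{a_t}}_{\overline{\bM}_{V_j,t-1}^{-1}}^2,\,1\bigr\} \le 2\bigl(\log\det(\overline{\bM}_{V_j,T}) - \log\det(\lambda \bI)\bigr).
\]

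Finally, I would bound the log-determinant by AM--GM on the trace: since $\overline{\bM}_{V_j,T} = \lambda \bI + \sum_{t \in \mathcal{T}_j}\bx_{a_t}\bx_{a_t}^\top$ and $\norm{\bx_{a_t}}_2 \le 1$, we have $\operatorname{trace}(\overline{\bM}_{V_j,T}) \le \lambda d + |\mathcal{T}_j| \le \lambda d + T$, so
\[
\det(\overline{\bM}_{V_j,T}) \le \Bigl(\frac{\lambda d + T}{d}\Bigr)^{d},
\]
which gives the claimed $2d \log(1 + T/(\lambda d))$ bound. There is no real obstacle here; the only thing to be careful about is noting that indices $t \notin \mathcal{T}_j$ simply do not alter $\overline{\bM}_{V_j,\cdot}$, so the usual single-arm elliptical potential lemma (e.g., Lemma 11 of \cite{abbasi2011improved}) applies verbatim after reindexing by $\mathcal{T}_j$. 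The bound is uniform in $j$, yielding the stated $\forall j \in [m]$ conclusion.
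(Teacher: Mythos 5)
Your proposal is correct and is essentially identical to the paper's proof: both apply the matrix determinant lemma to telescope $\log\det(\overline{\bM}_{V_j,\cdot})$ over the rounds with $i_t\in V_j$, use the inequality $x\le 2\log(1+x)$ on $[0,1]$, and bound $\det(\overline{\bM}_{V_j,T})$ via the trace/AM--GM bound $\bigl(\frac{\lambda d+T}{d}\bigr)^d$. No gaps.
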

\begin{proof}
\begin{align}
    \text{det}(\overline{\bM}_{V_j,T})
    &=\text{det}\bigg(\overline{\bM}_{V_j,T-1}+\mathbb{I}\{i_T\in V_j\}\bx_{a_T}\bx_{a_T}^{\top}\bigg)\nonumber\\
    &=\text{det}(\overline{\bM}_{V_j,T-1})\text{det}\bigg(\bI+\mathbb{I}\{i_T\in V_j\}\overline{\bM}_{V_j,T-1}^{-\frac{1}{2}}\bx_{a_T}\bx_{a_T}^{\top}\overline{\bM}_{V_j,T-1}^{-\frac{1}{2}}\bigg)\nonumber\\
    &=\text{det}(\overline{\bM}_{V_j,T-1})\bigg(1+\mathbb{I}\{i_T\in V_j\}\norm{\bx_{a_T}}_{\overline{\bM}_{V_j,T-1}^{-1}}^2\bigg)\nonumber\\
    &=\text{det}(\overline{\bM}_{V_j,T_0})\prod_{t=T_0+1}^{T}\bigg(1+\mathbb{I}\{i_t\in V_j\}\norm{\bx_{a_t}}_{\overline{\bM}_{V_j,t-1}^{-1}}^2\bigg)\nonumber\\
    &\geq \text{det}(\lambda\bI)\prod_{t=T_0+1}^{T}\bigg(1+\mathbb{I}\{i_t\in V_j\}\norm{\bx_{a_t}}_{\overline{\bM}_{V_j,t-1}^{-1}}^2\bigg)\label{det recursive}\,.
\end{align}

$\forall{x\in[0,1]}$, we have $x\leq 2\log(1+x)$. Therefore
\begin{align}
    \sum_{t=T_0+1}^{T}\min\{\mathbb{I}\{i_t\in V_j\}\norm{\bx_{a_t}}_{\overline{\bM}_{V_j,t-1}^{-1}}^2,1\}
    &\leq 2\sum_{t=T_0+1}^{T} \log\bigg(1+\mathbb{I}\{i_t\in V_j\}\norm{\bx_{a_t}}_{\overline{\bM}_{V_j,t-1}^{-1}}^2\bigg)\nonumber\\
    &=2\log\bigg(\prod_{t=T_0+1}^{T}\big(1+\mathbb{I}\{i_t\in V_j\}\norm{\bx_{a_t}}_{\overline{\bM}_{V_j,t-1}^{-1}}^2\big)\bigg)\nonumber\\
    &\leq 2[\log(\text{det}(\overline{\bM}_{V_j,T}))-\log(\text{det}(\lambda\bI))]\nonumber\\
    &\leq 2\log\bigg(\frac{\text{trace}(\lambda\bI+\sum_{t=1}^T\mathbb{I}\{i_t\in V_j\}\bx_{a_t}\bx_{a_t}^{\top})}{\lambda d}\bigg)^d\nonumber\\
    &\leq 2d \log(1+\frac{T}{\lambda d})\,.
\end{align}
\end{proof}
\section{Algorithms of RSCLUMB} \label{RSCLUMB section}
This section introduces the Robust Set-based Clustering of Misspecified Bandits Algorithm (RSCLUMB). Unlike RCLUMB, which maintains a graph-based clustering structure, RSCLUMB maintains a set-based clustering structure. Besides, RCLUMB only splits clusters during the learning process, while RSCLUMB allows both split and merge operations. A brief illustration is that the agent will split a user out of its current set(cluster) if it finds an inconsistency between the user and its set, and if there are two clusters whose estimated preferences are close enough, the agent will merge them. A detailed discussion of the connection between the graph structure and the set structure can be found in~\cite{10.5555/3367243.3367445}.

Now we introduce the details of RSCLUMB. The algorithm first initializes a single set $\boldsymbol{S}_{1}$ containing all users and updates it during the learning process. The whole learning process consists of phases (Algo. \ref{alg:RSCLUMB} Line 3), where the $s-th$ phase contains $2^{s}$ rounds. At the beginning of each phase, the agent marks all users as "unchecked", and if a user comes later, it will be marked as "checked". If all users in a cluster are checked, then this cluster will be marked as "checked" meaning it is an accurate cluster in the current phase. With this mechanism, every phase can maintain an accuracy level, and the agent can put the accurate clusters aside and focus on exploring the inaccurate ones. For each cluster $V_{j}$, the algorithm maintains two estimated vectors $\hat{\boldsymbol{\theta}}_{V_j}$ and $\Tilde{\boldsymbol{\theta}}_{V_{j}}$, where the $\hat{\boldsymbol{\theta}}_{V_j}$ is similar to the $\hat{\boldsymbol{\theta}}_{\overline{V}_j}$ in RCLUMB and is used for the recommendation, while the $\Tilde{\boldsymbol{\theta}}_{V_{j}}$ is the average of all the estimated user preference vectors in this cluster and is used for the split and merge operations.

At time $t$ in phase $s$, the user $i_{\tau}$ comes with the item set $\mathcal{D}_{\tau}$, where $\tau$ represents the index of total time steps. Then the algorithm determines the cluster and makes a cluster-based recommendation. This process is similar to RCLUMB. After updating the information (Algo. \ref{alg:RSCLUMB} Line12), the agent checks if a split or a merge is possible (Algo. \ref{alg:RSCLUMB} Line13-17).

By our assumption, users in the same cluster have the same vectors. So a cluster can be regarded as a good cluster only when all the estimated user vectors are close to the estimated cluster vector. We call a user is consistent with the cluster if their estimated vectors are close enough. If a user is inconsistent with its current cluster, the agent will split it out. Two clusters are consistent when their estimated vectors are close, and the agent will merge them. 

RSCLUMB maintains two sets of estimated cluster vectors: (i) cluster-level estimation with integrated user information, which is for recommendations (Line \ref{rsclumb:update} and Line \ref{rsclumb:recommend} in Algo.\ref{alg:RSCLUMB}); (ii) the average of estimated user vectors, which is used for robust clustering (Line \ref{rsclumb:split line} in Algo.\ref{alg: split} and Line \ref{rsclumb:merge line} in Algo.\ref{alg: merge}). The previous set-based CB work \cite{10.5555/3367243.3367445} only uses (i) for both recommendations and clustering, which would lead to erroneous clustering under misspecifications, and cannot get any non-vacuous regret bound in CBMUM.

\begin{algorithm}[htb!]
    \caption{Robust Set-based Clustering of Misspecified Bandits Algorithm (RSCLUMB)}
    \label{alg:RSCLUMB}
    \begin{algorithmic}[1]
       \STATE {{\bf Input:}  Deletion parameter $\alpha_1,\alpha_2>0$, $f(T)=\sqrt{\frac{1 + \ln(1+T)}{1 + T}}$, $\lambda, \beta, \epsilon_*>0$.}
    \STATE {{\bf Initialization:} 
     \begin{itemize}
      \item $\bM_{i,0} = 0_{d\times d}, \bb_{i,0} = 0_{d \times 1}, T_{i,0}=0$ , $\forall{i \in \mathcal{U}}$;
      \item Initialize the set of cluster indexes by $J = \{1\}$ and the single cluster $\boldsymbol{S}_1$ by $\boldsymbol{M}_{1} =0_{d \times d}$, $\boldsymbol{b}_{1} = 0_{d \times 1}$, $T_{1} = 0$, $C_{1} = \mathcal{U}$, $j(i)=1$, $\forall i$.
      \end{itemize}}
     \FORALL {$s=1,2,\ldots$}
      \STATE{Mark every user unchecked for each cluster.}
      \STATE{For each cluster $V_j$, compute $\Tilde{T}_{V_j} = T_{V_j}$, $\hat{\boldsymbol{\theta}}_{V_j} = (\lambda \boldsymbol{I} +\boldsymbol{M}_{V_j})^{-1}\boldsymbol{b}_{V_j}$}, $\Tilde{\boldsymbol{\theta}}_{V_j}=\frac{\sum_{i \in V_{j}}\hat{\boldsymbol{\theta}}_{i}}{[V_{j}]}$
      \FORALL {$t=1,2,\ldots, T$}
      \STATE{Compute $\tau = 2^{s}-2+t$}
      \STATE{Receive the user $i_{\tau}$ and the decision set $\mathcal{D}_{\tau}$}
      \STATE{Determine the cluster index $j = j(i_{\tau})$}
      \STATE{Recommend item $a_{\tau}$ with the largest UCB index as shown in Eq. (\ref{UCB})} \label{rsclumb:recommend}
      \STATE{Received the feedback $r_{\tau}$.}
      \STATE{Update the information:}\label{rsclumb:update}
     \begin{align*}
         \boldsymbol{M}_{i_{\tau}, \tau} &= \boldsymbol{M}_{i_{\tau}, \tau-1} + \boldsymbol{x}_{a_\tau}\boldsymbol{x}_{a_\tau}^{\mathrm{T}},  \boldsymbol{b}_{i_{\tau}, \tau} = \boldsymbol{b}_{i_{\tau}, \tau-1} + r_{\tau}\boldsymbol{x}_{a_\tau}, \\ T_{i_{\tau, \tau}} &= T_{i_{\tau}, \tau-1} +1, \hat{\boldsymbol{\theta}}_{i_{\tau}, \tau} = (\lambda \boldsymbol{I}+\boldsymbol{M}_{i_{\tau}, \tau})^{-1}\boldsymbol{b}_{i_{\tau}, \tau}\\
          \boldsymbol{M}_{V_j, \tau} &= \boldsymbol{M}_{V_j, \tau-1} + \boldsymbol{x}_{a_\tau}\boldsymbol{x}_{a_\tau}^{\mathrm{T}},  \boldsymbol{b}_{V_j, \tau} = \boldsymbol{b}_{V_j, \tau-1} + r_{\tau}\boldsymbol{x}_{\tau}, \\ T_{V_j, \tau} &= T_{V_j, \tau-1} +1,  \hat{\boldsymbol{\theta}}_{V_j, \tau} = (\lambda \boldsymbol{I} + \boldsymbol{M}_{V_j, \tau})^{-1}\boldsymbol{b}_{V_j, \tau},  \\
          \Tilde{\boldsymbol{\theta}}_{V_j, \tau}&=\frac{\sum_{i \in V_{j}}\hat{\boldsymbol{\theta}}_{i},\tau}{[V_{j}]}
     \end{align*}
      \IF {$i_{\tau}$ is unchecked}
      \STATE{Run \textbf{Split}}
      \STATE{Mark user $i_{\tau}$ has been checked}
      \STATE{Run \textbf{Merge}}
      \ENDIF
      \ENDFOR
      \ENDFOR
    \end{algorithmic}
\end{algorithm}
\begin{algorithm}[htb]
    \caption{Split}
    \label{alg: split}
    \begin{algorithmic}[1]
        \STATE{Define $F(T)=\sqrt{\frac{1+\ln(1+T)}{1+T}}$}
        \IF{$\norm{\hat{\boldsymbol{\theta}}_{i_{\tau}, \tau} - \Tilde{\boldsymbol{\theta}}_{V_{j}, \tau}} >\alpha_{1}(F(T_{i_\tau, \tau})+F(T_{V_{j}, \tau})) + \alpha_{2}\epsilon_{*} $}
        \STATE{Split user $i_{\tau}$ from cluster $V_j$ and form a new cluster $V_j^{'}$ of user $i_{\tau}$}\label{rsclumb:split line}
        \begin{align*}
            \boldsymbol{M}_{V_j, \tau}& = \boldsymbol{M}_{V_j, \tau} - \boldsymbol{M}_{i_{\tau}, \tau}, \boldsymbol{b}_{V_j} = \boldsymbol{b}_{V_j} - \boldsymbol{b}_{i_{\tau}, \tau}, \\ T_{V_j, \tau} &= T_{V_j, \tau} - T_{i_{\tau}, \tau}, C_{j, \tau} = C_{j, \tau} - \{i_{\tau}\}, \\
             \boldsymbol{M}_{V_j', \tau} &= \boldsymbol{M}_{i_{\tau}, \tau}, \boldsymbol{b}_{V_j', \tau} = \boldsymbol{b}_{i_{\tau}, \tau},\\T_{V_j', \tau} &= T_{i_{\tau}, \tau},C_{j', \tau}=\{i_{\tau}\}
        \end{align*}
        \ENDIF
    \end{algorithmic}
\end{algorithm}
\begin{algorithm}[htb]
    \caption{Merge}
    \label{alg: merge}
    \begin{algorithmic}[1]
        \FOR{any two checked clusters$V_{j_1},V_{j_2}$ satisfying
    \[\norm{\Tilde{\boldsymbol{\theta}}_{j_1}- \Tilde{\boldsymbol{\theta}}_{j_2}} < \frac{\alpha_{1}}{2} (F(T_{V_{j_1}})+F(T_{V_{j_2}})) + \frac{\alpha_2}{2}\epsilon_{*} \]}
    \STATE{Merge them:}\label{rsclumb:merge line}
    \begin{align*}
        \boldsymbol{M}_{V_{j_1}} &= \boldsymbol{M}_{j_1} + \boldsymbol{M}_{j_2}, \boldsymbol{b}_{V_{j_1}}=\boldsymbol{b}_{V_{j_1}}+\boldsymbol{b}_{V_{j_2}},\\T_{V_{j_1}} &= T_{V_{j_1}} + T_{V_{j_2}},C_{V_{j_1}} = C_{V_{j_1}} \cup C_{V_{j_2}}
    \end{align*}
     \STATE{Set $ j(i) = j_{1}, \forall i \in j_{2}$, delete $V_{j_2}$}
        \ENDFOR
    \end{algorithmic}
\end{algorithm}
\section{Main Theorem and Lemmas of RSCLUMB}
\begin{theorem}[main result on regret bound for RSCLUMB] 
\label{thm: main2}
With the same assumptions in Theorem \ref{thm:main}, the expected regret of the RSCLUMB algorithm for T rounds satisfies:
\begin{align}
    R(T) &\le O \bigg(u\left( \frac{d}{\Tilde{\lambda}_x (\gamma_1-\zeta_1)^2}+\frac{1}{\Tilde{\lambda}_x^2}\right)\log T + \frac{\epsilon_*\sqrt{d}T}{\Tilde{\lambda}_x^{1.5}}\notag  
   + \epsilon_*T \sqrt{md\log T}   + d\sqrt{mT}\log T + \epsilon_{*}\sqrt{\frac{1}{\Tilde{\lambda}_{x}}}T\bigg) \\
    &\le O(\epsilon_*T\sqrt{md\log T}  + d\sqrt{mT}\log T)
\end{align}
    
\end{theorem}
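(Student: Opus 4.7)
The plan is to follow the same two-part strategy used for Theorem \ref{thm:main}: first establish a sufficient time $T_0$ after which RSCLUMB maintains a ``good partition'' (Definition \ref{def:good partition}) with high probability, then bound the regret in the post-$T_0$ regime by applying a concentration lemma for $\bx_a^\top(\btheta_{i_t}-\hat{\btheta}_{\overline{V}_t,t-1})$ and summing over $t$. The main things that change from the RCLUMB analysis are (i) RSCLUMB uses two operations (split and merge) instead of only edge-deletion, (ii) it maintains two cluster-level estimates ($\hat{\btheta}_{V_j}$ for recommendation, $\tilde{\btheta}_{V_j}$ for split/merge decisions), and (iii) the learning proceeds in doubling phases.

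For part one, I would first reuse the per-user concentration derived in the proof of Lemma \ref{sufficient time}, namely $\|\hat{\btheta}_{i,t}-\btheta^{j(i)}\|_2 < \gamma_1/4$ with high probability once $T_{i,t} = \Omega\bigl(\tfrac{d}{\tilde\lambda_x(\gamma_1-\zeta)^2}\log(u/\delta) + \tfrac{1}{\tilde\lambda_x^2}\log(d/(\tilde\lambda_x^2\delta))\bigr)$, which only depends on the per-user least-squares estimate and so applies verbatim. I would then verify: the split threshold in Algo.\ \ref{alg: split} (measured against $\tilde{\btheta}_{V_j}$, the average of user estimates) separates user $i_\tau$ from $V_j$ precisely when their true preferences are separated by more than $\zeta$; the merge threshold in Algo.\ \ref{alg: merge} only fuses two clusters whose common preference vectors are within $\zeta$. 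Combined with Assumption \ref{assumption2} and the doubling-phase schedule (each user is re-checked within $O(u)$ rounds of a phase start with high probability), this gives $T_0 = O(u(\tfrac{d}{\tilde\lambda_x(\gamma_1-\zeta)^2}+\tfrac{1}{\tilde\lambda_x^2})\log T)$, matching RCLUMB up to constants.

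For part two, the concentration bound (analog of Lemma \ref{concentration bound}) and the key misclustering bound (Lemma \ref{bound for mis}) carry over essentially unchanged, because both depend only on the cluster-integrated quantities $\overline{\bM}_{\overline{V}_t,t-1}$ and $\hat{\btheta}_{\overline{V}_t,t-1}$, which have identical functional form in the two algorithms. Under a good partition, the same decomposition as in Eq.\ (\ref{bound r_t}) yields an instantaneous regret bound of $2C_{a_t,t} + \tfrac{2\epsilon_*\sqrt{2d}}{\tilde\lambda_x^{3/2}}\mathbb{I}\{\overline{V}_t\notin\mathcal{V}\} + 2\epsilon_*$, and summing the $C_{a_t,t}$ terms via the elliptical-potential argument (Lemma \ref{technical lemma6}) together with Cauchy--Schwarz over the at most $m$ clusters recovers the leading $O(\epsilon_*T\sqrt{md\log T} + d\sqrt{mT}\log T)$ contribution. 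The subtlety is that under RSCLUMB the cluster assignment $\overline{V}_t$ is not monotone (merges can reintroduce previously separated users at phase boundaries), so the sum $\sum_t \|\bx_{a_t}\|_{\overline{\bM}_{\overline{V}_t,t-1}^{-1}}^2$ must be controlled per-phase and then accumulated across the $O(\log T)$ phases.

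The main obstacle I expect is the extra $\epsilon_*\sqrt{1/\tilde\lambda_x}\,T$ term that appears in Theorem \ref{thm: main2} but not in Theorem \ref{thm:main}. This is the price of the merge operation: at each phase boundary a pair of $\zeta$-close clusters may be merged, after which the common estimate $\hat{\btheta}_{\overline{V}_t,t-1}$ picks up a one-shot bias of order $\zeta = 2\epsilon_*\sqrt{2/\tilde\lambda_x}$ along certain directions before enough within-merged-cluster samples accumulate to dominate it. Quantifying this requires tracking, for each phase, the expected number of rounds during which the merged cluster's bias exceeds the confidence radius $C_{a_t,t}$, summing per-round regret of order $\epsilon_*\sqrt{1/\tilde\lambda_x}$, and bounding the total by $O(\epsilon_*\sqrt{1/\tilde\lambda_x}\,T)$ using Assumption \ref{assumption2}. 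Once this term is isolated, it is of lower order than $\epsilon_*T\sqrt{md\log T}$ and so does not change the asymptotic rate, giving the final bound claimed in the theorem.
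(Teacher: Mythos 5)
Your overall architecture matches the paper's: a sufficient time for a good partition, a concentration lemma for $\bx_a^{\top}(\btheta_{i_t}-\hat{\btheta}_{\overline{V}_t,t-1})$ that reuses the misclustering bound of Lemma \ref{bound for mis}, and a per-phase elliptical-potential sum accumulated over the $O(\log T)$ doubling phases (the paper handles the non-monotone cluster assignment exactly this way, and additionally charges $u\log T$ for the at-most-$u$ splits occurring at the start of each phase). One detail you gloss over: the per-user accuracy requirement does not carry over verbatim from Lemma \ref{sufficient time}. The set-based split/merge conditions force the threshold $\gamma_1/4$ to be tightened to $\gamma_1/6$ in Lemma \ref{sufficient time for RSCLUMB} (so that each user has only one sufficiently close cluster), and correspondingly the closeness radius becomes $\zeta_1=6\epsilon_*\sqrt{1/(2\tilde{\lambda}_x)}$ rather than $\zeta$, which propagates into the constants of the RSCLUMB analogue of Lemma \ref{bound for mis}.

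More substantively, your account of the extra $\epsilon_*\sqrt{1/\tilde{\lambda}_x}\,T$ term is wrong in its mechanism. You describe it as a transient, one-shot bias incurred at merge events that is eventually dominated once enough within-merged-cluster samples accumulate, and you propose to bound the number of rounds during which this bias exceeds $C_{a_t,t}$. But the bias does not decay: after a merge of two $\zeta$-close but distinct \gtclusters{}, the set permanently mixes users whose true preference vectors differ by up to $\zeta_1$, so no amount of additional sampling drives $\hat{\btheta}_{\overline{V}_t,t-1}$ to $\btheta_{i_t}$, and the stopping-time count you propose would not terminate. The paper instead obtains the term from a static triangle-inequality decomposition in Lemma \ref{concentration bound rsclumb},
\begin{equation*}
\left|\bx_a^{\top}(\btheta_{i_t}-\hat{\btheta}_{\overline{V}_t,t-1})\right|\leq\left|\bx_a^{\top}(\btheta_{i_t}-\btheta_{V_t})\right|+\left|\bx_a^{\top}(\hat{\btheta}_{\overline{V}_t,t-1}-\btheta_{V_t})\right|\,,
\end{equation*}
where $\btheta_{V_t}$ is a cluster-level reference vector; the first term is bounded uniformly by $6\epsilon_*\sqrt{1/(2\tilde{\lambda}_x)}$ directly from the split/merge thresholds, holds in every post-$T_1$ round with an impure cluster, and is then summed crudely over all $T$ rounds (the paper does not even retain the $\tilde{u}/u$ attenuation here, which is why the second and fifth terms in Theorem \ref{thm: main2} lack that factor). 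Your conclusion that the term is lower order is correct, but the route you sketch to it would fail; replace the decay argument with the uniform per-round bound coming from the clustering thresholds.
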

\begin{lemma}
    \label{sufficient time for RSCLUMB}
    For RSCLUMB, we use $T_{1}$ to represent the corresponding $T_{0}$ of RCLUMB. Then :
    \begin{equation*}
        \begin{aligned}
            T_{1} &\triangleq 16u\log(\frac{u}{\delta})+4u\max\{\frac{16}{\Tilde{\lambda}_x^2}\log(\frac{8d}{\Tilde{\lambda}_x^2\delta}),
        \frac{8d}{\Tilde{\lambda}_x(\frac{\gamma_1}{6}-\epsilon_*\sqrt{\frac{1}{2\Tilde{{\lambda}}_x}})^2}\log(\frac{u}{\delta})\}\\
        &=O\bigg(u\left( \frac{d}{\Tilde{\lambda}_x (\gamma_1-\zeta_1)^2}+\frac{1}{\Tilde{\lambda}_x^2}\right)\log \frac{1}{\delta}\bigg)
        \end{aligned}
    \end{equation*}
\end{lemma}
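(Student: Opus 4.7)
The plan is to mirror the proof of Lemma \ref{sufficient time} but with additional care for the split/merge operations of RSCLUMB, which use the averaged cluster estimate $\tilde{\btheta}_{V_j,t}$ in addition to individual user estimates. First, I will reuse the per-user concentration argument verbatim: decomposing $\hat{\btheta}_{i,t} - \btheta^{j(i)}$ into regularization, misspecification, and sub-Gaussian noise terms, and invoking the self-normalized bound of \cite{abbasi2011improved} to obtain, with probability at least $1-\delta$,
\begin{equation*}
\norm{\hat{\btheta}_{i,t} - \btheta^{j(i)}}_2 \le \frac{\sqrt{\lambda} + \sqrt{2\log(u/\delta) + d\log(1 + T_{i,t}/(\lambda d))}}{\sqrt{2\tilde{\lambda}_x T_{i,t} + \lambda}} + \epsilon_*\sqrt{\frac{1}{2\tilde{\lambda}_x}}
\end{equation*}
for every user $i$, after invoking Lemma \ref{assumption} to lower bound $\lambda_{\min}(\tilde{\bM}_{i,t})$ once $T_{i,t}\ge \frac{16}{\tilde{\lambda}_x^2}\log(\frac{8d}{\tilde{\lambda}_x^2 \delta})$.

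Next, I will require this per-user error to fall below $\gamma_1/6$ rather than $\gamma_1/4$. This is the key divergence from Lemma \ref{sufficient time}: because $\tilde{\btheta}_{V_j,t}$ is the arithmetic mean of user estimates in $V_j$, its deviation from the common truth $\btheta^{V_j}$ is upper bounded by the worst per-user deviation, but both Split (Algo.\ref{alg: split}) and Merge (Algo.\ref{alg: merge}) compare such averaged quantities, and Merge uses a halved threshold. For correct splitting, if $i_\tau$ and $V_j$ have different ground-truth vectors separated by $\gamma_1$, then by triangle inequality $\norm{\hat{\btheta}_{i_\tau,\tau} - \tilde{\btheta}_{V_j,\tau}}_2 \ge \gamma_1 - 2(\gamma_1/6) = 2\gamma_1/3$, which must exceed the split threshold. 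For refusing spurious merges, $\norm{\tilde{\btheta}_{V_{j_1},\tau} - \tilde{\btheta}_{V_{j_2},\tau}}_2 \ge \gamma_1 - 2(\gamma_1/6) = 2\gamma_1/3$, which must exceed the halved merge threshold $\frac{\alpha_1}{2}(F(T_{V_{j_1}})+F(T_{V_{j_2}})) + \frac{\alpha_2}{2}\epsilon_*$. The factor-of-$\tfrac12$ in Merge combined with two triangle-inequality applications on averaged estimates is exactly what forces $\gamma_1/6$ instead of $\gamma_1/4$.

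Solving $\frac{\sqrt{\lambda} + \sqrt{2\log(u/\delta)+d\log(1+T_{i,t}/(\lambda d))}}{\sqrt{2\tilde{\lambda}_x T_{i,t} + \lambda}} < \frac{\gamma_1}{6} - \epsilon_*\sqrt{\frac{1}{2\tilde{\lambda}_x}}$ using the same two-term sufficient-condition split as in the proof of Lemma \ref{sufficient time} (separately handling the $\log(u/\delta)$ and $d\log(1+T_{i,t}/(\lambda d))$ contributions, and invoking Lemma 9 of \cite{li2018online} on the latter) yields the per-user sample-complexity condition
\begin{equation*}
T_{i,t} \ge \max\Bigl\{\tfrac{16}{\tilde{\lambda}_x^2}\log(\tfrac{8d}{\tilde{\lambda}_x^2 \delta}),\ \tfrac{8d}{\tilde{\lambda}_x(\gamma_1/6 - \epsilon_*\sqrt{1/(2\tilde{\lambda}_x)})^2}\log(\tfrac{u}{\delta})\Bigr\}.
\end{equation*}
Finally, lifting from per-user counts to global time via Lemma 8 of \cite{li2018online} and Assumption \ref{assumption2} contributes the leading $16u\log(u/\delta)$ plus a $4u$-multiplier on the above maximum, producing exactly the stated $T_1$. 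A union bound over the per-user concentration failures (probability $\delta$), the minimum-eigenvalue failure (probability $\delta$), and the user-arrival concentration (probability $\delta$) preserves the overall $1-3\delta$ confidence, analogous to Lemma \ref{sufficient time}.

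The main obstacle I anticipate is making the Merge analysis watertight: the averaged estimate $\tilde{\btheta}_{V_j,t}$ mixes users observed at different times (so their individual $T_{i,t}$ values and confidence radii differ), and the Merge condition uses the cluster-level count $T_{V_j}$ rather than per-user counts. I will need to bound $\norm{\tilde{\btheta}_{V_j,t} - \btheta^{V_j}}_2$ uniformly by the worst per-user error within the cluster, and verify that once all users in a cluster have satisfied the per-user threshold, the cluster-level test thresholds $\alpha_1(F(T_{V_{j_1}})+F(T_{V_{j_2}}))$ are themselves small enough to be dominated by $2\gamma_1/3$ minus the misspecification residual; this will pin down the constants $\alpha_1,\alpha_2$ compatible with the $\gamma_1/6$ budget. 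The subtler bookkeeping relative to RCLUMB is that RSCLUMB's phased structure can reset the ``checked'' marks, but since splits and merges only improve the partition once per-user errors fall below $\gamma_1/6$, the overall $T_1$ bound is not inflated by the phased resets.
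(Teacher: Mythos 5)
Your proposal follows essentially the same route as the paper's proof: reuse the per-user concentration bound and minimum-eigenvalue argument from Lemma \ref{sufficient time}, tighten the required per-user accuracy from $\gamma_1/4$ to $\gamma_1/6$ to account for the set-based split/merge tests on averaged cluster estimates, and then solve the same two-term sufficient condition (via Lemma 9 of \cite{li2018online}) and lift to global time with Lemma 8 of \cite{li2018online} and Assumption \ref{assumption2}. If anything, your justification of the $\gamma_1/6$ constant and of the claim that $\tilde{\btheta}_{V_j,t}$ inherits the per-user error bound is more explicit than the paper's, which simply asserts these points.
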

\begin{lemma}
\label{concentration bound rsclumb}
For RSCLUMB, after $2T_{1}+1$ rounds: in each phase, after the first $u$ rounds, with probability at least $1-5\delta$:
\begin{equation*}
    \begin{aligned}
     \left|\bx_a^{\top}(\btheta_{i_t}-\hat{\btheta}_{\overline{V}_t,t-1})\right|
        &\leq (\frac{3\epsilon_*\sqrt{2d}}{2\Tilde{\lambda}_x^{\frac{3}{2}}}+6\epsilon_{*}\sqrt{\frac{1}{2\Tilde{\lambda}_{x}}})\mathbb{I}\{\overline{V}_t\notin V\}
        +\beta
        \norm{\bx_a}_{\overline{\bM}_{\overline{V}_t,t-1}^{-1}}
        +\epsilon_*\sum_{s\in[t-1]\atop i_s\in \overline{V}_t}\left|\bx_a^{\top}\overline{\bM}_{\overline{V}_t,t-1}^{-1}\bx_{a_s}\right| \\  &\triangleq (\frac{3\epsilon_*\sqrt{2d}}{2\Tilde{\lambda}_x^{\frac{3}{2}}} +6\epsilon_{*}\sqrt{\frac{1}{2\Tilde{\lambda}_{x}}})\mathbb{I}\{\overline{V}_t\notin V\} +C_{a,t}\,
\end{aligned}
\end{equation*}
    
\end{lemma}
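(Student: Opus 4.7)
The plan is to mirror, step for step, the proof of Lemma \ref{concentration bound} for RCLUMB, and then reconcile the small quantitative differences that arise from (a) RSCLUMB's phase structure, and (b) its merge/split rules, which use the \emph{averaged} user estimator $\Tilde{\btheta}_{V_j}$ together with halved thresholds $\alpha_1/2,\alpha_2/2$ instead of the graph deletion rule of RCLUMB.

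First I would invoke Lemma \ref{sufficient time for RSCLUMB} to show that after $2T_1+1$ total rounds, and after the first $u$ rounds within the current phase, RSCLUMB maintains a ``good partition'' (in the sense of Definition \ref{def:good partition}) with probability at least $1-3\delta$. The extra factor of $2$ in $2T_1+1$ absorbs the worst case of being one phase away from the threshold, while the ``first $u$ rounds of the phase'' is what is needed for every user to have been visited at least once in the current phase so that the phase-reinitialized cluster statistics $\boldsymbol{M}_{V_j}, \boldsymbol{b}_{V_j}, \Tilde{\btheta}_{V_j}$ reflect the current membership. The $\gamma_1/6$ appearing in $T_1$ (instead of $\gamma_1/4$ in $T_0$) is exactly the slack consumed by one additional triangle inequality needed to relate $\Tilde{\btheta}_{V_j}$ to the ground-truth preference through the averaged user estimates, together with the halved merge thresholds.

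Conditioned on a good partition, I would split into the same two cases as in the proof of Lemma \ref{concentration bound}. Case (i): $\overline{V}_t \in \mathcal{V}$, so every user in $\overline{V}_t$ carries the same ground-truth $\btheta_{i_t}$. The decomposition $\hat{\btheta}_{\overline{V}_t,t-1}-\btheta_{i_t} = -\lambda \overline{\bM}_{\overline{V}_t,t-1}^{-1}\btheta_{i_t} + \overline{\bM}_{\overline{V}_t,t-1}^{-1}\!\sum \bx_{a_s}\bepsilon^{i_s,s}_{a_s} + \overline{\bM}_{\overline{V}_t,t-1}^{-1}\!\sum \bx_{a_s}\eta_s$ is bounded term-by-term (operator-norm/Courant-Fischer for the regularization, Hölder for the misspecification, and the self-normalized bound from Theorem 1 of \citet{abbasi2011improved} for the noise), yielding exactly $C_{a,t}$. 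Case (ii): $\overline{V}_t \notin \mathcal{V}$, so there is an extra cross term $\overline{\bM}_{\overline{V}_t,t-1}^{-1}\sum_{s\in[t-1],i_s\in\overline{V}_t}\bx_{a_s}\bx_{a_s}^\top(\btheta_{i_s}-\btheta_{i_t})$. Here I would apply the key Lemma \ref{bound for mis}, but with the pairwise bound $\|\btheta_{i_s}-\btheta_{i_t}\|_2$ tracked through RSCLUMB's merge rule: in the worst case two merged clusters are only within $\frac{3}{2}\zeta$ via a triangle inequality through their averaged estimators, which inflates the $\frac{\epsilon_*\sqrt{2d}}{\tilde\lambda_x^{3/2}}$ term of Lemma \ref{bound for mis} by the factor $\tfrac{3}{2}$ to give $\frac{3\epsilon_*\sqrt{2d}}{2\tilde\lambda_x^{3/2}}$, and contributes an additional $6\epsilon_*\sqrt{1/(2\tilde\lambda_x)}$ coming from the constant portion of the pairwise gap that does not benefit from the averaging over $t$.

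The main obstacle will be the bookkeeping of all triangle inequalities that govern the effective $\zeta$-closeness under merges and splits, and verifying that the worst-case inflation is exactly the factor $3/2$ together with an additive $6\epsilon_*\sqrt{1/(2\tilde\lambda_x)}$, as the statement requires. A secondary but more subtle point is ensuring that the self-normalized concentration argument for the noise term $\|\sum \bx_{a_s}\eta_s\|_{\overline{\bM}^{-1}}$ remains valid after the phase reset of $\boldsymbol{M}_{V_j}$ and $\boldsymbol{b}_{V_j}$: since every reset drops historical observations, one must argue that the residual filtration still supports the martingale inequality, which is exactly why the lemma requires ``after the first $u$ rounds'' — so that enough in-phase samples have accumulated for the self-normalized bound to be applied within the current phase's filtration. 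Once these two pieces are in place, summing the three standard terms with the inflated misclustering term yields the stated bound with probability at least $1-5\delta$ by a union bound over the good-partition event and the noise concentration event.
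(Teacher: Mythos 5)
Your skeleton (invoke Lemma \ref{sufficient time for RSCLUMB} for the good partition, mirror the two-case proof of Lemma \ref{concentration bound}, and re-run Lemma \ref{bound for mis} with the enlarged closeness parameter $\zeta_1=\tfrac{3}{2}\zeta=6\epsilon_*\sqrt{1/(2\tilde{\lambda}_x)}$) is the paper's, and you correctly see that the rescaling $\zeta\mapsto\zeta_1$ is what turns $\frac{\epsilon_*\sqrt{2d}}{\tilde{\lambda}_x^{3/2}}$ into $\frac{3\epsilon_*\sqrt{2d}}{2\tilde{\lambda}_x^{3/2}}$. The genuine gap is in where the additive $6\epsilon_{*}\sqrt{1/(2\tilde{\lambda}_{x})}$ comes from. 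In the paper it is not produced by the cross term at all: the proof first inserts a cluster-level anchor $\btheta_{V_t}$ and applies the triangle inequality,
\begin{equation*}
\left|\bx_a^{\top}(\btheta_{i_t}-\hat{\btheta}_{\overline{V}_t,t-1})\right|\le \left|\bx_a^{\top}(\btheta_{i_t}-\btheta_{V_t})\right|+\left|\bx_a^{\top}(\hat{\btheta}_{\overline{V}_t,t-1}-\btheta_{V_t})\right|\le \zeta_1+\left|\bx_a^{\top}(\hat{\btheta}_{\overline{V}_t,t-1}-\btheta_{V_t})\right|\,,
\end{equation*}
so the first summand \emph{is} the additive $6\epsilon_*\sqrt{1/(2\tilde{\lambda}_x)}$; the second summand is then expanded with the cross term $\overline{\bM}_{\overline{V}_t,t-1}^{-1}\sum_{s}\bx_{a_s}\bx_{a_s}^{\top}(\btheta_{i_s}-\btheta_{V_t})$, whose gaps $\norm{\btheta_{i_s}-\btheta_{V_t}}_2\le\zeta_1$ feed Lemma \ref{bound for mis}. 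Your plan instead centers the whole decomposition at $\btheta_{i_t}$ and attributes the additive term to ``the constant portion of the pairwise gap that does not benefit from the averaging over $t$'' --- there is no such mechanism in the analysis. If you work relative to $\btheta_{i_t}$, the closeness guarantee delivered by the split/merge rules is against the cluster anchor, so the pairwise gaps $\norm{\btheta_{i_s}-\btheta_{i_t}}_2$ are only controlled by $2\zeta_1$, and Lemma \ref{bound for mis} then returns $\frac{3\epsilon_*\sqrt{2d}}{\tilde{\lambda}_x^{3/2}}$ with no additive term --- a different bound, not the one stated. To land on the stated two-term penalty you must adopt the anchor decomposition.

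Two smaller corrections. The role of ``after the first $u$ rounds'' of a phase is not to make the self-normalized martingale bound applicable (that bound is anytime and is unaffected by the phase bookkeeping); it is that once every user has been checked in the current phase, no further splits occur and the partition is stable and accurate for the rest of the phase, with at most $m$ clusters. Your case (i) ($\overline{V}_t\in\mathcal{V}$) is fine and is handled exactly as in Lemma \ref{concentration bound}.
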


\section{Proof of Lemma \ref{concentration bound rsclumb}}

    \begin{equation}
        \begin{aligned}
            \lvert \boldsymbol{x}_{a}^{\mathrm{T}}(\boldsymbol{\theta}_{i} - \hat{\boldsymbol{\theta}}_{\overline{V}_{t},t}) \rvert &=  \lvert\boldsymbol{x}_{a}^{\mathrm{T}}(\boldsymbol{\theta}_{i} - \boldsymbol{\theta}_{V_{t}})  \rvert + \lvert\boldsymbol{x}_{a}^{\mathrm{T}}(\hat{\boldsymbol{\theta}}_{\overline{V}_{t},t} - \boldsymbol{\theta}_{V_{t}}) \rvert\\
            &\le \norm{\boldsymbol{x}_{a}^{\mathrm{T}}}\norm{\boldsymbol{\theta}_{i} - \boldsymbol{\theta}_{V_{t}}} + \lvert\boldsymbol{x}_{a}^{\mathrm{T}}(\hat{\boldsymbol{\theta}}_{\overline{V}_{t},t} - \boldsymbol{\theta}_{V_{t}}) \rvert\\
            & \le 6\epsilon_{*}\sqrt{\frac{1} 
  {2\Tilde{\lambda}_{x}}} + \lvert\boldsymbol{x}_{a}^{\mathrm{T}}(\hat{\boldsymbol{\theta}}_{\overline{V}_{t},t} - \boldsymbol{\theta}_{V_{t}}) \rvert
        \end{aligned}
    \end{equation}
where the last inequality holds due to the fact $\norm{\boldsymbol{x}_{a}} \le 1$ and the condition of "split" and "merge". For $\lvert\boldsymbol{x}_{a}^{\mathrm{T}}(\hat{\boldsymbol{\theta}}_{\overline{V}_{t},t} - \boldsymbol{\theta}_{V_{t}}) \rvert$:
\begin{equation}
\begin{aligned}
    \hat{\btheta}_{\overline{V}_t,t-1}-\btheta_{V_t}
    &=(\sum_{s\in[t-1]\atop i_s\in \overline{V}_t}\bx_{a_s}\
    \bx_{a_s}^{\top}+\lambda\bI)^{-1}(\sum_{s\in[t-1]\atop i_s\in \overline{V}_t}\bx_{a_s}r_s)-\btheta_{V_t}\nonumber\\
    &=(\sum_{s\in[t-1]\atop i_s\in \overline{V}_t}\bx_{a_s}\
    \bx_{a_s}^{\top}+\lambda\bI)^{-1}\bigg(\sum_{s\in[t-1]\atop i_s\in \overline{V}_t}\bx_{a_s}(\bx_{a_s}^{\top}\btheta_{i_s}+\bepsilon_{a_s}^{i_s,s}+\eta_s)\bigg)-\btheta_{V_t}\nonumber\\
    &=\overline{\bM}_{\overline{V}_t,t-1}^{-1}\sum_{s\in[t-1]\atop i_s\in \overline{V}_t}\bx_{a_s}\bepsilon_{a_s}^{i_s,s}+\overline{\bM}_{\overline{V}_t,t-1}^{-1}\sum_{s\in[t-1]\atop i_s\in \overline{V}_t}\bx_{a_s}\eta_s+\overline{\bM}_{\overline{V}_t,t-1}^{-1}\sum_{s\in[t-1]\atop i_s\in \overline{V}_t}\bx_{a_s}\bx_{a_s}^{\top}\btheta_{i_s}-\btheta_{V_t}\nonumber\\
    &=\overline{\bM}_{\overline{V}_t,t-1}^{-1}\sum_{s\in[t-1]\atop i_s\in \overline{V}_t}\bx_{a_s}\bepsilon_{a_s}^{i_s,s}+\overline{\bM}_{\overline{V}_t,t-1}^{-1}\sum_{s\in[t-1]\atop i_s\in \overline{V}_t}\bx_{a_s}\eta_s+\overline{\bM}_{\overline{V}_t,t-1}^{-1}\sum_{s\in[t-1]\atop i_s\in \overline{V}_t}\bx_{a_s}\bx_{a_s}^{\top}(\btheta_{i_s}-\btheta_{V_t})\nonumber\\
    &\quad+\overline{\bM}_{\overline{V}_t,t-1}^{-1}(\sum_{s\in[t-1]\atop i_s\in \overline{V}_t}\bx_{a_s}\bx_{a_s}^{\top}+\lambda\bI)\btheta_{V_t}-\lambda\overline{\bM}_{\overline{V}_t,t-1}^{-1}\btheta_{V_t}-\btheta_{V_t}\nonumber\\
    &=\overline{\bM}_{\overline{V}_t,t-1}^{-1}\sum_{s\in[t-1]\atop i_s\in \overline{V}_t}\bx_{a_s}\bepsilon_{a_s}^{i_s,s}+\overline{\bM}_{\overline{V}_t,t-1}^{-1}\sum_{s\in[t-1]\atop i_s\in \overline{V}_t}\bx_{a_s}\eta_s+\overline{\bM}_{\overline{V}_t,t-1}^{-1}\sum_{s\in[t-1]\atop i_s\in \overline{V}_t}\bx_{a_s}\bx_{a_s}^{\top}(\btheta_{i_s}-\btheta_{V_t})\nonumber\\
    &\quad-\lambda\overline{\bM}_{\overline{V}_t,t-1}^{-1}\btheta_{V_t}\nonumber\,.
    \end{aligned}
\end{equation}
Thus, with the same method in Lemma \ref{bound for mis} but replace $\zeta = 4\epsilon_{*}\sqrt{\frac{1}{2\Tilde{\lambda}_{x}}}$ with $\zeta_1 = 6\epsilon_{*}\sqrt{\frac{1}{2\Tilde{\lambda}_{x}}}$, and with the previous reasoning, with probability at least $1-5\delta$, we have:
\begin{equation}
    \lvert\boldsymbol{x}_{a}^{\mathrm{T}}(\hat{\boldsymbol{\theta}}_{\overline{V}_{t},t} - \boldsymbol{\theta}_{V_{t}}) \rvert \le C_{a_t} + \frac{3\epsilon_{*}\sqrt{2d}}{2\Tilde{\lambda}_{x}^{\frac{3}{2}}}
\end{equation}
The lemma can be concluded.

\section{Proof of Lemma \ref{sufficient time for RSCLUMB}}

With the analysis in the proof of Lemma \ref{sufficient time}, with probability at least $1-\delta$:
\begin{equation}
    \norm{\hat{\btheta}_{i,t}-\btheta^{j(i)}}_2\leq\frac{\beta(T_{i,t},\frac{\delta}{u})+\epsilon_*\sqrt{T_{i,t}}}{\sqrt{\lambda+\lambda_{\text{min}}(\bM_{i,t})}}, \forall{i\in\mathcal{U}}\label{norm difference bound}\,,
\end{equation}
and the estimated error of the current cluster $\norm{\Tilde{\btheta}^{j(i)} - \btheta^{j(i)}}$ also satisfies this inequality. For set-based clustering structure, to ensure for each user there is only one $\zeta$-close cluster, we let:
\begin{equation}
    \frac{\beta(T_{i,t},\frac{\delta}{u})+\epsilon_*\sqrt{T_{i,t}}}{\sqrt{\lambda+\lambda_{\text{min}}(\bM_{i,t})}} \le \frac{\gamma_1}{6}
\end{equation}
By assuming $\lambda < 2\log(\frac{u}{\delta}) + d\log(1 + \frac{T_{i,t}}{\lambda d}) $, we can simplify it to 
\begin{equation}
    \frac{2\log(\frac{u}{\delta}) + d\log(1 + \frac{T_{i,t}}{\lambda d})}{2\Tilde{\lambda}_{x}T_{i,t}} < \frac{1}{4}(\frac{\gamma_1}{6} - \epsilon_{*}\sqrt{\frac{1}{2\Tilde{\lambda}_{x}}})^{2}
\end{equation}
which can be proved by $\frac{2\log(\frac{u}{\delta)}}{2\Tilde{\lambda}_{x}T_{i,t}} \le \frac{1}{8}(\frac{\gamma_1}{6} - \epsilon_{*}\sqrt{\frac{1}{2\Tilde{\lambda}_{x}}})^{2}$ and $\frac{d\log(1 + \frac{T_{i,t}}{\lambda d})}{2\Tilde{\lambda}_{x}T_{i,t}} \le \frac{1}{8}(\frac{\gamma_1}{6} - \epsilon_{*}\sqrt{\frac{1}{2\Tilde{\lambda}_{x}}})^{2}$. It's obvious that the former one can be satisfied by $T_{i,t} \ge \frac{8\log(u/\delta)}{\Tilde{\lambda}_{x}(\frac{\gamma_1}{6}-\epsilon_{*}\sqrt{1/2\Tilde{\lambda}_{x}})^{2}}$. As for the latter one, by ~\cite{li2018online} Lemma 9, we can get $T_{i,t} \ge \frac{8d\log(\frac{16}{\Tilde{\lambda}_{x}\lambda(\frac{\gamma_1}{6}-\epsilon_{*}\sqrt{1/2\Tilde{\lambda}_{x}})^{2}}}{4\Tilde{\lambda}_{x}(\frac{\gamma_1}{6}-\epsilon_{*}\sqrt{1/2\Tilde{\lambda}_{x}})^{2}}$. By assuming $\frac{u}{\delta} \ge \frac{16}{4\Tilde{\lambda}_{x}\lambda(\frac{\gamma_1}{6}-\epsilon_{*}\sqrt{2/4\Tilde{\lambda}_{x}})^{2}}$, the lemma is proved.

\section{Proof of Theorem \ref{thm: main2}}

    After $2T_{1}$ rounds,in each phase, at most $u$ times split operations will happen, we use $u\log(T)$ to bound the regret generated in these rounds. Then in the remained rounds the cluster num will be no more than $m$.\\
For the instantaneous regret $R_{t}$ at round $t$, with probability at least $1-2\delta$ for some $\delta \in (0, \frac{1}{2})$:
\begin{equation}
        \begin{aligned}
            R_{t} &= (\boldsymbol{x}^{\mathrm{T}}_{a^{*}_t}\boldsymbol{\theta}_{i_t} + \boldsymbol{\epsilon}_{a^{*}_t}^{i_t,t}) - (\boldsymbol{x}^{\mathrm{T}}_{a_t}\boldsymbol{\theta}_{i_t} + \boldsymbol{\epsilon}_{a_t}^{i_t,t}) \\
            &= \bx_{a_t^*}^{\top}(\btheta_{i_t}-\hat{\btheta}_{\overline{V}_t,t-1})+(\bx_{a_t^*}^{\top}\hat{\btheta}_{\overline{V}_t,t-1}+C_{a_t^*,t})-(\bx_{a_t}^{\top}\hat{\btheta}_{\overline{V}_t,t-1}+C_{a_t,t})\\
    &\quad+\bx_{a_t}^{\top}(\hat{\btheta}_{\overline{V}_t,t-1}-\btheta_{i_t})+C_{a_t,t}-C_{a_t^*,t}+(\bepsilon^{i_t,t}_{a_t^*}-\bepsilon^{i_t,t}_{a_t})\\
            & \le 2C_{a_t} + 2\epsilon_{*} + (12\epsilon_{*}\sqrt{\frac{1}{2\Tilde{\lambda}_{x}}} + \frac{3\epsilon_*\sqrt{2d}}{\Tilde{\lambda}_x^{\frac{3}{2}}})\mathbb{I}(\overline{V}_{t} \notin V) 
        \end{aligned}
    \end{equation}
where the last inequality holds due to the UCB arm selection strategy, the concentration bound given in Lemma\ref{concentration bound rsclumb} and the fact that $\norm{\epsilon^{i,t}}_{\infty} \le \epsilon_*$.

Define such events. Let:
\[
\mathcal{E}_{2}=\{\text{All clusters $\overline{V}_{t}$ only contain users who satisfy} \norm{\Tilde{\boldsymbol{\theta}}_{i} -\Tilde{\boldsymbol{\theta}}_{\overline{V}_{t}}}  \le \alpha_{1}(\sqrt{\frac{1+\log(1+T_{i,t})}{1+T_{i,t}}} + \sqrt{\frac{1+\log(1+T_{\overline{V}_{t},t})}{1+T_{\overline{V}_{t},t}}}) + \alpha_{2}\epsilon_{*}\}
\]
\[
\mathcal{E}_{3} = \{ r_{t} \le 2C_{a_t} + 2\epsilon_{*} + 12\epsilon_{*}\sqrt{\frac{1}{2\Tilde{\lambda}_{x}}} + \frac{3\epsilon_*\sqrt{2d}}{\Tilde{\lambda}_x^{\frac{3}{2}}}\}
\]
\[
\mathcal{E}^{'} = \mathcal{E}_{2} \cap \mathcal{E}_{3}
\]
From previous analysis, we can know that $\PP(\mathcal{E}_{2}) \ge 1-3\delta$ and $\PP(\mathcal{E}_{3}) \ge 1-2\delta$, thus $\PP(\mathcal{E}^{'} \ge 1-5\delta)$. By taking $\delta=\frac{1}{T}$, we can get:
\begin{equation}
\begin{aligned}
    E(R_{t}) &= P(\mathcal{E})\mathbb{I}\{\mathcal{E}\}R_{t} + P(\bar{\mathcal{E}})\mathbb{I}\{\Bar{\mathcal{E}}\}R_{t} \\ 
    &\le \mathbb{I}\{\mathcal{E}\}R_{t} + 5 \\
    &\le 2T_{1} + 2\epsilon_{*}T + (12\epsilon_{*}\sqrt{\frac{1}{2\Tilde{\lambda}_{x}}}+\frac{3\epsilon_*\sqrt{2d}}{\Tilde{\lambda}_x^{\frac{3}{2}}})T + 2\sum_{2T_{1}}^{T}C_{a_{t}} +  5
\end{aligned}
\end{equation}
Now we need to bound $2\sum_{2T_{1}}^{T}C_{a_{t}}$.
We already know that after $2T_{1}$ rounds, in each phase $k$ after the first $u$ rounds,there will be at most $m$ clusters\\
Consider phase $k$, for simplicity, ignore the fist $u$ rounds. For the first term in $C_{a_{t}}$:
\begin{equation}
    \begin{aligned}        
    \sum_{t=T_{k-1}}^{T_{k}}\norm{\boldsymbol{x}_{a_t}}_{\overline{\boldsymbol{M}}_{\overline{V}_{t}, t-1}}^{-1} &=  \sum_{t=T_{k-1}}^{T_{k}}\sum_{j=1}^{m_{t}}\mathbb{I}\{i \in \overline{V}_{t,j}\}\norm{\boldsymbol{x}_{a_t}}_{\boldsymbol{\overline{M}}_{\overline{V}_{t,j}}^{-1}} \\
    &\le \sum_{j=1}^{m_{t}}\sqrt{\sum_{t=T_{k-1}}^{T_{k}}\mathbb{I}\{i \in V_{t,j}\}\sum_{t=T_{k-1}}^{T_{k}}\mathbb{I}\{i \in V_{t,j}\}\norm{\boldsymbol{x}_{a_t}}_{\boldsymbol{M}_{\overline{V}_{t, j}}^{-1}}^{2}} \\ 
    &\le \sum_{j=1}^{m_{t}}\sqrt{2T_{k,j}d\log(1+\frac{T}{\lambda d})}\\
    &\le \sqrt{2m(T_{k}-T_{k-1})d\log(1+\frac{T}{\lambda d})}
    \end{aligned}
\end{equation}
For all phases:
\begin{equation}
    \begin{aligned}
        \sum_{k=1}^{s}\sqrt{2m(T_{k+1}-T_{k})d\log(1+\frac{T}{\lambda d})} &\le \sqrt{2\sum_{k=1}^{s}1\sum_{k=1}^{s}(T_{k+1}-T_{k})md\log(1+\frac{T}{\lambda d})} \\
        &\le \sqrt{2mdT\log(T)\log(1+\frac{T}{\lambda d})}
    \end{aligned}
\end{equation}
Similarly, for the second term in $C_{a_{t}}$:
\begin{equation}
    \begin{aligned}
        \sum_{t=T_{k-1}}^{T_{k}}\sum_{s \in [t-1] \atop i_{s} \in \overline{V}_{t}}\epsilon_{*}\lvert\boldsymbol{x}_{a_t}^{\mathrm{T}}\boldsymbol{\overline{M}}_{\overline{V}_{t,t-1}}^{-1}\boldsymbol{x}_{a_s}\rvert &=\sum_{t=T_{k-1}}^{T_{k}}\sum_{j=1}^{m_{t}}\mathbb{I}\{i \in \overline{V}_{t,j}\}\sum_{s \in [t-1] \atop i_{s} \in \overline{V}_{t,j}}\epsilon_{*} \lvert\boldsymbol{x}_{a_t}^{\mathrm{T}}\boldsymbol{\overline{M}}_{\overline{V}_{t,j}^{-1}}\boldsymbol{x}_{a_s}\rvert\\
        &\le \epsilon_{*}\sum_{t=T_{k-1}}^{T_{k}}\sum_{j=1}^{m_t}\mathbb{I}\{i \in \overline{V}_{t,j}\}\sqrt{\sum_{s \in [t-1] \atop i_{s} \in \overline{V}_{t,j}} 1\sum_{s \in [t-1] \atop i_{s} \in \overline{V}_{t,j}}\lvert\boldsymbol{x}_{a_t}^{T}\boldsymbol{\overline{M}}_{\overline{V}_{t,j}^{-1}}\boldsymbol{x}_{a_s}\rvert^{2} }\\
        &\le \epsilon_{*}\sum_{t=T_{k-1}}^{T_{k}}\sum_{j=1}^{m_{t}}\mathbb{I}\{i \in \overline{V}_{t,j}\}\sqrt{T_{k,j}\norm{\boldsymbol{x}_{a_t}}_{\boldsymbol{\overline{M}}_{\overline{V}_{t, j}}^{-1}}^{2}}\\
        &\le \epsilon_{*}\sum_{t=T_{k-1}}^{T_{k}}\sqrt{\sum_{j=1}^{m_{t}}\mathbb{I}\{i \in \overline{V}_{t,j}\}\sum_{j=1}^{m_t}\mathbb{I}\{i \in \overline{V}_{t,j}\}T_{k,j}\norm{\boldsymbol{x}_{a_t}}_{\boldsymbol{\overline{M}}_{\overline{V}_{t, j}}^{-1}}^{2}}\\
        &\le \epsilon_{*}\sqrt{(T_{k}-T_{k-1})}\sum_{t=T_{k-1}}^{T_{k}}\sqrt{\sum_{j=1}^{m_t}\mathbb{I}\{i \in \overline{V}_{t,j}\}\norm{\boldsymbol{x}_{a_t}}_{\boldsymbol{\overline{M}}_{\overline{V}_{t, j}}^{-1}}^{2}}\\
        &\le \epsilon_{*}(T_{k}-T_{k-1})\sqrt{2md\log(1+\frac{T}{\lambda d})}
    \end{aligned}
\end{equation}
Then for all phases this term can be bounded by $\epsilon_{*}T\sqrt{2md\log(1+\frac{T}{\lambda d})}$.\\
Thus the total regret can be bounded by:

\begin{align*}
     R_{t} &\le 2\sqrt{2mTd\log(T)\log(1+\frac{T}{\lambda d})}(\sqrt{2\log(T)+d\log(1+\frac{T}{\lambda d})}+2\sqrt{\lambda}) \\&+2\epsilon_{*}T\sqrt{2md\log(1+\frac{T}{\lambda d})} + 2\epsilon_{*}T + 12\epsilon_{*}\sqrt{\frac{1}{2\Tilde{\lambda}_{x}}}T + \frac{3\epsilon_{*}\sqrt{2d}}{\Tilde{\lambda}_{x}^{\frac{3}{2}}}T +2T_{1} + u\log(T)  + 5
\end{align*}
where $T_{1} = 16u\log(\frac{u}{\delta})+4u\max\{\frac{16}{\Tilde{\lambda}_x^2}\log(\frac{8d}{\Tilde{\lambda}_x^2\delta}),
        \frac{8d}{\Tilde{\lambda}_x(\frac{\gamma_1}{6}-\epsilon_*\sqrt{\frac{1}{2\Tilde{{\lambda}}_x}})^2}\log(\frac{u}{\delta})\}$

\section{More Experiments}\label{appendix: more experiments}
\begin{figure*}
\subfigure[Known Misspecification Level]{
\includegraphics[scale=0.35]{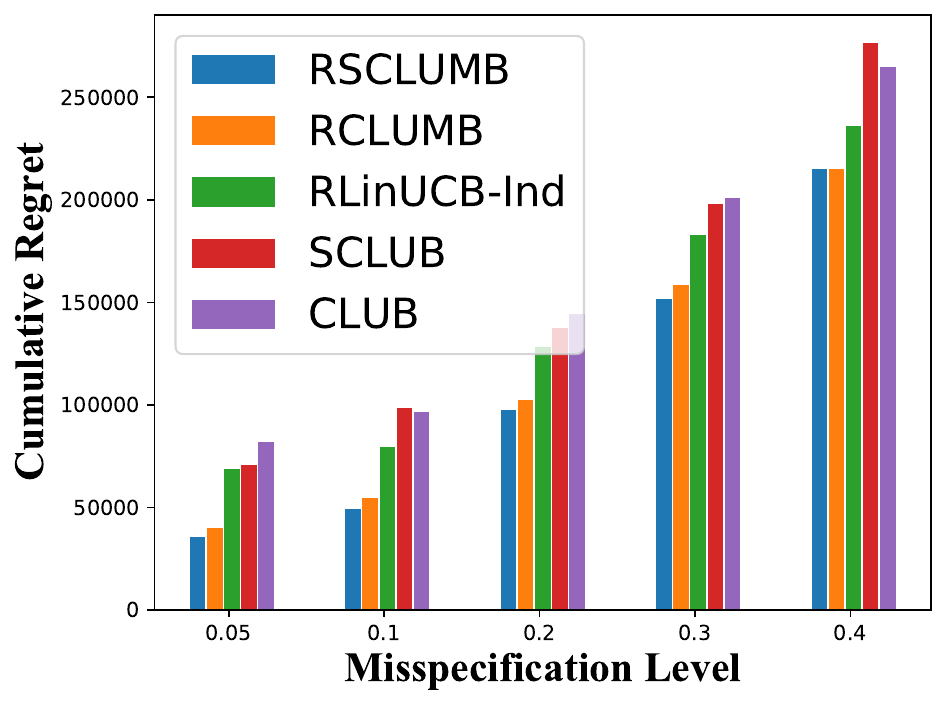}
}
    \subfigure[Unknown Misspecification Level]{
    \includegraphics[scale=0.35]{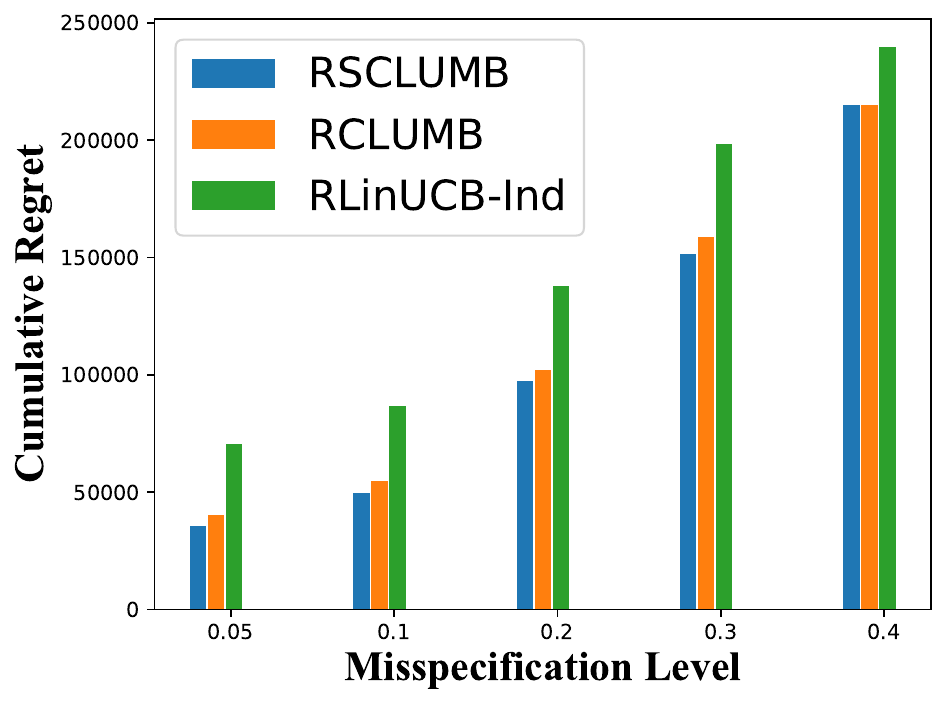}
    }
    \caption{The cumulative regret of the algorithms under different scales of misspecification level.}
    \label{fig:ablation}
\end{figure*}
For ablation study, we test our algorithms' performance under different scales of deviation. We test RCLUMB and RSCLUMB when $\epsilon^{*} = 0.05, 0.1, 0.2 , 0.3  \text{ and } 0.4$ in both misspecification level known and unknown cases. In the known case, we set $\epsilon^{*}$ according to the real misspecification level, and we compare our algorithms' performance to the baselines except LinUCB and CW-OFUL which perform worst; in the unknown case, we keep $\epsilon^{*}=0.2$, and we compare our algorithms to RLinUCB-Ind as only it has the pre-spicified parameter $\epsilon^{*}$ among the baselines. The results are shown in Fig.\ref{fig:ablation}. We plot each algorithm's final cumulative regret under different misspecification levels. All the algorithms' performance get worse when the deviation gets larger, and our two algorithms always perform better than the baselines. Besides, the regrets in the unknown case are only slightly larger than the known case. These results can match our theoretical results and again show our algorithms' effectiveness, as well as verify that our algorithm can handle the unknown misspecification level.
\end{document}